\colorlet{linkequation}{blue}
\def\shownotes{0}  
\newcommand{\authnote}[2]{{\scriptsize $\ll$\textsf{#1 notes: #2}$\gg$}}
\newcommand{\authnote}[2]{}
\def\bartau{{\bar \tau}}
\def\barlambda{{\bar \lambda}}
\def\barb{{\bar b}}
\def\bJ{{\boldsymbol J}}
\def\ball{{\mathsf B}}
\def\op{{\rm op}}
\def\oG{{\overline G}}
\def\de{{\rm d}}
\def\R{{\mathbb R}}
\newcommand{\coverage}{{\rm Coverage}}
\title{Understanding the Under-Coverage Bias \\
  in Uncertainty Estimation}
\author{%
  Yu Bai\thanks{Salesforce Research. E-mail:~\texttt{yu.bai@salesforce.com}}
  \and
  Song Mei\thanks{University of California, Berkeley. E-mail:~\texttt{songmei@berkeley.edu}}
  \and
  Huan Wang\thanks{Salesforce Research. E-mail:~\texttt{\{huan.wang, cxiong\}@salesforce.com}}
  \and
  Caiming Xiong\footnotemark[3]
}
\date{\today}
\begin{document}

\maketitle

\begin{abstract}
 
Estimating the data uncertainty in regression tasks is often done by learning a quantile function or a prediction interval of the true label conditioned on the input. It is frequently observed that quantile regression---a vanilla algorithm for learning quantiles with asymptotic guarantees---tends to \emph{under-cover} than the desired coverage level in reality. While various fixes have been proposed, a more fundamental understanding of why this under-coverage bias happens in the first place remains elusive.
  
In this paper, we present a rigorous theoretical study on the coverage of uncertainty estimation algorithms in learning quantiles. We prove that quantile regression suffers from an inherent under-coverage bias, in a vanilla setting where we learn a realizable linear quantile function and there is more data than parameters. More quantitatively, for $\alpha>0.5$ and small $d/n$, the $\alpha$-quantile learned by quantile regression roughly achieves coverage $\alpha - (\alpha-1/2)\cdot d/n$ regardless of the noise distribution, where $d$ is the input dimension and $n$ is the number of training data. Our theory reveals that this under-coverage bias stems from a certain high-dimensional parameter estimation error that is not implied by existing theories on quantile regression. Experiments on simulated and real data verify our theory and further illustrate the effect of various factors such as sample size and model capacity on the under-coverage bias in more practical setups.

\end{abstract}


  
\section{Introduction}
This paper is concerned with the problem of uncertainty estimation in regression problems. Uncertainty estimation is an increasingly important task in modern machine learning applications---Models should not only make high-accuracy predictions, but also have a sense of how much the true label may deviate from the prediction. This capability is crucial for deploying machine learning in the real world, in particular in risk-sensitive domains such as medical AI~\citep{begoli2019need,jiang2012calibrating}, self-driving cars~\citep{michelmore2018evaluating}, and so on. A common approach for uncertainty estimation in regression is to learn a \emph{quantile function} or a \emph{prediction interval} of the true label conditioned on the input, which provides useful distributional information about the label. Such learned quantiles are typically evaluated by their \emph{coverage}, i.e., probability that it covers the true label on a new test example. For example, a learned $90\%$ upper quantile function should be an actual upper bound of the true label at least $90\%$ of the time.


Algorithms for learning quantiles date back to the classical quantile regression~\citep{koenker2001quantile}, which estimates the quantile function by solving an empirical risk minimization problem with a suitable loss function that depends on the desired quantile level $\alpha$. Quantile regression is conceptually simple, and is theoretically shown to achieve asymptotically correct coverage as the sample size goes to infinity~\citep{koenker1978regression} or approximately correct coverage in finite samples under specific modeling assumptions~\citep{meinshausen2006quantile,takeuchi2006nonparametric,steinwart2011estimating}. However, it is observed that quantile regression often \emph{under-covers} than the desired coverage level in practice~\citep{romano2019conformalized}. Various alternative approaches for constructing quantiles and confidence intervals are proposed in more recent work, for example by aggregating multiple predictions using Bayesian neural networks or ensembles~\citep{gal2016dropout,lakshminarayanan2016simple}, or by building on the conformal prediction technique to construct prediction intervals with finite-sample coverage guarantees~\citep{vovk2005algorithmic,vovk2012conditional,lei2018distribution,romano2019conformalized}. However, despite these advances, a more fundamental understanding on why vanilla quantile regression exhibits this under-coverage bias is still lacking. 


This paper revisits quantile regression and presents a first precise theoretical study on its coverage, in a new regime where the number of samples $n$ is proportional to the dimension $d$, and the ratio $d/n$ is small (so that the problem is under-parametrized). Our main result shows that quantile regression exhibits an inherent under-cover bias under this regime, even in the well-specified setting of learning a linear quantile function when the true data distribution follows a Gaussian linear model. To the best of our knowledge, this is the first rigorous theoretical justification of the under-coverage bias. Our main contributions are summarized as follows.
\begin{itemize}[leftmargin=1.5pc]
\item \emph{We prove that linear quantile regression exhibits an inherent under-coverage bias} in the well-specified setting where the data is generated from a Gaussian linear model, and the number of samples $n$ is proportional to the feature dimension $d$ with a small $d/n$ (Section~\ref{section:quantile}). More quantitatively, quantile regression at nominal level $\alpha\in(0.5, 1)$ roughly achieves coverage $\alpha-(\alpha-1/2)d/n$ regardless of the noise distribution. To the best of our knowledge, this is the first rigorous characterization of the under-coverage bias in quantile regression.

\item Towards understanding the source of this under-coverage bias, we disentangle the effect of estimating the bias and estimating the linear coefficient on the coverage of the learned linear quantile (Section~\ref{section:extension}). We show that the estimation error in the bias can have either an under-coverage or over-coverage effect, depending on the noise distribution. In contrast, the estimation error in the linear coefficient always drives the learned quantile to under-cover, and we show this effect is present even on broader classes of data distributions beyond the Gaussian linear model.
  
\item We perform experiments on simulated and real data to test our theory (Section~\ref{section:experiments}). Our simulations show that the coverage of quantile regression in Gaussian linear models agrees well with our precise theoretical formula as well as the $\alpha-(\alpha-1/2)d/n$ approximation. On real data, we find quantile regression using high-capacity models (such as neural networks) exhibits severe under-coverage biases, while linear quantile regression can also have a mild but non-negligible amount of under-coverage, even after we remove the potential effect of model misspecification.
  
\item On the technical end, our analysis builds on recent understandings of empirical risk minimization problems in the high-dimensional proportional limit with a small $d/n$, and develops new techniques such as a novel concentration argument to deal with an additional learnable variable in learning linear models with biases, which we believe could be of further interest (Section~\ref{section:proof-sketch}).
\end{itemize}

\subsection{Related work}



\paragraph{Algorithms for uncertainty estimation in regression}




The earliest methods for uncertainty estimation in regression adopted subsampling methods (bootstrap) or leave-one-out methods (Jackknife) for assessing or calibrating prediction uncertainty~\citep{quenouille1949approximate, tukey1958bias, stone1974cross, geisser1975predictive}. More recently, a growing line of work builds on the idea of conformal prediction~\citep{shafer2008tutorial} to design uncertainty estimation algorithms for regression. These algorithms provide confidence bounds or prediction intervals by post-processing any predictor, and can achieve distribution-free finite-sample marginal coverage guarantees utilizing exchangeability of the data~\citep{papadopoulos2008inductive, vovk2012conditional, lei2018distribution, romano2019conformalized, kivaranovic2020adaptive,vovk2015cross, vovk2018cross, vovk2005algorithmic, barber2021predictive}. Further modifications of the conformal prediction technique can yield stronger guarantees such as group coverage~\citep{barber2019limits} or coverage under distribution shift~\citep{barber2019conformal} under additional assumptions. Our under-coverage results advocate the necessity of such post-processing techniques, and are complementary in the sense that we provide understandings on the more vanilla quantile regression algorithm. Quantiles and prediction intervals can also be obtained by aggregating multiple predictors, such as using Bayesian neural networks~\citep{mackay1992bayesian, gal2016dropout, kendall2017uncertainties,malinin2018predictive,maddox2019simple} or ensembles~\citep{lakshminarayanan2016simple, ovadia2019can, huang2017snapshot,malinin2019ensemble}. These methods offer an alternative approach for uncertainty estimation, but do not typically come with coverage guarantees.

\paragraph{Theoretical analysis of quantile regression}
Linear quantile regression with the pinball loss dates back to the late 1970s~\citep{koenker1978regression}. The same work proved the asymptotic normality of the regression coefficients in the $n \to \infty$, fixed $d$ limit.
\citet{takeuchi2006nonparametric} studied non-parametric quantile regression using kernel methods, and provided generalization bounds (with the pinball loss) based on the Rademacher complexity. \citet{meinshausen2006quantile} studied non-parametric quantile regression using random forest and showed its consistency under proper assumptions. \citet{christmann2007svms, steinwart2011estimating} established a ``self-calibration'' inequality for the quantile loss, which, when combined with standard generalization bounds, can be translated to an estimation error bound for quantile regression. These works all focus on bounding the parameter or function estimation error, which can be translated to bounds on the coverage bias, but does not tell the sign of this coverage bias as we do in this paper. We also remark that conformalization can be used in conjunction with quantile regression to correct is coverage bias~\citep{romano2019conformalized}.



\paragraph{Uncertainty quantification for classification}
For classification problems, two main types of uncertainty quantification methods have been considered: outputting discrete prediction sets with guarantees of covering the true (discrete) label~\citep{wilks1941determination, wilks1942statistical,lei2014classification,angelopoulos2020uncertainty,bates2021distribution,cauchois2021knowing,cauchois2020robust}, or calibrating the predicted probabilities~\citep{platt1999probabilistic, zadrozny2001obtaining, zadrozny2002transforming, lakshminarayanan2016simple, guo2017calibration}. The connection between prediction sets and calibration was discussed in~\citep{gupta2020distribution}. The sample complexity of calibration has been studied in a number of theoretical works~\citep{kumar2019verified, gupta2020distribution,shabat2020sample,jung2020moment,liu2019implicit,bai2021don}. Our work is inspired by the recent work of~\citet{bai2021don}, which showed that logistic regression is over-confident even if the model is correctly specified and the sample size is larger than the dimension.

\paragraph{High-dimensional behaviors of empirical risk minimization}
There is a rapidly growing literature on limiting characterizations of convex optimization-based estimators in the $n \propto d$ regime \citep{donoho2009message, bayati2011dynamics, el2013robust, karoui2013asymptotic, stojnic2013framework, thrampoulidis2015regularized, donoho2016high, thrampoulidis2018precise, mai2019large, sur2019modern, candes2020phase}. 
Our analysis builds on results for unregularized M-estimator derived in \citep{thrampoulidis2018precise} and generalizes theirs in certain aspects (see also~\citep{el2013robust, donoho2016high, karoui2013asymptotic}).

\section{Preliminaries}
\label{section:prelim}

In this paper we focus on the problem of learning quantiles. Suppose we observe a training dataset $\set{(\xb_i, y_i)}_{i=1}^n$ drawn i.i.d. from some joint distribution $\P$ on $\R^d\times \R$, where $\xb_i\in\R^d$ is the input features and $y\in\R$ is the real-valued response (label).
Let $F(t|\xb)\defeq \P(Y\le t | \Xb=\xb)$ denote the conditional CDF of $Y|\Xb$. Our goal is to learn the $\alpha$-(conditional) quantile of $Y|\Xb$:
\begin{align*}
  q_\alpha^\star(\xb) \defeq \inf\set{t\in\R: F(t|\xb) \ge \alpha}.
\end{align*}
For example, $q^\star_{0.95}(\xb)$ is the ground truth $95\%$ quantile of the true conditional distribution $Y|\Xb$, and can be seen as the ``ideal'' $95\%$ upper confidence bound for the label $y$ given the features $\xb$. Throughout this paper we work with upper quantiles, that is, $\alpha\in(0.5, 1)$ (some typical choices are $\alpha\in\set{0.8, 0.9, 0.95}$); by symmetry our results hold for learning lower quantiles as well.

\paragraph{Coverage}
For any learned quantile function $\what{f}:\R^d\to \R$, the marginal coverage (henceforth ``coverage'') of $\what{f}$ is the probability of $y\le \what{f}(\xb)$ on a new test example $(\xb,y)$:
\begin{align}\label{eqn:coverage_def}
  \coverage(\what{f}) \defeq \P_{(\xb, y)}\paren{ y \le \what{f}(\xb) } = \E_{\xb} \brac{ \P\paren{y \le \what{f}(\xb) | \xb}}.
\end{align}
For learning the $\alpha$-quantile ($\alpha> 0.5$), we usually expect $\coverage(\what{f}) \approx \alpha$, i.e. $\what{f}(\xb)$ covers the label $y$ on approximately $\alpha$ proportion of the data, under the ground truth data distribution. 


We say that $\what{f}$ has \emph{under-coverage} if $\coverage(\what{f})<\alpha$ and \emph{over-coverage} if $\coverage(\what{f})>\alpha$. Note that these two notions are not symmetric: Over-coverage means that the learned upper quantile $\what{f}(\xb)$ is overly conservative (higher than enough), and is typically tolerable; In contrast, under-coverage means that $\what{f}(\xb)$ fails to cover $y$ with $\alpha$ probability, and is typically considered as a failure. We remark that while there exist more fine-grained notions of coverage such as conditional coverage~\citep{barber2019limits}, the (marginal) coverage is still a basic requirement for any quantile learning algorithm. 



\paragraph{Quantile regression}
We consider quantile regression, a standard method for learning quantiles from data~\citep{koenker2001quantile}. Quantile regression estimates the true quantile function $q_\alpha(\cdot)$ via the \emph{pinball loss}~\citep{koenker1978regression,steinwart2011estimating}
\begin{align}
  \label{equation:pinball-loss}
  \ell^\alpha(t) = -(1-\alpha)t\indic{t \le 0} + \alpha t\indic{t > 0}.
\end{align}
Note that in the special case of $\alpha=0.5$, we have $\ell^{0.5}(t)=|t|/2$, and thus the pinball loss strictly generalizes the absolute loss (for learning medians) to learning any quantile. Given the training dataset and any function class $\set{f_\theta: \theta\in\Theta}$ (e.g. linear models or neural networks), quantile regression solves the (unregularized) empirical risk minimization (ERM) problem 
\begin{align}\label{eqn:quantile_regression_f_theta}
  \what{\theta} = \argmin_{\theta\in\Theta} \what{R}_n(\theta) \defeq \frac{1}{n}\sum_{i=1}^n \ell^\alpha\paren{ y_i - f_\theta(\xb_i)}.
\end{align}
(We take $\what{\theta}$ as any minimizer of $\what{R}_n$ when the minimizer is non-unique.) Let $R(\theta)\defeq \E[\what{R}_n(\theta)]$ denote the corresponding population risk. It is known that the population risk over all (measurable) functions is minimized at the true quantile $q^\star_\alpha = \argmin_{f} R(f)$ under minimal regularity conditions (for completeness, we provide a proof in Appendix~\ref{appendix:qr-consistency}).

\section{Quantile regression exhibits under-coverage}
\label{section:quantile}
We analyze quantile regression in the vanilla setting where the input distribution is a standard Gaussian and $y$ follows a linear model of $\xb$:
\begin{align}
  \label{equation:linear-model}
  y = \wb_\star^\top\xb + z,~~~\textrm{where}~~~\xb\sim\normal(\bzero, \Ib_d),~~z\sim P_z.
\end{align}
Above, $\wb_\star\in\R^d$ is the ground truth coefficient vector, and the noise $z\sim P_z$ is independent of $\xb$. The Gaussian input assumption is required only for technical convenience in the high-dimensional limiting analysis, and we believe it is not strictly required for the same result to hold\footnote{Our results can be extended directly to any correlated Gaussian input $\xb\sim \normal(\bzero, \bSigma)$ by the transform $\wt{\xb}= \bSigma^{-1/2}\xb$ and $\wt{\wb}_\star=\bSigma^{1/2}\wb_\star$. We believe our results also hold for i.i.d. sub-Gaussian inputs by the universality principle (e.g.~\citep{bayati2015universality}).} (an extension to more general input distributions can also be found in Section~\ref{section:extension}). The noise distribution $P_z$ is required to satisfy the following smoothness assumption, but can otherwise be arbitrary:
\begin{assumption}[Smooth density]
  \label{ass:noise_density}
  The noise distribution $P_z$ has a smooth density $\phi_z \in C^\infty(\R)$ (with corresponding CDF $\Phi_z$), with bounded derivatives: $\sup_{t \in \R} \vert \phi_z^{(k)}(t) \vert < \infty$ for any $k\ge 0$. We further assume that $\phi_z(z_\alpha) > 0$, where $z_\alpha\defeq \inf\set{t\in\R: \Phi_z(t) \ge \alpha}$ is the $\alpha$-quantile of $P_z$.
\end{assumption}

Under the above model, it is straightforward to see that the true $\alpha$-conditional quantile of $y|\xb$ is also a linear model (with bias):
\begin{align}
  \label{equation:linear-true-quantile}
  q^\star_\alpha(\xb) = \wb_\star^\top \xb + z_\alpha. 
\end{align}
Given the training data $\set{(\xb_i, y_i)}_{i=1}^n$, we learn a linear quantile function $\hat{f}(\xb) = \hat{\wb}^\top\xb +  \hat{b}$ via quantile regression: 
\begin{align}
  \label{equation:quantile-reg-linear}
  (\hat{\wb}, \hat{b}) = \argmin_{\wb, b} \what{R}_n(\wb, b) \defeq \frac{1}{n}\sum_{i=1}^n \ell^\alpha(y_i - (\wb^\top \xb_i + b)),
\end{align}
where $\ell^\alpha$ is the pinball loss in~\eqref{equation:pinball-loss}. As our linear function class realizes the true quantile function~\eqref{equation:linear-true-quantile}, the population risk is minimized at the true quantile: $\argmin_{\wb, b} R(\wb, b) = (\wb_\star, z_\alpha)$.


We are now ready to state our main result, which shows that quantile regression exhibits an inherent under-coverage bias even in this vanilla realizable setting. 
\begin{theorem}[Quantile regression exhibits under-coverage bias]
  \label{theorem:main}
  Suppose the data is generated from the linear model~\eqref{equation:linear-model} and the noise satisfies Assumption~\ref{ass:noise_density}. Let $\hat{f}(\xb)=\hat{\wb}^\top\xb+\hat{b}$ be the output of quantile regression~\eqref{equation:quantile-reg-linear} at level $\alpha\in(0.5, 1)$. Then, in the limit of $n,d\to\infty$ and $d/n\to\kappa$ where $\kappa\in(0, \kappa_0]$ for some small $\kappa_0>0$, for the coverage (\ref{eqn:coverage_def}), we have ($\gotop$ denotes convergence in probability)
  \begin{align*}
    \coverage(\hat{f}) \gotop \alpha - C_{\alpha, \kappa}~~~\textrm{for some}~C_{\alpha, \kappa}>0.
  \end{align*}
  That is, the limiting coverage of the learned quantile function is less than $\alpha$. Further, for small enough $\kappa$ we have the local linear expansion
  \begin{align}
    \label{equation:alpha-minus-onehalf}
    C_{\alpha, \kappa} = (\alpha-1/2) \kappa + o(\kappa).
  \end{align}
\end{theorem}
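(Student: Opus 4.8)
The plan is to reduce the coverage of $\hat f$ to a deterministic function of the limiting values of two scalars --- the coefficient estimation error $\|\hat\wb-\wb_\star\|_2$ and the learned intercept $\hat b$ --- to pin those limits down with the Convex Gaussian Min-Max Theorem (CGMT), and then to Taylor-expand the resulting formula in the small ratio $\kappa$. For the reduction, first note that the pinball loss in~\eqref{equation:quantile-reg-linear} depends on $(\wb,b)$ only through the residuals $z_i-(\wb-\wb_\star)^\top\xb_i-b$, so we may take $\wb_\star=\bzero$ (hence $y_i=z_i$) without loss of generality. On a fresh test point $\xb\sim\normal(\bzero,\Ib_d)$ drawn independently of the training data, $\hat\wb^\top\xb$ is, conditionally on the training data, distributed as $\normal(0,\|\hat\wb\|_2^2)$ and independent of $z$, so by~\eqref{eqn:coverage_def},
\[
  \coverage(\hat f)=\E_{g\sim\normal(0,1)}\!\left[\Phi_z\!\big(\|\hat\wb\|_2\, g+\hat b\big)\right].
\]
It therefore suffices (by continuity of this functional) to show $\|\hat\wb\|_2\gotop\tau_\star$ and $\hat b\gotop\beta_\star$ for deterministic $\tau_\star,\beta_\star$ depending on $(\alpha,\kappa,P_z)$, and then to analyze $\mathcal C(\tau_\star,\beta_\star):=\E_g[\Phi_z(\tau_\star g+\beta_\star)]$.

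\emph{Step 2: CGMT with a learnable intercept.} Using the identity $\ell^\alpha(s)=\max_{u\in[\alpha-1,\alpha]}us$, the ERM~\eqref{equation:quantile-reg-linear} becomes $\min_{\wb,b}\max_{\ub\in[\alpha-1,\alpha]^n}\ub^\top(\zb-\Xb\wb-b\mathbf{1})$, a bilinear min-max in the Gaussian design $\Xb$ (rows $\xb_i^\top$). After an a priori boundedness argument localizing $(\wb,b)$, I would apply the CGMT, which replaces $\ub^\top\Xb\wb$ by $\|\wb\|_2\,\gb^\top\ub+\|\ub\|_2\,\hb^\top\wb$ with $\gb\sim\normal(\bzero,\Ib_n)$ and $\hb\sim\normal(\bzero,\Ib_d)$, and then scalarize the auxiliary optimization to a low-dimensional problem in $\|\wb\|_2$, $b$, $\|\ub\|_2$ and a Lagrange multiplier. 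The new ingredient relative to M-estimation without an intercept is precisely the scalar $b$: the coupling term $b\,\ub^\top\mathbf{1}$ makes $\hat b$ a genuinely extra optimization variable, and the crux here is a dedicated concentration argument showing that $\hat b$ itself (not merely $\hat\wb$) converges to the minimizer $\beta_\star$ of the deterministic scalar problem. Its stationarity conditions produce a system in $(\tau_\star,\beta_\star,\lambda_\star)$ involving only $\E_{Z\sim P_z,\,W\sim\normal(0,1)}$ of explicit functions of the pinball proximal map $\mathrm{prox}_{\lambda\ell^\alpha}$ (whose derivative is the indicator $\indic{\,\cdot\,\in(-\lambda(1-\alpha),\lambda\alpha)}$); Assumption~\ref{ass:noise_density} ensures this system is well posed, has a solution with $\tau_\star>0$, and varies smoothly with $\kappa$.

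\emph{Step 3: small-$\kappa$ expansion.} As $\kappa\to0$ the system collapses to the classical fixed-$d$ limit, $\tau_\star\to0$ and $\beta_\star\to z_\alpha$. Implicitly differentiating at $\kappa=0$ would give $\tau_\star^2=\frac{\alpha(1-\alpha)}{\phi_z(z_\alpha)^2}\,\kappa+o(\kappa)$ --- the ``high-dimensional'' estimation error of the $d$ coefficients, each contributing $O(1/n)$, which the classical single-parameter analysis does not see --- together with $\beta_\star=z_\alpha+b_1\kappa+o(\kappa)$ for an explicit $b_1$ extracted from the same differentiation. Taylor-expanding $\mathcal C(\tau_\star,\beta_\star)=\E_g[\Phi_z(\tau_\star g+\beta_\star)]$ to second order around $z_\alpha$ (the cubic remainder being $O(\kappa^{3/2})=o(\kappa)$) then yields
\[
  \coverage(\hat f)\gotop \alpha+\Big[\phi_z(z_\alpha)\,b_1+\frac{\alpha(1-\alpha)\,\phi_z'(z_\alpha)}{2\,\phi_z(z_\alpha)^2}\Big]\kappa+o(\kappa),
\]
and the heart of the argument is that this bracket equals exactly $-(\alpha-1/2)$: the $\phi_z'(z_\alpha)$-contribution from variance inflation is canceled by the corresponding term in $b_1$, so the rate is independent of $P_z$. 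Setting $C_{\alpha,\kappa}:=\alpha-\lim\coverage(\hat f)$, this gives both $C_{\alpha,\kappa}=(\alpha-1/2)\kappa+o(\kappa)$ and, since $\alpha>1/2$, $C_{\alpha,\kappa}>0$ for all sufficiently small $\kappa$.

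\emph{Main obstacle.} The hard part is Steps 2--3: making the CGMT characterization rigorous in the presence of the extra learnable intercept (proving $\hat b\gotop\beta_\star$ is exactly the ``additional learnable variable'' difficulty the introduction highlights), and then expanding the scalar fixed-point system to first order in $\kappa$ precisely enough to identify $b_1$ and witness the cancellation that produces the clean, noise-independent rate $(\alpha-1/2)$. As a guiding heuristic for why the gap is $(\alpha-1/2)\kappa$: optimality in $b$ forces the fraction of nonpositive training residuals to be $\approx\alpha+(1-\alpha)\kappa$, but $\approx\kappa n$ of them are exactly zero (the interpolated ``fit points''), and on a fresh example such points are equally likely to land on either side, so population coverage drops by $\approx\kappa/2$, giving $\alpha+(1-\alpha)\kappa-\kappa/2=\alpha-(\alpha-1/2)\kappa$.
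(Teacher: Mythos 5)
Your proposal follows essentially the same route as the paper: reduce $\coverage(\hat f)$ to $\E_g[\Phi_z(\|\hat\wb-\wb_\star\|_2 g+\hat b)]$, show via CGMT (extending~\citep{thrampoulidis2018precise} to handle the additional learnable intercept) that $\|\hat\wb-\wb_\star\|_2$ and $\hat b$ concentrate around the solution of a three-variable scalar fixed-point system, and then expand that system to first order in $\kappa$ to witness the cancellation $\phi_z(z_\alpha)\barb_0+\tfrac12\phi_z'(z_\alpha)\bartau_0^2=-(\alpha-1/2)$. The structure, the identification of the intercept as the new technical obstacle, the limiting values $\bartau_0^2=\alpha(1-\alpha)/\phi_z(z_\alpha)^2$ and $b_\star\to z_\alpha$, and the final second-order Taylor argument all match the paper's proof (Theorem~\ref{thm:ERM_limit} plus Lemmas~\ref{lem:local-linear-expansion},~\ref{lem:coverage_asymptotic},~\ref{lem:coverage_expansion}).
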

Theorem~\ref{theorem:main} builds on the precise characterization of ERM problems in the high-dimensional proportional limit~\citep{thrampoulidis2018precise}, along with new techniques over existing work for dealing with the unique challenges in quantile regression (such as analyzing the additional learnable bias $b$). An overview of the main technical steps is provided in Section~\ref{section:proof-sketch}, and the full proof is deferred to Appendix~\ref{sec:proof_main}.

\paragraph{Implications}
Theorem~\ref{theorem:main} can be illustrated by the following numeric example. Suppose we perform quantile regression at $\alpha=0.9$, where the data follows the linear model~\eqref{equation:linear-model}, and our $\kappa=d/n=0.1$ (so that the sample size is 10x number of parameters). Then Theorem~\ref{theorem:main} shows that, even in this realizable, under-parametrized setting, the coverage of the learned quantile $\hat{f}$ is going to be roughly $0.9 - C_{\alpha, \kappa}$ when $n,d$ are large, and further $C_{\alpha, \kappa}\approx (\alpha-1/2)\kappa=0.04$. Thus the actual coverage is around $0.9-0.04=0.86$, and such a $4\%$ under-coverage bias can be rather non-negligible in reality. 

We remark that a symmetric conclusion of Theorem~\ref{theorem:main} also holds for lower quantiles, and we further expect similar results also hold for learning prediction intervals, where the coverage is defined as the two-sided coverage of the prediction interval formed by the learned \{lower quantile, upper quantile\}. To the best of our knowledge, this offers a first precise theoretical understanding of why practically trained quantiles or prediction intervals often under-cover than the desired coverage level~\citep{romano2019conformalized}.

\vspace{-0.8em}
\paragraph{Comparison against existing theories}
An important feature of the under-coverage bias shown in Theorem~\ref{theorem:main} is that it only shows up in the $n,d$ proportional regime, and is not implied by existing theories on quantile regression. Classical asymptotic theory only shows asymptotic normality $\sqrt{n}([\hat{\wb}, \hat{b}] - [\wb_\star, z_\alpha])\to\normal(\bzero, \Vb)$ in the $n\to\infty$, fixed $d$ limit~\citep{koenker1978regression,van2000asymptotic}. Under this limit, $\coverage(\hat{f})$ is consistent at $\alpha$ with $O(1/\sqrt{n})$ deviation. \citet{christmann2007svms,steinwart2011estimating}~consider the finite $n,d$ setting and establish \emph{self-calibration inequalities} (similar to strong convexity) that bounds the quantile estimation error by the square root excess loss $\sqrt{R(\hat{f})-R(q^\star_\alpha)}$. Combined with standard generalization theories (e.g. via Rademacher complexities) and Lipschitzness, this can be turned into a bound on $|\coverage(\hat{f}) - \alpha|$, but does not tell the sign (positive or negative) of the coverage bias.

\vspace{-0.8em}
\paragraph{Large $\kappa$; extension to over-parametrized learning}
While Theorem~\ref{theorem:main} requires a small $\kappa=d/n$, the approximation formula~\eqref{equation:alpha-minus-onehalf} suggests that the over-coverage should get more severe as $\kappa$---the measure of over-parametrization in this problem---gets larger. We confirm this trend experimentally in our simulations in Section~\ref{section:simulations}.

As an extension to Theorem~\ref{theorem:main}, we also show theoretically that the under-coverage bias indeed becomes even more severe in over-parametrized learning, under the same linear model~\eqref{equation:linear-model}: When $d/n>\wt{O}(1)$, and the noise $P_z$ is sub-Gaussian and symmetrically distributed about $0$, the convergence point of the gradient descent path on the quantile regression risk $\hat{R}_n$ is the minimum-norm interpolator of the data, which has coverage $0.5\pm \wt{O}(1/\sqrt{d})$ with high probability (see Appendix~\ref{appendix:overparam} for the formal statement and the proof). Notably, this $0.5$ coverage does not depend on $\alpha$ and exhibits a severe under-coverage.









\section{Understanding the source of the under-coverage bias}
\label{section:extension}

In this section, we take steps towards a deeper understanding of how the under-coverage bias shown in Theorem~\ref{theorem:main} happens. Recall that the quantile regression returns $\hat{f}(\xb)=\hat{\wb}^\top\xb + \hat{b}$ where $(\hat{\wb}, \hat{b})$ is a solution to the ERM problem~\eqref{equation:quantile-reg-linear} and estimates the true parameters $(\wb_\star, z_\alpha)$. Our main approach in this section is to \emph{disentangle the effect of the two sources}---the estimation error in $\hat{b}$ and the estimation error in $\wb$---on the coverage of $\hat{f}$.

We show that the estimation error in $\hat{b}$ \emph{can have either an under-coverage or an over-coverage effect}, depending on the noise distribution (Section~\ref{section:effect-b}). In contrast, the estimation error in $\hat{\wb}$ \emph{always has an under-coverage effect}; this holds not only for the linear model assumed in Theorem~\ref{theorem:main}, but also on more general data distributions (Section~\ref{section:effect-w}). In the setting of Theorem~\ref{theorem:main}, this under-coverage effect of $\hat{\wb}$ is always strong enough to dominate the effect of $\hat{b}$, leading to the overall under-coverage.




\subsection{Effect of estimation error in $\hat{b}$}
\label{section:effect-b}
To study the effect of $\hat{b}$, we use the quantity $\hat{b}-z_\alpha$ as a measure for its effect on the coverage---Recall that the true quantile is $q^\star_\alpha(\xb)=\wb_\star^\top + z_\alpha$, thus having $\hat{b}<z_\alpha$ means that $\hat{b}$ contributes to under-coverage, whereas $\hat{b}>z_\alpha$ means $\hat{b}$ contributes to over-coverage. (This can be seen more straightforwardly in the easier case where we know $\wb_\star$ and only output $\hat{b}$ to estimate $z_\alpha$.)


The following corollary shows that, under the same settings of Theorem~\ref{theorem:main}, the error $\hat{b}-z_\alpha$ can be understood precisely. The proof can be found in Appendix~\ref{appendix:proof-b}.
\begin{corollary}[Effect of $\hat{b}$ on coverage depends on noise distribution]
  \label{corollary:b}
  Under the same settings as Theorem~\ref{theorem:main}, for any $\alpha\in(0.5, 1)$, as $n,d\to\infty$ with $d/n\to\kappa\in(0, \kappa_0]$, we have
  \begin{enumerate}[label=(\alph*), leftmargin=1.5pc]
  \item The learned bias $\hat{b}$ from quantile regression~\eqref{equation:quantile-reg-linear} converges to the following limit:
    \begin{align*}
      \hat{b} -z_\alpha \gotop C_{\alpha, \kappa}^b = \barb_0\kappa + o(\kappa),
    \end{align*}
    where $\barb_0$ has a closed-form expression:
    \begin{align}
      \label{equation:barb0-maintext}
      \barb_0  \defeq \frac{-\alpha(1-\alpha)\phi_z'(z_\alpha) - (2\alpha-1)\phi_z^2(z_\alpha)}{2\phi_z^3(z_\alpha)}.
    \end{align}
  \item For any $\alpha\in(0.5, 1)$, when $P_z$ is the Gaussian distribution (with arbitrary scale), we have $\barb_0<0$ in which case $C_{\alpha, \kappa}^b<0$ for small enough $\kappa$. Conversely, for any $\alpha\in(0.5, 1)$, there exists some noise distribution $P_z$ for which $\barb_0>0$, in which case $C_{\alpha, \kappa}^b>0$ for small enough $\kappa$.
  \end{enumerate}
\end{corollary}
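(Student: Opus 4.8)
}
The plan is to read off the limit of $\hat{b}-z_\alpha$ from the high-dimensional characterization that already proves Theorem~\ref{theorem:main}, pin down its leading-order coefficient in $\kappa$, and then analyze the sign of that coefficient $\barb_0$---first for Gaussian noise and then for a worst-case $P_z$.

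For part (a), convergence $\hat{b}-z_\alpha\gotop C^b_{\alpha,\kappa}$ to a deterministic limit is part of the proof of Theorem~\ref{theorem:main}: $C^b_{\alpha,\kappa}$, together with $\Gamma^2_{\alpha,\kappa}$ (the limit in probability of $\|\hat{\wb}-\wb_\star\|^2$), is one of the scalar unknowns of the limiting fixed-point system obtained via the Gaussian min-max scalarization of~\citet{thrampoulidis2018precise}. For the closed form of the leading coefficient I would avoid re-deriving the system and instead combine three facts. First, recall from the proof of Theorem~\ref{theorem:main} that, since $\hat{f}$ is linear and the test point is independent of the training data, $\coverage(\hat{f})\gotop \P\big(z-\Gamma_{\alpha,\kappa}\xi\le z_\alpha+C^b_{\alpha,\kappa}\big)$ with $\xi\sim\normal(0,1)$ independent of $z$; a second-order Taylor expansion of the right-hand side in $(\Gamma^2_{\alpha,\kappa},C^b_{\alpha,\kappa})$ (legitimate by the bounded-derivative part of Assumption~\ref{ass:noise_density}) gives
\begin{align*}
  \coverage(\hat{f}) \;=\; \alpha + \phi_z(z_\alpha)\,C^b_{\alpha,\kappa} + \frac{1}{2}\,\Gamma^2_{\alpha,\kappa}\,\phi_z'(z_\alpha) + o(\kappa).
\end{align*}
Second, Theorem~\ref{theorem:main} gives $\coverage(\hat{f})=\alpha-(\alpha-1/2)\kappa+o(\kappa)$. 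Third, the $\kappa\to0$ slope of $\Gamma^2_{\alpha,\kappa}$ is the classical asymptotic-variance constant, $\Gamma^2_{\alpha,\kappa}=\frac{\alpha(1-\alpha)}{\phi_z(z_\alpha)^2}\kappa+o(\kappa)$ (the $n/d\to\infty$ limit of the proportional-regime formula, matching~\citet{koenker1978regression}). Using only that $C^b_{\alpha,\kappa}\to0$ and $\Gamma^2_{\alpha,\kappa}\to0$ as $\kappa\to0$, matching the $\kappa$-coefficients in the two expressions for $\coverage(\hat{f})$ forces $C^b_{\alpha,\kappa}=\barb_0\kappa+o(\kappa)$ with $\phi_z(z_\alpha)\barb_0=-(\alpha-1/2)-\frac{\alpha(1-\alpha)\phi_z'(z_\alpha)}{2\phi_z(z_\alpha)^2}$, which rearranges to exactly~\eqref{equation:barb0-maintext}. (One may instead Taylor-expand the fixed-point system around $\kappa=0$ directly; that is the route that also yields~\eqref{equation:alpha-minus-onehalf} in the proof of Theorem~\ref{theorem:main}.)

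For part (b), in the Gaussian case I would substitute $\phi_z'(t)=-t\,\phi_z(t)$ into~\eqref{equation:barb0-maintext}: since $\phi_z(z_\alpha)>0$ and $z_\alpha>0$, the sign of $\barb_0$ equals the sign of $\alpha(1-\alpha)z_\alpha-(2\alpha-1)\phi_z(z_\alpha)$, which, writing $q\defeq z_\alpha\in(0,\infty)$ and $\alpha=\Phi(q)$ for the standard normal $\Phi,\phi$, equals $-h(q)$ with $h(q)\defeq(2\Phi(q)-1)\phi(q)-q\,\Phi(q)(1-\Phi(q))$. It then suffices to show $h(q)>0$ on $(0,\infty)$: one has $h(0)=0$ and $h(q)\to0$ as $q\to\infty$ (Mills-ratio asymptotics), and a short computation gives the clean identity $h'(q)=2\phi(q)^2-\Phi(q)(1-\Phi(q))$, which is positive at $0^+$, negative for large $q$, and (from the sign of a further derivative) changes sign exactly once; hence $h$ increases then decreases and stays strictly positive between its two zeros. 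This yields $\barb_0<0$ for every $\alpha\in(0.5,1)$, hence $C^b_{\alpha,\kappa}=\barb_0\kappa+o(\kappa)<0$ for small $\kappa$. For the converse, \eqref{equation:barb0-maintext} shows $\barb_0>0$ iff $-\phi_z'(z_\alpha)>\frac{(2\alpha-1)}{\alpha(1-\alpha)}\phi_z(z_\alpha)^2$, i.e.\ iff the density is steep enough at its own $\alpha$-quantile; I would exhibit a smooth density satisfying Assumption~\ref{ass:noise_density} for which this holds (e.g.\ a smoothed, sharply peaked bump placed just to the left of $z_\alpha$, or a narrow-component Gaussian mixture), which is possible because the requirement only constrains $\Phi_z,\phi_z,\phi_z'$ at the single point $z_\alpha$.

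The main obstacle is inherited from Theorem~\ref{theorem:main} rather than new to the corollary: establishing concentration of the extra learnable bias $\hat{b}$ and carrying the small-$\kappa$ expansion of the fixed-point system with honest $o(\kappa)$ error control (the ``additional learnable variable'' issue flagged in the introduction). Granting that, part (a) is essentially bookkeeping; the only genuinely new work is the one-variable Gaussian inequality $h>0$ in part (b)---which the identity $h'(q)=2\phi^2-\Phi(1-\Phi)$ reduces to an elementary monotonicity argument---together with the explicit density construction for the converse.
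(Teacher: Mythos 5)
Your proposal is correct, and on part (b) it takes a genuinely cleaner route than the paper. For part (a), the paper simply reads both $\barb_0$ and $\bartau_0^2$ off the local linear expansion of the limiting fixed-point system (Lemma~\ref{lem:local-linear-expansion}), which is also what underlies Theorem~\ref{theorem:main}. You instead reverse Lemma~\ref{lem:coverage_expansion}: grant the coverage slope $-(\alpha-1/2)$ from Theorem~\ref{theorem:main} and the slope $\bartau_0^2=\alpha(1-\alpha)/\phi_z(z_\alpha)^2$ of $\Gamma^2_{\alpha,\kappa}$, and back out $\barb_0$ from $\phi_z(z_\alpha)\barb_0+\tfrac12\phi_z'(z_\alpha)\bartau_0^2=-(\alpha-1/2)$. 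This is algebraically consistent and a fine way to \emph{check} the formula, but as a proof it partly begs the question: Theorem~\ref{theorem:main}'s slope is \emph{obtained} from $\barb_0$ and $\bartau_0^2$, and the $\bartau_0^2$ slope is one of the three outputs of the very implicit-function-theorem analysis (Lemma~\ref{lem:local-linear-expansion}) you hoped to bypass. Your appeal to the classical Koenker--Bassett asymptotic variance as the $\kappa\to 0$ limit is correct in spirit but is a separate fact that needs the continuity at $\kappa=0$ established by Lemma~\ref{lem:Jacobian}--\ref{lem:F_kappa_derivative_solution_kappa_0}; it does not come for free. So for part (a) you have not really saved any work over the paper's direct route. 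For part (b), by contrast, your function $h(q)=(2\Phi(q)-1)\phi(q)-q\,\Phi(q)(1-\Phi(q))$ with the derivative identity $h'(q)=2\phi(q)^2-\Phi(q)(1-\Phi(q))$ (which checks out) and the second-derivative sign argument (via $h''(q)=\phi(q)\big[(2\Phi(q)-1)-4q\phi(q)\big]$) gives a fully analytic one-bump argument: $h(0)=h(\infty)=0$, $h'>0$ near $0$, $h'<0$ for large $q$, and $h'$ crosses zero exactly once, so $h>0$ on $(0,\infty)$. This is an improvement over the paper's proof, which resorts to numerically verifying $F'(t)<-0.03$ on $[0,1]$ and $F(t)<-0.1$ on $[1,2]$ before using Mills-ratio bounds for $t\ge 2$. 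Your converse construction (locally steepen $\phi_z$ near $z_\alpha$ to make $-\phi_z'(z_\alpha)$ large while keeping $z_\alpha$ and $\phi_z(z_\alpha)$ essentially fixed) is the same device the paper uses.
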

Corollary~\ref{corollary:b} shows that the sign of $C_{\alpha,\kappa}^b$ in the limiting regime (and thus the effect of $\hat{b}$ on the coverage) depends on $\barb_0$, which in turn depends on the noise distribution $P_z$. For common noise distributions such as Gaussian we have $C_{\alpha, \kappa}^b<0$ at small $\kappa$, but there also exists $P_z$ such that $C_{\alpha,\kappa}^b>0$. Note that the second claim in part (b) follows directly from~\eqref{equation:barb0-maintext}: we can always design the density $\phi_z$ by varying $\phi_z(z_\alpha)$ and $\phi_z'(z_\alpha)$ so that $\barb_0>0$. Overall, this result shows that the under-coverage bias in Theorem~\ref{theorem:main} cannot be simply explained by the under-estimation error in $\hat{b}$.



\subsection{Effect of estimation error in $\hat{\wb}$; relaxed data distributions}
\label{section:effect-w}



We now show that the primary source of the under-coverage is the estimation error in $\hat{\wb}$, which happens not only on the linear data distribution assumed in Theorem~\ref{theorem:main}, but also on a broader class of data distributions. We consider the following relaxed data distribution assumption
\begin{align}
  \label{equation:relaxed-model}
  y = \mu_\star(\xb) + \sigma_\star(\xb) z,
\end{align}
where the noise $z\sim P_z$. We do not put structural assumptions on $(\mu_\star,\sigma_\star)$, except that we assume the true $\alpha$-quantile is still a linear function of $\xb$, that is, there exists $(\wb_\star, b_\star)$ for which
\begin{align}
  \label{equation:linear-true-quantile-relaxed}
  q_\alpha^\star(\xb) = \mu_\star(\xb) + \sigma_\star(\xb)z_\alpha = \wb_\star^\top \xb + b_\star.
\end{align}
Since here we are interested in the effect of estimating $\wb_\star$, for simplicity, we assume that we know $b_\star$ and only estimate $\wb_\star$ via some estimator $\hat{\wb}$. We now collect our assumptions and state the result. 
\begin{assumption}[Relaxed data distribution]
  \label{assumption:relaxed-data}
  The data is distributed as model~\eqref{equation:relaxed-model} with a linear $\alpha$-quantile function~\eqref{equation:linear-true-quantile-relaxed}. Further, the data distribution satisfies the following regularity conditions:
  \begin{enumerate}[label=(\alph*), leftmargin=1.5pc]
  \item The distribution of $\xb\in\R^d$ is symmetric about $\bzero$, has a lower bounded covariance $ \E[\xb\xb^\top]\succeq \underline{\gamma}\Ib_d$, and is $K$-sub-Gaussian, for constants $\underline{\gamma}, K>0$.
  \item The variance function $\sigma_\star(\cdot)$ is bounded and symmetric: For all $\xb\in\R^d$ we have $\underline{\sigma}\le \sigma_\star(\xb)\le \overline{\sigma}$ for some constants $\underline{\sigma}, \overline{\sigma}>0$, and $\sigma_\star(\xb)=\sigma_\star(-\xb)$.
  \item The noise density $\phi_z$ is continuously differentiable and symmetric about $0$, i.e. $\phi_z(t)=\phi_z(-t)$ for all $t\in\R$. Further, $\phi_z$ is uni-modal, i.e. $\phi_z'(t)|_{t<0}>0$ and $\phi_z'(t)|_{t>0}<0$.
  \end{enumerate}
\end{assumption}
\begin{theorem}[Estimation error in $\hat{\wb}$ leads to under-coverage on a family of data distributions]
  \label{theorem:w}
  Under the relaxed data distribution assumption (Assumption~\ref{assumption:relaxed-data}), for any $\alpha>3/4$, there exists constants $c,r_0>0$ such that for any learned quantile estimate $\hat{f}(\xb)=\hat{\wb}^\top\xb + b_\star$ with small estimation error $\ltwo{\hat{\wb} - \wb_\star}\le r_0$, we have
  \begin{align*}
    \coverage(\hat{f}) \le \alpha - c\underline{\gamma}/\overline{\sigma}^2 \cdot \ltwo{\hat{\wb} - \wb_\star}^2,
  \end{align*}
  that is, the learned quantile under-covers by at least $\Omega(\ltwo{\hat{\wb} - \wb_\star}^2)$. Above, $c>0$ is an absolute constant, and $r_0>0$ depends on $(\underline{\gamma}, \underline{\sigma}, K, \Phi_z, \alpha)$ but not $(n,d)$.
\end{theorem}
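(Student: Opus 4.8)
The plan is to reduce the coverage to a one-dimensional expectation, \emph{symmetrize} to eliminate its first-order term in the perturbation $\Delta \defeq \hat{\wb}-\wb_\star$, and then control the remaining second-order term by a local quadratic bound plus a sub-Gaussian tail estimate. Writing $u_\xb \defeq \Delta^\top\xb/\sigma_\star(\xb)$, the relaxed model $y=\mu_\star(\xb)+\sigma_\star(\xb)z$ gives $F(t\,|\,\xb)=\Phi_z\big((t-\mu_\star(\xb))/\sigma_\star(\xb)\big)$, and substituting $t=\hat{f}(\xb)=\wb_\star^\top\xb+b_\star+\Delta^\top\xb=q_\alpha^\star(\xb)+\Delta^\top\xb$ together with $q_\alpha^\star(\xb)=\mu_\star(\xb)+\sigma_\star(\xb)z_\alpha$ yields the exact identity
\begin{align*}
  \coverage(\hat{f}) = \E_\xb\big[\Phi_z(z_\alpha+u_\xb)\big].
\end{align*}
By Assumption~\ref{assumption:relaxed-data}(a,b), $\xb\stackrel{d}{=}-\xb$ and $\sigma_\star(-\xb)=\sigma_\star(\xb)$, hence $u_\xb\stackrel{d}{=}-u_\xb$; averaging the two representations gives
\begin{align*}
  \coverage(\hat{f}) = \alpha + \tfrac12\,\E_\xb\big[h(u_\xb)\big], \qquad h(u) \defeq \Phi_z(z_\alpha+u)+\Phi_z(z_\alpha-u)-2\alpha,
\end{align*}
with $h$ even and $h(0)=h'(0)=0$. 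The benefit of symmetrizing is that we are now left with a single deterministic even function $h$ rather than a Taylor remainder $\tfrac12\phi_z'(\xi_\xb)u_\xb^2$ whose sign is unclear when $u_\xb<-z_\alpha$.

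Next I would show $h\le 0$ everywhere, with a quantitative dip near the origin. Writing $h(u)=-\int_0^{|u|}\big[\phi_z(z_\alpha-s)-\phi_z(z_\alpha+s)\big]\,\de s$ and using that $z_\alpha>0$ (the median of the symmetric $P_z$ is $0$ and $\alpha>1/2$) together with the strict unimodality of $\phi_z$ in Assumption~\ref{assumption:relaxed-data}(c), the integrand is strictly positive for all $s>0$, so $h(u)<0$ for $u\ne 0$ and $h$ is strictly decreasing on $(0,\infty)$. Differentiating, $h''(v)=\phi_z'(z_\alpha+v)+\phi_z'(z_\alpha-v)<0$ on $(-z_\alpha,z_\alpha)$, so it is bounded above by some $-2\lambda<0$ on the compact interval $[-z_\alpha/2,z_\alpha/2]$, and Taylor's theorem (with $h(0)=h'(0)=0$) gives the local quadratic bound $h(u)\le-\lambda u^2$ for $|u|\le z_\alpha/2$. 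The hypothesis $\alpha>3/4$ enters in the complementary region: since $\Phi_z(2z_\alpha)\le 1$ and $\Phi_z(0)=1/2$, one has $h(z_\alpha)=\Phi_z(2z_\alpha)+\tfrac12-2\alpha\le\tfrac32-2\alpha<0$, so by monotonicity $h(u)\le\tfrac32-2\alpha$ for all $|u|\ge z_\alpha$, with a strictly negative margin depending only on $\alpha$; this prevents the final constant from degenerating with the fine structure of $\phi_z$ away from $z_\alpha$.

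To assemble the bound: since $h\le0$ everywhere and $h(u)\le-\lambda u^2$ on $[-z_\alpha/2,z_\alpha/2]$,
\begin{align*}
  \coverage(\hat{f})-\alpha \;\le\; -\tfrac{\lambda}{2}\,\E_\xb\big[u_\xb^2\,\indic{|u_\xb|\le z_\alpha/2}\big] \;=\; -\tfrac{\lambda}{2}\Big(\E_\xb[u_\xb^2]-\E_\xb\big[u_\xb^2\,\indic{|u_\xb|>z_\alpha/2}\big]\Big).
\end{align*}
Here $\E_\xb[u_\xb^2]=\Delta^\top\E[\xb\xb^\top/\sigma_\star(\xb)^2]\Delta\ge\underline{\gamma}\ltwo{\Delta}^2/\overline{\sigma}^2$ by Assumption~\ref{assumption:relaxed-data}(a,b), while, since $\Delta^\top\xb$ is $K\ltwo{\Delta}$-sub-Gaussian and $\sigma_\star\ge\underline{\sigma}$, Cauchy--Schwarz plus sub-Gaussian moment and tail estimates bound $\E_\xb[u_\xb^2\indic{|u_\xb|>z_\alpha/2}]$ by $\ltwo{\Delta}^2$ times a factor decaying like $\exp\!\big(-c_0\underline{\sigma}^2 z_\alpha^2/(K^2\ltwo{\Delta}^2)\big)$. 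Choosing $r_0=r_0(\underline{\gamma},\underline{\sigma},K,\Phi_z,\alpha)$ small enough that this factor is at most $\underline{\gamma}/(2\overline{\sigma}^2)$ whenever $\ltwo{\Delta}\le r_0$ gives $\E_\xb[u_\xb^2\indic{|u_\xb|\le z_\alpha/2}]\ge\tfrac12\underline{\gamma}\ltwo{\Delta}^2/\overline{\sigma}^2$, hence $\coverage(\hat{f})\le\alpha-c\,\underline{\gamma}/\overline{\sigma}^2\,\ltwo{\Delta}^2$ for a suitable $c>0$; the margin supplied by $\alpha>3/4$ in the outer region is what lets one take $c$ independent of the data-distribution parameters $(\underline{\gamma},\underline{\sigma},K)$.

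I expect the tail-control step to be the main obstacle: one must use the sub-Gaussian concentration of $u_\xb$ at scale $\ltwo{\Delta}$ to dominate the fact that $h$ is only weakly negative relative to $u^2$ far from the origin, and the resulting estimate must stay $\Omega(\ltwo{\Delta}^2)$ rather than collapsing to something exponentially small in $1/\ltwo{\Delta}^2$ --- which is precisely what forces the two-region split (local quadratic bound versus a uniform negative bound) and the stated order of quantifiers in the choice of $r_0$. A secondary subtlety is deriving the strict sign and monotonicity of $h$ from symmetry and unimodality alone, since $\phi_z$ is assumed only $C^1$, so the second-order analysis sits right at the edge of the available smoothness.
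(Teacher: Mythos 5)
Your proof is correct and follows essentially the same route as the paper: symmetrize the coverage over $\xb\leftrightarrow -\xb$ to reduce it to $\alpha + \tfrac12\E[h(u_\xb)]$, establish a pointwise quadratic upper bound on $h$ near zero via local strong concavity of $\Phi_z$ together with global nonpositivity of $h$, and then close with the covariance lower bound $\E[\xb\xb^\top]\succeq\underline{\gamma}\Ib_d$ and a sub-Gaussian tail estimate. The one genuine (minor) divergence is in how you justify $h\le 0$ in the tail: you use the integral identity $h(u)=-\int_0^{|u|}[\phi_z(z_\alpha-s)-\phi_z(z_\alpha+s)]\,\de s$ and unimodality to get $h<0$ for all $u\neq 0$, which requires only $\alpha>1/2$; the paper instead invokes the crude bound $\tfrac12\Phi_z(z_\alpha+t)+\tfrac12\Phi_z(z_\alpha-t)\le\tfrac12(1+\tfrac12)=\tfrac34\le\alpha$ for $|t|>z_\alpha$, which is precisely where the hypothesis $\alpha\ge 3/4$ enters. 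So your version is mildly more general. However, the commentary you attach to $\alpha>3/4$ is not borne out by your own assembly: you never actually use the uniform margin $h\le\tfrac32-2\alpha$ in the outer region --- you discard that region entirely via $h\le 0$ --- and the final constant $c$ in both arguments comes from the local $|h''|$ (equivalently the strong concavity of $\Phi_z$ near $z_\alpha$), which depends on $\Phi_z$ rather than being controlled by the outer-region margin. That last sentence of your intuition paragraph should be amended or dropped; the rest is sound.
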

\paragraph{Implications; proof intuition}
Theorem~\ref{theorem:w} shows that, for a broad class of data distributions, any estimator $\hat{\wb}$ will under-cover by at least $\Omega(\ltwo{\hat{\wb} - \wb_\star}^2)$. If particular, any estimator satisfying $\ltwo{\hat{\wb}-\wb_\star}\asymp \wt{O}(\sqrt{d/n})$ (e.g. from standard generalization theory) will under-cover by $\wt{O}(d/n)$. This confirms that the estimation error in the (bulk) regression coefficient $\hat{\wb}$ is the primary source of the under-coverage bias, under assumptions that are more general than Theorem~\ref{theorem:main} in certain aspects (such as the distribution of $\xb$ and $y|\xb$). We remark that as opposed to Theorem~\ref{theorem:main}, Therorem~\ref{theorem:w} does not give an end-to-end characterization of any specific algorithm, but assumes we have an estimator $\hat{\wb}$ with a small error.



At a high-level, Theorem~\ref{theorem:w} follows from the fact that any estimator $\hat{\wb}^\top\xb+b_\star$ must be lower than the true quantile $\wb_\star^\top\xb+b_\star$ for some $\xb$ and higher for some other $\xb$. Averaging the coverage indicator over $\xb$, such under-coverage and over-coverage cancel out on the first-order if $\xb$ has a symmetric distribution, but aggregate to yield a \emph{under-coverage effect on the second-order} as long as \emph{$\Phi_z(t)|_{t>0}$ is concave} (which holds if $P_z$ is unimodal). The proof of Theorem~\ref{theorem:w} can be found in Appendix~\ref{appendix:proof-w}.

\begin{figure*}[t]
  \centering
  \begin{minipage}{0.43\textwidth}
    \centering
    \subcaption{Coverage against $\alpha$}\label{figure:sim-left}
    \vspace{-.7em}
    \includegraphics[width=0.98\textwidth]{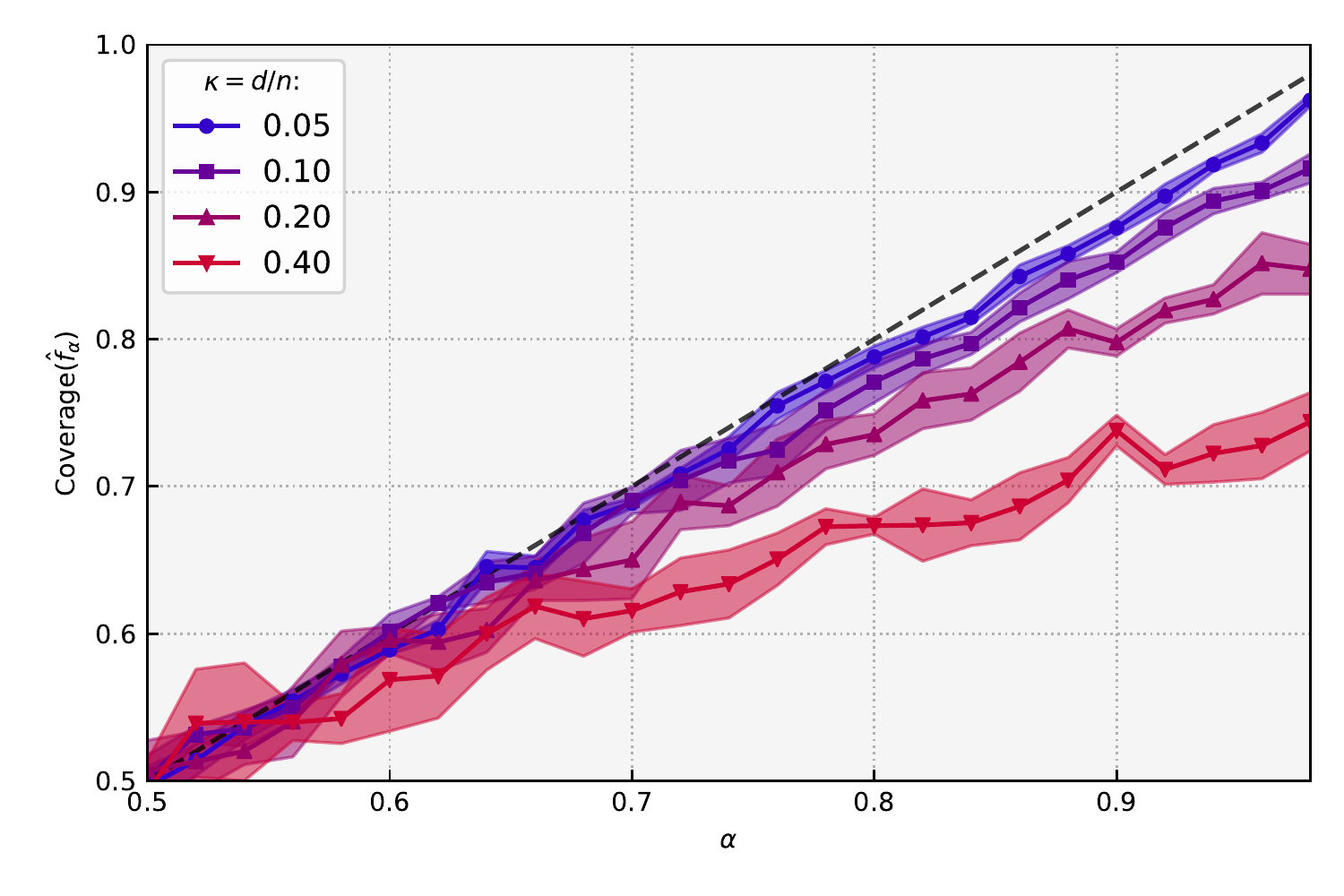}
  \end{minipage}
  \begin{minipage}{0.43\textwidth}
    \centering
    \subcaption{Coverage against $\kappa=d/n$}\label{figure:sim-right}
    \vspace{-.7em}
    \includegraphics[width=0.98\textwidth]{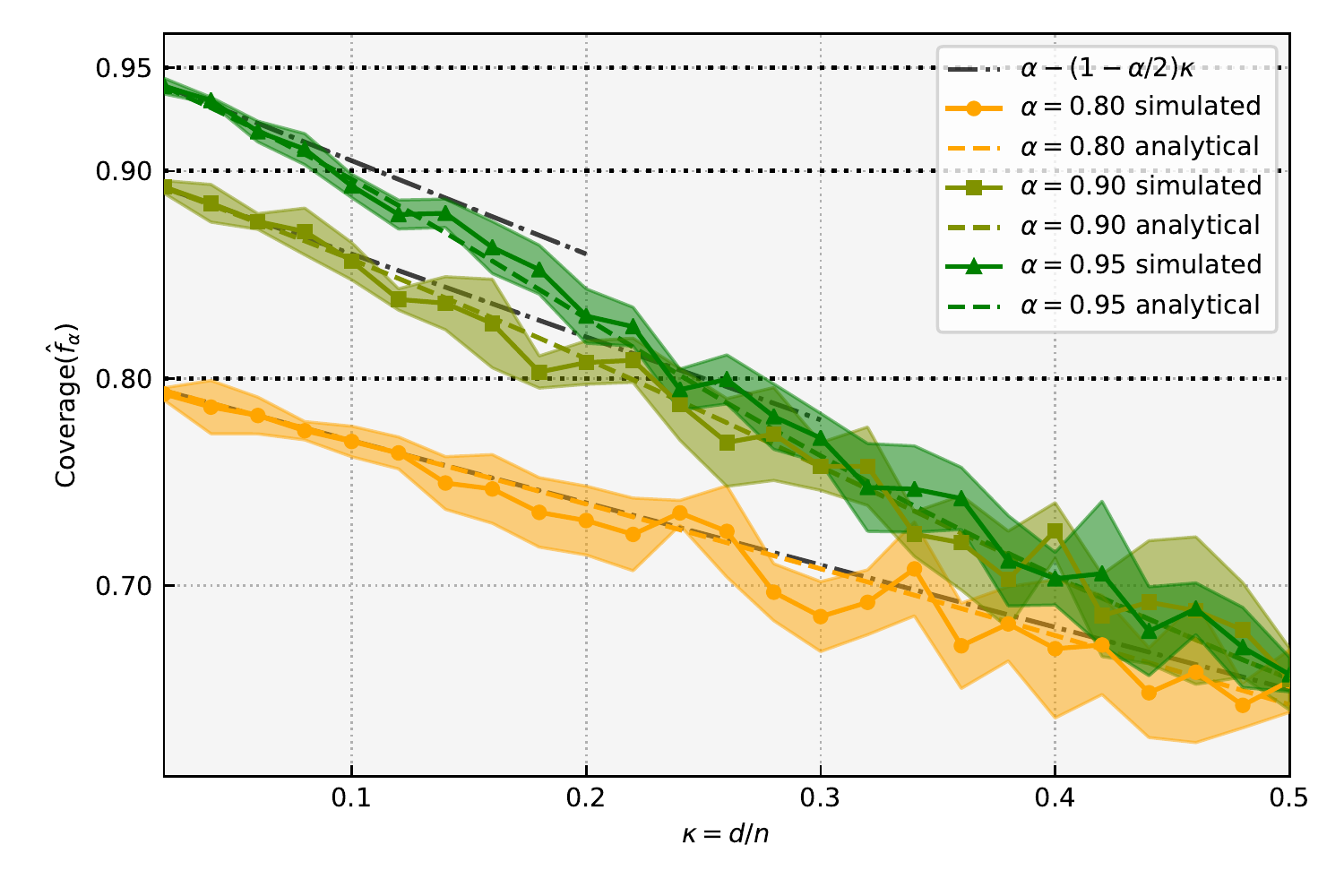}
  \end{minipage}
  \vspace{-1em}
  \caption{Coverage of quantile regression on simulated data from the realizable linear model~\eqref{equation:linear-model}. {\bf (a)(b)} Each dot represents a combination of $(\alpha, \kappa)$ and reports the mean and one-std coverage over 8 random problem instances. {\bf (a)} Coverage against the nominal quantile level $\alpha$ for fixed values of $\kappa=d/n$. {\bf (b)} Coverage against $\kappa$ for fixed $\alpha\in\set{0.8, 0.9, 0.95}$. Here ``analytical'' refers to our analyitical formula $\alpha-C_{\alpha, \kappa}$ and $\alpha-(1-\alpha/2)\kappa$ is its local linear approximation at small $\kappa$ (both from Theorem~\ref{theorem:main}).
  }
  \vspace{-1em}
  \label{figure:sim}
\end{figure*}

\section{Experiments}
\label{section:experiments}

\subsection{Simulations}
\label{section:simulations}


\paragraph{Setup}
We first test our Theorem~\ref{theorem:main} via simulations. We generate data from the linear model~\eqref{equation:linear-model} in $d=100$ dimensions with $\ltwo{\wb_\star}=1$ and noise distribution $P_z=\normal(0, 0.25)$. We vary $\kappa=d/n\in\set{0.02, 0.04, \dots, 0.5}$ where $\kappa$ determines a sample size $n$, and vary $\alpha\in\set{0.5, 0.52, ..., 0.98}$.

For each combination of $(\alpha, \kappa)$, we generate 8 random problem instances, and solve the quantile regression ERM problem~\eqref{equation:quantile-reg-linear} on each instance via (sub)-gradient descent. We evaluate the coverage of the learned quantile $\hat{f}$ (thanks to the linear model~\eqref{equation:linear-model}, the coverage can be computed exactly without needing to introduce a test set). Additional details about the setup can be found in Appendix~\ref{appendix:simulations}.

\paragraph{Results}
Figure~\ref{figure:sim} plots the coverage of the learned quantiles. Observe that quantile regression exhibits under-coverage consistently across different values of $(\alpha, \kappa)$. Figure~\ref{figure:sim-left} shows that at fixed $\kappa$, the amount of under-coverage gets more severe at a higher $\alpha$, which is qualitatively consistent with our approximation formula $(\alpha-1/2)\kappa$.
Figure~\ref{figure:sim-right} further compares simulations with our analytical formula $\alpha - C_{\alpha, \kappa}$ (found numerically through solving the system of equations~\ref{equation:system_main}), as well as the local linear approximation $\alpha-(\alpha-1/2)\kappa$ claimed in Theorem~\ref{theorem:main}. Note that the simulations agree extremely well with the analytical formula. The approximation $\alpha-(\alpha-1/2)\kappa$ is also very accurate for almost all $\kappa$ at $\alpha=0.8$, and accurate for small $\kappa$ at $\alpha=0.9,0.95$. These verify our Theorem~\ref{theorem:main} and suggests it holds at rather realistic values of the dimension ($d=100$).

\begin{table}[t]
  \small
  \caption{Coverage ($\%$) of quantile regression on real data at nominal level $\alpha=0.9$. Each entry reports the test-set coverage with mean and std over 8 random seeds. $(d,n)$ denotes the \{feature dim, \# training examples\}.}
  \label{table:real}
  \centerline{
    \begin{tabular}{l|llll|rr}
      \toprule
      Dataset & Linear & MLP-3-64 & MLP-3-512 & MLP-freeze-3-512 & $d$ & $n$ \\
      \midrule
      Community & 88.63$\pm$1.53 & 76.46$\pm$1.41 & 63.09$\pm$2.91 & 87.85$\pm$1.30 & 100 & 1599 \\
      Bike & 89.64$\pm$0.44 & 88.75$\pm$0.91 & 87.67$\pm$0.49 & 89.27$\pm$0.57 & 18 & 8708 \\
      Star & 89.48$\pm$2.56 & 83.14$\pm$1.76 & 69.71$\pm$1.82 & 88.05$\pm$2.42 & 39 & 1728 \\
      MEPS\_19 & 90.09$\pm$0.72 & 85.46$\pm$0.96 & 78.55$\pm$0.93 & 89.03$\pm$0.51 & 139 & 12628 \\
      MEPS\_20 & 90.06$\pm$0.57 & 86.52$\pm$0.65 & 80.77$\pm$0.72 & 89.60$\pm$0.28 & 139 & 14032 \\
      MEPS\_21 & 89.99$\pm$0.39 & 83.79$\pm$0.52 & 73.09$\pm$0.82 & 89.15$\pm$0.36 & 139 & 12524 \\
      \midrule
      Nominal ($\alpha$) & 90.00 & 90.00 & 90.00 & 90.00 & - & - \\
      \bottomrule
    \end{tabular}
  }
  \vspace{-1em}
\end{table}

\subsection{Real data experiments}
\label{section:real}






\paragraph{Datasets and models}
We take six real-world regression datasets: community and crimes ({\tt Community})~\citep{community}, bike sharing ({\tt Bike})~\citep{bike}, Tennessee's student teacher achievement ratio ({\tt STAR})~\citep{star}, as well as the medical expenditure survey number 19 ({\tt MEPS\_19})~\citep{meps19}, number 20 ({\tt MEPS\_20})~\citep{meps20}, and number 21 ({\tt MEPS\_21})~\citep{meps21}. All datasets are pre-processed to have standarized features and randomly split into a 80\% train set and 20\% test set.

To go beyond linear quantile functions, we perform quantile regression with one of the following four models as our $f_\theta$: linear model ({\tt Linear}), a 3-layer MLP (two non-linear layers) with width 64 ({\tt MLP-3-64}), 512 ({\tt MLP-3-512}), and a variant of the width-512 MLP where all representation layers are frozen and only the last linear layer is trained ({\tt MLP-freeze-3-512}). All linear layers include a trainable bias. We minimize the $\alpha$-quantile loss~\eqref{eqn:quantile_regression_f_theta} via momentum SGD with batch size 64. For each setting, we average over 8 random seeds where each seed determines the train-validation split, model initialization, and SGD batching. In our real experiments we fix $\alpha=0.9$. (Results at $\alpha\in\set{0.8, 0.95}$ as well as additional experimental setups can be found in Appendix~\ref{appendix:real}).

\paragraph{Results}
Table~\ref{table:real} reports the coverage of the learned quantile functions (evaluated on the test sets). Observe that all MLPs exhibit under-coverage compared with the nominal level $90\%$. Additionally, the amount of under-coverage correlates well with model capacity---the two vanilla MLPs under-covers more severely than the MLP-freeze and the linear model. Notice that the linear model does not have a notable under-coverage on most datasets---we believe this is a consequence of $d/n$ being small on these datasets. The only exception is the {\tt Community} dataset with the highest $d/n\approx 1/16$, on which the linear model does under-cover mildly by roughly $1\%$.



\subsection{Linear quantile regression on pseudo-labels}
\label{section:pseudo}
To further test the coverage of linear quantile regression on real data distributions, we make two modifications: (1) We subset the training data by fixing $d$ and reducing $n$, so as to test the coverage across differerent values of $\kappa=d/n$; (2) We compare linear quantile regression on both true labels $y_i$, and \emph{pseudo-labels} $y_i^{\rm pseudo}$ generated from estimated linear models. These pseudo-labels are generated by first fitting a linear model $\what{\wb}\in\R^d$ (with square loss) on the training data, and then generating a new label using the fitted linear model $\what{\wb}$:
\begin{align*}
  y^{\rm pseudo}_i = \what{\wb}^\top \xb_i + \what{\sigma} z_i,
\end{align*}
where $\what{\sigma}$ is estimated as $\sqrt{\what{\E}_{(\xb, y)}[(y - \what{\wb}^\top \xb)^2]}$ on a separate hold-out split, and $z_i\sim\normal(0,1)$. The motivation for the pseudo-labels is to make sure that the data comes from a true linear model, removing the potential effect of model misspecification.

Table~\ref{table:real-pseudo} shows that on the {\tt MEPS\_20} dataset, linear quantile regression exhibits under-coverage at relatively large values of $\kappa$ ($0.1, 0.2, 0.5$) for both kinds of labels. Also, there is no notable difference between pseudo-labels and true labels. This provides evidence that our theory on \emph{linear} quantile regression may hold broadly on real-world data distributions.

\begin{table}[h]
  \small
  \caption{Coverage of linear quantile regression on true labels vs. pseudo-labels.}
  \label{table:real-pseudo}
  \centerline{
    \begin{tabular}{l|llllll}
      \toprule
      $\kappa=d/n$ & 0.01 & 0.02 & 0.05 & 0.1 & 0.2 & 0.5 \\
      \midrule
      MEPS\_20 & 89.83$\pm$0.67 & 89.89$\pm$0.81 & 89.54$\pm$0.82& 88.74$\pm$1.51 & 87.15$\pm$1.52 & 84.75$\pm$1.81  \\
      MEPS\_20 Pseudo & 90.05$\pm$0.85  & 89.95$\pm$0.64  & 89.49$\pm$0.64 &88.90$\pm$1.60 & 86.96$\pm$1.30  & 83.70$\pm$2.98 \\
      \midrule
      Nominal ($\alpha$) & 90.00 & 90.00 & 90.00 & 90.00 & 90.00 & 90.00 \\
      \bottomrule
    \end{tabular}
  }
  \vspace{-1em}
\end{table}

\section{Proof overview of Theorem~\ref{theorem:main}}
\label{section:proof-sketch}


\paragraph{Closed-form expression for coverage}
Our first step is to obtain a closed-form expression for the coverage. Recall that
\[
\coverage(\what{f}) \defeq \P_{(\xb, y)}( y \le \what{f}(\xb)) = \P_{(\xb, z)}( \< \wb_\star, \xb\> + z \le \<\hat \wb, \xb \> + \hat b). 
\]
As $\xb$ is standard Gaussian, and the random variable $z$ has cumulative distribution function $\Phi_z$, standard calculation then yields the closed form expression~(Lemma~\ref{lemma:coverage-expression})
\[
\coverage(\what{f}) = \E_{G \sim \normal(0, 1)}[\Phi_z(\| \hat \wb - \wb_\star \|_2 G + \hat b)]. 
\]

\paragraph{Concentration of $\| \hat{\wb} - \wb_\star \|_2$ and $\hat{b}$}
We generalize results from recent advances in high-dimensional M-estimator in linear models~\citep{el2013robust, donoho2016high, karoui2013asymptotic, thrampoulidis2018precise} to show that $\| \hat{\wb} - \wb_\star \|_2$ and $\hat b$ obtained by quantile regression~\ref{equation:quantile-reg-linear} concentrates around fixed values in the high-dimensional limit. We show that, in the limit of $d,n\to\infty$ and $d/n\to\kappa$, the following concentration happens: 
\begin{equation}
  \label{equation:concentration-rc}
  \begin{aligned}
    \| \hat \wb - \wb_\star \|_2 \gotop \tau_\star(\kappa), ~~~{\rm and}~~~\hat b \gotop b_\star(\kappa).
  \end{aligned}
\end{equation}
Above, $\tau_\star$ and $b_\star$ are determined by the solutions of a system of nonlinear equations with three variables $(\tau, \lambda, b)$: 
\begin{equation}
  \label{equation:system_main}
  \left\{
    \begin{aligned}
      & \tau^2 \kappa = \lambda^2 \cdot \E_{(G, Z) \sim \normal(0, 1) \times P_z}[ e_{\ell_b^\alpha}'(\tau G + Z; \lambda)^2 ], \\
      & \tau \kappa = \lambda \cdot \E_{(G, Z) \sim \normal(0, 1) \times P_z}[e_{\ell_b^\alpha}'(\tau G + Z; \lambda) G], \\
      & 0 = \E_{(G, Z) \sim \normal(0, 1) \times P_z}[ e_{\ell_b^\alpha}'(\tau G + Z; \lambda) ], 
    \end{aligned}
  \right.
\end{equation}
where $e_\ell(x;\tau) = \min_{v} \frac{1}{2 \tau} ( x - v)^2 + \ell(v)$ and $\ell_b^\alpha = \ell^\alpha(t - b)$ is the shifted pinball loss (\ref{equation:pinball-loss}). 
(See Theorem~\ref{thm:ERM_limit} for the formal statement.)
This is established via two main steps: We first build on the results of~\citet{thrampoulidis2018precise} to show that a variant of the risk minimization problem with a fixed bias $b$ concentrates around the solution to the first two equations in~\eqref{equation:system_main}. We then develop a novel concentration argument to deal with the additional learnable bias $b$ in the minimization problem (\ref{equation:quantile-reg-linear}), which introduces the third equation in (\ref{equation:system_main}) that will be used in characterizing the limiting value of the minimizer $\hat b$.


The concentration~\eqref{equation:concentration-rc} implies that $\coverage(\what{f})$ also converges to the following limiting coverage value (Lemma~\ref{lem:coverage_asymptotic}):
\begin{equation}\label{eqn:sketch_coverage}
  \coverage(\hat{f}) \gotop \E_{G\sim\normal(0, 1)}\brac{ \Phi_z\paren{\tau_\star(\kappa) G + b_\star(\kappa)}} \eqdef \alpha - C_{\alpha, \kappa}.
\end{equation}

\paragraph{Calculating the limiting coverage via local linear analysis}
In this final step, as another technical crux of the proof, we further evaluate the small $\kappa$ approximation of coverage value~\eqref{eqn:sketch_coverage}, and determine the sign of $C_{\alpha,\kappa}$. This is achieved by a \emph{local linear analysis} on the solutions of the aforementioned system of equations at small $\kappa$ (Lemma~\ref{lem:local-linear-expansion})  in a similar fashion as in~\cite{bai2021don}, and a precise analysis on the interplay between the concentration values $\tau_\star$, $b_\star$, and the noise density $\phi_z$. Combining these calculations yields that $C_{\alpha, \kappa}/\kappa = - (\alpha - 1/2) + o(1)$ for small enough $\kappa$ (Lemma~\ref{lem:coverage_expansion}). As $\alpha>1/2$, this establishes Theorem~\ref{theorem:main}. All details on these analyses can be found in our proofs in  Appendix~\ref{sec:proof_main}.


%
%
\section{Conclusion}
This paper presents a first theoretical justification of the under-coverage bias in quantile regression. We prove that quantile regression suffers from an inherent under-coverage bias even in well-specified linear settings, and provide a precise quantitative characterization of the amount of the under-coverage bias on Gaussian linear models. Our theory further identifies the high-dimensional estimation error in the regression coefficient as the main source of this under-coverage bias, which holds more generally on a broad class of data distributions. We believe our work opens up several interesting directions for future work, such as analyzing non-linear quantile regression, as well as analyzing other notions of uncertainty in regression problems.

\bibliographystyle{abbrvnat}
\bibliography{bib}

\makeatletter
\def\renewtheorem#1{%
  \expandafter\let\csname#1\endcsname\relax
  \expandafter\let\csname c@#1\endcsname\relax
  \gdef\renewtheorem@envname{#1}
  \renewtheorem@secpar
}
\def\renewtheorem@secpar{\@ifnextchar[{\renewtheorem@numberedlike}{\renewtheorem@nonumberedlike}}
\def\renewtheorem@numberedlike[#1]#2{\newtheorem{\renewtheorem@envname}[#1]{#2}}
\def\renewtheorem@nonumberedlike#1{  
\def\renewtheorem@caption{#1}
\edef\renewtheorem@nowithin{\noexpand\newtheorem{\renewtheorem@envname}{\renewtheorem@caption}}
\renewtheorem@thirdpar
}
\def\renewtheorem@thirdpar{\@ifnextchar[{\renewtheorem@within}{\renewtheorem@nowithin}}
\def\renewtheorem@within[#1]{\renewtheorem@nowithin[#1]}
\makeatother

\renewtheorem{theorem}{Theorem}[section]
\renewtheorem{lemma}{Lemma}[section]
\renewtheorem{remark}{Remark}
\renewtheorem{corollary}{Corollary}[section]
\renewtheorem{observation}{Observation}[section]
\renewtheorem{proposition}{Proposition}[section]
\renewtheorem{definition}{Definition}[section]
\renewtheorem{claim}{Claim}[section]
\renewtheorem{fact}{Fact}[section]
\renewtheorem{assumption}{Assumption}[section]
\renewcommand{\theassumption}{\Alph{assumption}}
\renewtheorem{conjecture}{Conjecture}[section]

\appendix
\section{Technical tools}

\subsection{The pinball loss}

Recall that we took $\ell^\alpha: \R \to \R_{\ge 0}$ to be the pinball loss for the $\alpha$-quantile, i.e., 
\[
\ell^\alpha(t) = -(1-\alpha)t\indic{t \le 0} + \alpha t\indic{t > 0}. 
\]
We denote $\ell_b^\alpha(t) = \ell^\alpha(t - b)$ to be the shifted pinball loss. We will suppress the superscript in $\ell_b = \ell_b^\alpha$ whenever it is clear in the context. The loss function $\ell_b$ is weakly differentiable, with a weak derivative $\ell_b'$ given by
\[
\ell_b'(t) = -(1-\alpha)\indic{t \le 0} + \alpha \indic{t > 0}. 
\]

\subsection{Calculus of the Moreau envelope and prox operator}
\label{sec:Moreau_calculus}

Given a convex loss function $\ell: \R \to \R$, we define its the Moreau envelope $e_\ell: \R \times \R_{> 0} \to \R$ by
\[
e_\ell(x;\lambda) = \min_{v} \Big[ \frac{1}{2 \lambda} ( x - v )^2 + \ell(v) \Big],
\]
and the proximal operator $\prox_\ell(x; \lambda): \R \times \R_{>0}\to \R$ by 
\[
\prox_\ell(x;\lambda) = \arg \min_{v} \Big[ \frac{1}{2 \lambda} ( x - v )^2 + \ell(v) \Big]. 
\]
Since $\ell$ is convex, $\prox_\ell(x; \lambda)$ is well-defined. For $\ell = \ell_b$, we have 
\[
\begin{aligned}
\prox_{\ell_b}(x; \lambda) =&~ b \cdot \ones\{ x \in [b - (1 - \alpha) \lambda, b + \alpha \lambda] \} \\
&~+ (x - \alpha \lambda) \ones\{ x > b + \alpha \lambda \} + (x + (1 - \alpha) \lambda) \ones\{ x < b - (1 - \alpha) \lambda \}. 
\end{aligned}
\]

The function $e_{\ell_b}$ is differentiable with respect to $(x, \lambda, b)$, with derivatives
\begin{equation}\label{eqn:derivative_e_function}
\begin{aligned}
\partial_x e_{\ell_b}(x; \lambda) =&~ \frac{x - \prox_{\ell_b}(x; \lambda)}{\lambda}, \\
\partial_\lambda e_{\ell_b}(x; \lambda) =&~ -\frac{[x - \prox_{\ell_b}(x; \lambda)]^2}{2 \lambda^2} = - \frac{1}{2} (\partial_x e_{\ell_b(x; \lambda)})^2, \\
\partial_b e_{\ell_b}(x; \lambda) =&~ - \partial_x e_{\ell_b}(x; \lambda). \\
\end{aligned}
\end{equation}
The functions $\partial_x e_{\ell_b}$, $\partial_\lambda e_{\ell_b}$ and $\partial_b e_{\ell_b}$ are weakly-differentiable with respect to $(x, \lambda, b)$, with the following formulas giving one (choice of) weak derivative:
\begin{equation}\label{eqn:second_derivative_e_function}
\begin{aligned}
\partial_x \partial_x e_{\ell_b}(x; \lambda) =&~ \frac{1}{\lambda} \ones \{ \prox_{\ell_b}(x; \lambda) = b \} \ge 0, \\
\partial_\lambda \partial_x e_{\ell_b}(x; \lambda) =&~ -\frac{[x - \prox_{\ell_b}(x; \lambda)]^2}{2 \lambda^2} = - \partial_x e_{\ell_b}(x; \lambda) \partial_x \partial_x e_{\ell_b}(x; \lambda) , \\
\partial_b\partial_x e_{\ell_b}(x; \lambda) =&~ - \partial_x \partial_x e_{\ell_b}(x; \lambda), \\
\partial_\lambda \partial_b e_{\ell_b}(x; \lambda) = &~ -\partial_x e_{\ell_b}(x; \lambda) \partial_b \partial_x e_{\ell_b}(x; \lambda) = \partial_x e_{\ell_b}(x; \lambda) \partial_x \partial_x e_{\ell_b}(x; \lambda) , \\
\partial_b \partial_b e_{\ell_b}(x; \lambda) =&~ \partial_x \partial_x e_{\ell_b}(x; \lambda), \\
\partial_\lambda \partial_\lambda e_{\ell_b}(x; \lambda) =&~ - \partial_x e_{\ell_b}(x; \lambda) \partial_\lambda \partial_x e_{\ell_b}(x; \lambda) = \partial_x e_{\ell_b}(x; \lambda)^2 \partial_x \partial_x e_{\ell_b}(x; \lambda). 
\end{aligned}
\end{equation}

\subsection{Implicit function theorem}

We state the standard implicit function theorem in the following.
\begin{lemma}[Implicit function theorem]\label{lem:implicit_function}
Let $\bF(\bp, \kappa):\R^{s}\times \R_{\ge 0}\to \R^s$ be a continuously differentiable vector-valued function on $\ball(\bp_0, \eps) \times [0, \bar \kappa_0)$ for some $\bar \kappa_0 > 0$. Suppose $\bF(\bp_0, 0) = 0$ and 
\[
\sigma_{\min}(\nabla_{\bp} F(\bp_0, 0)) > 0. 
\]
Then there exists a constant $\kappa_0 > 0$ and a continuous differentiable path $\bp_\star(\kappa) \in \ball(\bp_0, \eps)$, such that 
\[
\bF(\bp_\star(\kappa), \kappa) = 0, ~~~ \forall \kappa \in [0, \kappa_0).
\]
\end{lemma}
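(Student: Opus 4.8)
The plan is to turn the equation $\bF(\bp, \kappa) = 0$ into a fixed-point equation and apply the Banach fixed-point theorem with a contraction factor uniform in $\kappa$, then upgrade the resulting continuous branch $\bp_\star(\kappa)$ to a $C^1$ branch by a difference-quotient computation. Concretely, set $A \defeq \nabla_{\bp} \bF(\bp_0, 0)$, which is invertible with $\| A^{-1} \|_\op = 1/\sigma_{\min}(A) < \infty$ by hypothesis, and define $\Psi(\bp, \kappa) \defeq \bp - A^{-1} \bF(\bp, \kappa)$, so that $\Psi(\bp, \kappa) = \bp$ if and only if $\bF(\bp, \kappa) = 0$. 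The point of preconditioning by $A^{-1}$ is that $\nabla_{\bp} \Psi(\bp_0, 0) = \Ib_s - A^{-1}A = \bzero$, so $\Psi(\cdot, \kappa)$ will be a contraction on a small ball once we are close to $(\bp_0, 0)$.

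First I would use continuity of $\nabla_{\bp}\bF$ (hence of $\nabla_{\bp}\Psi$) to choose $r \in (0, \eps]$ and $\bar\kappa_1 \in (0, \bar\kappa_0)$ with $\| \nabla_{\bp}\Psi(\bp, \kappa) \|_\op \le \frac{1}{2}$ on $\ball(\bp_0, r) \times [0, \bar\kappa_1)$; by the mean value inequality, $\Psi(\cdot, \kappa)$ is then $\frac{1}{2}$-Lipschitz on the (convex) ball $\ball(\bp_0, r)$ for each such $\kappa$. Next, since $\bF(\bp_0, 0) = 0$ and $\kappa \mapsto \bF(\bp_0, \kappa)$ is continuous, I would shrink $\bar\kappa_1$ to some $\kappa_0 > 0$ so that $\| \Psi(\bp_0, \kappa) - \bp_0 \| = \| A^{-1}\bF(\bp_0, \kappa) \| \le r/2$ for $\kappa < \kappa_0$. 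Combining the two, for $\bp \in \ball(\bp_0, r)$ and $\kappa < \kappa_0$ we get $\| \Psi(\bp, \kappa) - \bp_0 \| \le \| \Psi(\bp, \kappa) - \Psi(\bp_0, \kappa) \| + \| \Psi(\bp_0, \kappa) - \bp_0 \| \le r/2 + r/2 = r$, so $\Psi(\cdot, \kappa)$ maps the closed ball $\ball(\bp_0, r)$ into itself and is a $\frac{1}{2}$-contraction there. The Banach fixed-point theorem then gives, for every $\kappa \in [0, \kappa_0)$, a unique $\bp_\star(\kappa) \in \ball(\bp_0, r) \subseteq \ball(\bp_0, \eps)$ with $\bF(\bp_\star(\kappa), \kappa) = 0$ (in particular $\bp_\star(0) = \bp_0$).

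For continuity of $\kappa \mapsto \bp_\star(\kappa)$ I would use the parametrized-contraction estimate $\| \bp_\star(\kappa) - \bp_\star(\kappa') \| \le \| \Psi(\bp_\star(\kappa), \kappa) - \Psi(\bp_\star(\kappa), \kappa') \| + \frac{1}{2}\| \bp_\star(\kappa) - \bp_\star(\kappa') \|$, which rearranges to $\| \bp_\star(\kappa) - \bp_\star(\kappa') \| \le 2\|A^{-1}\|_\op\,\| \bF(\bp_\star(\kappa), \kappa) - \bF(\bp_\star(\kappa), \kappa') \| \to 0$ as $\kappa' \to \kappa$ by continuity of $\bF$ (and in fact gives $\|\bp_\star(\kappa) - \bp_\star(\kappa')\| = O(|\kappa - \kappa'|)$ since $\bF$ is locally Lipschitz in $\kappa$). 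For the $C^1$ claim I would shrink $r, \kappa_0$ once more so that, by continuity of $\nabla_{\bp}\bF$, the matrix $\nabla_{\bp}\bF(\bp,\kappa)$ stays invertible with uniformly bounded inverse on $\ball(\bp_0, r) \times [0, \kappa_0)$; then, with $\bp = \bp_\star(\kappa)$ and $\bp' = \bp_\star(\kappa+h)$, subtracting $\bF(\bp, \kappa) = 0$ from $\bF(\bp', \kappa+h) = 0$, first-order Taylor-expanding $\bF$ at $(\bp, \kappa)$, and using $\|\bp' - \bp\| = O(|h|)$ to absorb the remainder, one obtains $\nabla_{\bp}\bF(\bp, \kappa)(\bp' - \bp) + \partial_\kappa \bF(\bp, \kappa)\,h = o(|h|)$, whence $\bp_\star$ is differentiable at $\kappa$ with $\bp_\star'(\kappa) = -[\nabla_{\bp}\bF(\bp_\star(\kappa), \kappa)]^{-1}\partial_\kappa\bF(\bp_\star(\kappa), \kappa)$; continuity of the right-hand side in $\kappa$ (composition of continuous maps, using that matrix inversion is continuous on invertible matrices) gives $\bp_\star \in C^1([0, \kappa_0))$. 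The only genuine obstacle is the bookkeeping of the three successive shrinkings of $(r, \kappa_0)$ so that the contraction property, the self-mapping property, and the invertibility of $\nabla_{\bp}\bF$ hold on a common domain — once that is arranged, the contraction mapping theorem and the Taylor estimate do the rest. One could alternatively just cite the classical implicit function theorem from any analysis reference (applied to a $C^1$ extension of $\bF$ across $\kappa = 0$); the self-contained argument above is short enough to include directly.
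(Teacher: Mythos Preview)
Your argument is correct and is one of the standard proofs of the implicit function theorem via the contraction mapping principle. Note, however, that the paper does not actually supply a proof of this lemma: it is stated under the heading ``We state the standard implicit function theorem in the following'' and is simply invoked as a classical result. So there is nothing in the paper to compare your approach against --- your self-contained Banach fixed-point argument is a perfectly acceptable way to fill in what the paper leaves as a citation, and your final remark that one could alternatively just cite the classical theorem is exactly what the paper does.
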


\subsection{Other technical lemmas}

\begin{lemma}
  \label{lemma:woodbury}
  For any vectors $\ub, \vb\in\R^d$ and any positive definite matrix $\Ab\in\R^{d\times d}$, $\Ab \succ \bzero$, we have
  \begin{align*}
    \abs{\ub^\top (\Ab + \vb\vb^\top)^{-1}\vb} \le \abs{\ub^\top \Ab^{-1}\vb}.
  \end{align*}
\end{lemma}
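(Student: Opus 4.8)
The plan is to reduce everything to a single scalar identity via the Sherman--Morrison rank-one update formula. First I would note that $\Ab + \vb\vb^\top \succeq \Ab \succ \bzero$, so the inverse on the left-hand side is well-defined. Then, applying Sherman--Morrison,
\[
(\Ab + \vb\vb^\top)^{-1} = \Ab^{-1} - \frac{\Ab^{-1}\vb\vb^\top\Ab^{-1}}{1 + \vb^\top\Ab^{-1}\vb}.
\]

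Next I would right-multiply by $\vb$ and left-multiply by $\ub^\top$. Writing $s \defeq \vb^\top\Ab^{-1}\vb$, which is nonnegative since $\Ab^{-1}\succ\bzero$, the two terms combine to
\[
\ub^\top(\Ab + \vb\vb^\top)^{-1}\vb
= \ub^\top\Ab^{-1}\vb - \frac{(\ub^\top\Ab^{-1}\vb)\, s}{1+s}
= \frac{1}{1+s}\,\ub^\top\Ab^{-1}\vb.
\]
Taking absolute values and using $0 < \tfrac{1}{1+s} \le 1$ (because $s \ge 0$) gives exactly the claimed inequality
$\abs{\ub^\top(\Ab+\vb\vb^\top)^{-1}\vb} \le \abs{\ub^\top\Ab^{-1}\vb}$.

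There is essentially no obstacle here: the only things to be careful about are checking invertibility of $\Ab+\vb\vb^\top$ (immediate from positive definiteness) and that $s\ge 0$ so the scalar prefactor lies in $(0,1]$; the rest is an algebraic identity. If one prefers to avoid invoking Sherman--Morrison as a black box, an equivalent route is to verify directly that $(\Ab^{-1} - \tfrac{\Ab^{-1}\vb\vb^\top\Ab^{-1}}{1+s})(\Ab+\vb\vb^\top) = \Ib_d$ by expansion, which is the only computation in the argument and is routine.
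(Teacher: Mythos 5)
Your proof is correct and follows essentially the same route as the paper: both invoke Sherman--Morrison to collapse $\ub^\top(\Ab+\vb\vb^\top)^{-1}\vb$ into $\tfrac{1}{1+s}\,\ub^\top\Ab^{-1}\vb$ with $s=\vb^\top\Ab^{-1}\vb\ge 0$, and then bound the scalar prefactor by $1$.
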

\begin{proof}
  Recall the Sherman-Morrison-Woodbury identity for matrix inversion:
  \begin{align*}
    (\Ab + \vb\vb^\top)^{-1} = \Ab^{-1} - \frac{\Ab^{-1}\vb\vb^\top\Ab^{-1}}{1 + \vb^\top\Ab^{-1}\vb}.
  \end{align*}
  Applying this, we have
  \begin{align*}
    &\quad \abs{\ub^\top (\Ab + \vb\vb^\top)^{-1} \vb } = \abs{ \ub^\top\Ab^{-1}\vb - \ub^\top \frac{\Ab^{-1}\vb\vb^\top\Ab^{-1}}{1 + \vb^\top\Ab^{-1}\vb} \vb } \\
    & = \abs{ \ub^\top\Ab^{-1}\vb - \paren{\ub^\top \Ab^{-1}\vb} \cdot \frac{\vb^\top\Ab^{-1}\vb}{1 + \vb^\top\Ab^{-1}\vb} } \\
    & = \abs{ \paren{\ub^\top \Ab^{-1}\vb} \cdot \frac{1}{1 + \vb^\top\Ab^{-1}\vb} }
      \le \abs{ \ub^\top \Ab^{-1}\vb }.
  \end{align*}
  Above, the last line used $\vb^\top\Ab^{-1}\vb\ge 0$ since $\Ab^{-1}\succeq \bzero$. This proves the lemma.
\end{proof}

\begin{lemma}\label{lem:PD_condition}
Let $\bX \in \R^s$ be a random variable with distribution $\mu$, and let $\bu: \R^s \to \R^k$ be a continuous function. Assume that there exist $(\bx_t)_{t \in [k]}$ that are in the support of the distribution of $\bX$ (i.e., for any $t \in [k]$, we have $\mu(\set{ \bx: \| \bx_t - \bx \|_2 \le \eps}) > 0$ for any $\eps > 0$), such that $[\bu(\bx_1), \ldots, \bu(\bx_k)] \in \R^{k \times k}$ is full rank. Then we have 
\[
\E[\bu(\bX) \bu(\bX)^\top] \succ 0. 
\]
\end{lemma}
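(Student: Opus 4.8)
The plan is to argue by contradiction, reducing the claimed positive-definiteness to the full-rank hypothesis through a continuity-plus-support argument. Write $M \defeq \E[\bu(\bX)\bu(\bX)^\top] \in \R^{k\times k}$, which we may assume has finite entries (otherwise some diagonal entry is $+\infty$, the relevant quadratic form is $+\infty$, and the conclusion is immediate). For any $\vb \in \R^k$ one has $\vb^\top M \vb = \E[(\vb^\top \bu(\bX))^2] \ge 0$, so $M \succeq 0$ automatically, and it remains only to show that $M$ has trivial kernel.

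So suppose, for contradiction, that $\vb^\top M \vb = 0$ for some nonzero $\vb$. Then $\E[(\vb^\top \bu(\bX))^2] = 0$, so the continuous function $g(\bx) \defeq \vb^\top \bu(\bx)$ vanishes $\mu$-almost surely, i.e.\ $\mu(\set{\bx : g(\bx) \neq 0}) = 0$. The next step is to deduce that $g(\bx_t) = 0$ for each of the $k$ points $\bx_t$ in the hypothesis: if instead $g(\bx_t) \neq 0$, continuity of $g$ yields an $\eps > 0$ with $|g(\bx)| \ge |g(\bx_t)|/2 > 0$ for all $\bx \in \ball(\bx_t, \eps)$; but by assumption $\bx_t$ lies in the support of $\mu$, i.e.\ $\mu(\ball(\bx_t,\eps)) > 0$, so $\mu(\set{\bx : g(\bx) \neq 0}) \ge \mu(\ball(\bx_t,\eps)) > 0$, a contradiction. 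Hence $\vb^\top \bu(\bx_t) = 0$ for every $t \in [k]$.

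Finally, this says that $\vb$ annihilates every column of the matrix $U \defeq [\bu(\bx_1), \dots, \bu(\bx_k)] \in \R^{k\times k}$, that is, $U^\top \vb = \bzero$. Since $U$ is full rank and hence invertible, $\vb = \bzero$, contradicting the choice of $\vb$; therefore $M \succ 0$. I do not anticipate any genuine obstacle here. The only points that need care are (i) ensuring $M$ is well-defined as a finite matrix, dispatched above by a trivial case split, and (ii) invoking the support hypothesis in exactly the stated form $\mu(\ball(\bx_t,\eps)) > 0$ for all $\eps > 0$, which is precisely what the continuity step consumes.
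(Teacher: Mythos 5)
Your proof is correct, but it follows a genuinely different route from the paper's. You argue qualitatively by contradiction: if $\vb^\top M\vb = 0$ then $\vb^\top\bu(\bX)=0$ $\mu$-almost surely, continuity forces $\vb^\top\bu(\bx_t)=0$ at each support point $\bx_t$, and full rank of $[\bu(\bx_1),\dots,\bu(\bx_k)]$ then forces $\vb=\bzero$. The paper instead proceeds constructively: it introduces a uniform modulus of continuity $\omega(\eps)$ of $\bu$ near the $\bx_t$, a mass lower bound $\nu(\eps)=\min_t\mu(\ball(\bx_t,\eps))>0$, lower-bounds $\E[\bu(\bX)\bu(\bX)^\top]$ by restricting the integral to $\cup_t\ball(\bx_t,\eps)$, and obtains $\E[\bu(\bX)\bu(\bX)^\top]\succeq \nu(\eps)\big[\lambda_{\min}\big(\sum_t\bu(\bx_t)\bu(\bx_t)^\top\big)-\omega(\eps)k\big]I_k$, which is positive for $\eps$ small. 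Your kernel argument is shorter and cleaner for establishing strict positive-definiteness per se; the paper's version, at the cost of bookkeeping, yields an explicit quantitative lower bound on $\lambda_{\min}(M)$ in terms of $\nu(\eps)$, $\omega(\eps)$, and $\lambda_{\min}(\sum_t \bu(\bx_t)\bu(\bx_t)^\top)$, which can be useful when one needs uniformity over a family of such matrices (though in this particular application the qualitative conclusion suffices). One small remark: your preliminary case split on finiteness of $M$ is harmless but slightly imprecise --- if some diagonal entry of $M$ is $+\infty$ while others are finite, the off-diagonal entries and hence the quadratic form $\vb^\top M\vb$ need not even be well-defined; but the paper also implicitly assumes integrability, so this is not a real issue in context.
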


\begin{proof}[Proof of Lemma \ref{lem:PD_condition}]

We denote 
\[
\omega(\eps) = \sup_{t \in [k]} \Big[ 2 \sup_{\bx \in \ball(\bx_t, \eps)} \|\bu(\bx) - \bu(\bx_t) \|_2 \cdot \sup_{\bx \in \ball(\bx_t, \eps)} \|\bu(\bx) \|_2 + \sup_{\bx \in \ball(\bx_t, \eps)} \|\bu(\bx) - \bu(\bx_t) \|_2^2 \Big]. 
\]
Since $\bu$ is a continuous function on $\R^s$, we have 
\[
\lim_{\eps \to 0} \omega(\eps) = 0.
\]
We further denote 
\[
\nu(\eps) = \min_{t \in [k]}  \mu(\ball(\bx_t, \eps)). 
\]
Then by the fact that $(\bx_t)_{t \in [k]} \subseteq {\rm supp}(\mu)$, we have $\nu(\eps) > 0$ for any $\eps > 0$. 

Then, for any $\eps > 0$, we have 
\[
\begin{aligned}
\E[\bu(\bX) \bu(\bX)^\top] \succeq&~ \sum_{t = 1}^k \int_{\ball(\bx_t, \eps)} \bu(\bx) \bu(\bx)^\top \mu(\de \bx) \\
\succeq&~ \sum_{t = 1}^k (\bu(\bx_t) \bu(\bx_t)^\top - \omega(\eps) I_k) \nu(\eps) \\
=&~  \nu(\eps) \sum_{t = 1}^k \bu(\bx_t) \bu(\bx_t)^\top   - \omega(\eps) k \nu(\eps) I_k \\
\succeq &~ \nu(\eps)\Big[ \lambda_{\min}\Big(\sum_{t = 1}^k \bu(\bx_t) \bu(\bx_t)^\top \Big)   - \omega(\eps) k \Big] I_k. 
\end{aligned}
\]
Since $[\bu(\bx_1), \ldots, \bu(x_k)]$ has full rank, we have $\lambda_{\min}(\sum_{t = 1}^k \bu(\bx_t) \bu(\bx_t)^\top) > 0$. We can choose $\eps$ sufficiently small, so that $\lambda_{\min}(\sum_{t = 1}^k \bu(\bx_t) \bu(\bx_t)^\top)   - \omega(\eps) k > 0$. This gives $\E[\bu(\bX) \bu(\bX)^\top] \succ 0$. This proves the lemma.  
\end{proof}

\section{Properties of quantile regression}
\subsection{Population minimizer of quantile risk}
\label{appendix:qr-consistency}

We can express the population quantile risk as
\begin{align*}
  R(f) = \E\brac{\ell^\alpha(y - f(\xb))} = \E_{\xb} \E\brac{\ell^\alpha(y - f(\xb)) | \xb }.
\end{align*}
Therefore, any function $f(\xb)$ that minimizes the conditional expectation $\E[\ell^\alpha(y - f(\xb)) | \xb]$ at every $\xb$ minimizes the above risk. It is a classical result that for any distribution $P$ on $\R$, a minimizer of $\E_{y\sim P}[\ell^\alpha(y - f)]$ is the $\alpha$-quantile $q_\alpha=\inf\set{t\in\R: F(t)\ge \alpha}$, where $F$ is the CDF of $P$~\citep[Section 3]{koenker1978regression}. Therefore, the conditional quantile function $q^\star(\xb)=\argmin_f \E[\ell^\alpha(y - f(\xb)) | \xb]$ is a minimizer of the aforementioned conditional expectation at every $\xb$. This proves the claim.
\qed

\subsection{Explicit expression of coverage}
\begin{lemma}
  \label{lemma:coverage-expression}
  Under the linear model~\eqref{equation:linear-model}, for any linear quantile function $\hat{f}(\xb)=\hat{\wb}^\top\xb+\hat{b}$, the coverage of $\hat{f}$ can be expressed as
  \begin{align*}
    \coverage(\hat{f}) = \P_{(\xb, y)}\paren{y \le \hat{\wb}^\top\xb + \hat{b}} = \E_{G\sim\normal(0, 1)}\brac{ \Phi_z\paren{\ltwo{\hat{\wb} - \wb_\star}G + \hat{b}}}.
  \end{align*}
\end{lemma}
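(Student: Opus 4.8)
\textbf{Proof plan for Lemma~\ref{lemma:coverage-expression}.}
The plan is to start from the definition of coverage under the linear model and reduce the probability to a one-dimensional Gaussian computation. Plugging $y = \wb_\star^\top\xb + z$ into the coverage expression, the event $\{y \le \hat{\wb}^\top\xb + \hat{b}\}$ becomes $\{z \le (\hat{\wb} - \wb_\star)^\top\xb + \hat{b}\}$. Since $z \sim P_z$ is independent of $\xb$, conditioning on $\xb$ and using the CDF $\Phi_z$ gives
\begin{align*}
  \coverage(\hat{f}) = \E_{\xb}\brac{ \Phi_z\paren{ (\hat{\wb} - \wb_\star)^\top\xb + \hat{b} } }.
\end{align*}
The remaining task is to evaluate this expectation over $\xb \sim \normal(\bzero, \Ib_d)$.

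Next I would observe that the integrand depends on $\xb$ only through the scalar linear form $(\hat{\wb} - \wb_\star)^\top\xb$. Because $\xb$ is standard Gaussian, this linear form is itself a centered Gaussian random variable with variance $\ltwo{\hat{\wb} - \wb_\star}^2$; that is, $(\hat{\wb} - \wb_\star)^\top\xb \overset{d}{=} \ltwo{\hat{\wb} - \wb_\star} G$ for $G \sim \normal(0,1)$. (One should be mildly careful about the degenerate case $\hat{\wb} = \wb_\star$, where the variance is zero and the identity still holds trivially with $G$ contributing nothing.) Substituting this distributional identity into the expectation yields
\begin{align*}
  \coverage(\hat{f}) = \E_{G \sim \normal(0,1)}\brac{ \Phi_z\paren{ \ltwo{\hat{\wb} - \wb_\star} G + \hat{b} } },
\end{align*}
which is exactly the claimed formula.

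This lemma is essentially a routine calculation and I do not anticipate a genuine obstacle; the only point requiring a sentence of care is the reduction from the $d$-dimensional Gaussian expectation to the one-dimensional one, i.e. justifying that a deterministic linear functional of a standard Gaussian vector is Gaussian with the stated variance, and that $\hat{\wb}, \hat{b}$ can be treated as fixed when computing the inner conditional probability over $(\xb, y)$ — here $\hat{\wb}, \hat{b}$ are functions of the training data, which is independent of the fresh test point $(\xb, y)$, so conditioning on the training data makes them constants. Everything else is Fubini (exchanging the $\xb$-expectation with the $z$-probability, valid since $\Phi_z$ is bounded) and the definition of $\Phi_z$.
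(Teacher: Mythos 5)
Your proposal is correct and follows essentially the same route as the paper: substitute the linear model, peel off $\wb_\star^\top\xb$, condition on $\xb$ to introduce $\Phi_z$, and then use the fact that a linear functional of a standard Gaussian is a one-dimensional Gaussian with variance $\ltwo{\hat\wb-\wb_\star}^2$. The extra remarks about the degenerate case $\hat\wb=\wb_\star$ and about treating $(\hat\wb,\hat b)$ as fixed via independence from the fresh test point are sound and slightly more careful than the paper's terse proof, but they do not change the argument.
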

\begin{proof}
  By the linear model~\eqref{equation:linear-model}, we have $y=\wb_\star^\top\xb+z$ and thus
  \begin{align*}
    & \quad \P_{(\xb, y)}\paren{y \le \hat{\wb}^\top\xb + \hat{b}} = \P_{(\xb, z)}\paren{\wb_\star^\top \xb + z \le \hat{\wb}^\top \xb + \hat{b}} \\
    & = \P_{(\xb, z)}\paren{z \le (\hat{\wb} - \wb_\star)^\top\xb + \hat{b}} \\
    & = \E_{\xb}\brac{\Phi_z\paren{(\hat{\wb} - \wb_\star)^\top\xb + \hat{b}}  } \\
    & = \E_{G\sim \normal(0,1)}\brac{\Phi_z\paren{\ltwo{\hat{\wb} - \wb_\star}G + \hat{b}} }.
  \end{align*}
  Above, the last step used the Gaussian input assumption $\xb\sim\normal(\bzero, \Ib_d)$.
\end{proof}
\section{Proof of Theorem \ref{theorem:main}}
\label{sec:proof_main}

Recall that $\ell_b^\alpha(t) = \ell^\alpha(t-b)$ where $\ell^\alpha(t)$ is the pinball loss for the $\alpha$-quantile, i.e., 
\[
\ell^\alpha(t) = -(1-\alpha)t\indic{t \le 0} + \alpha t\indic{t > 0}. 
\]
We will consider a fixed $\alpha$, so we often write $\ell_b \equiv \ell_b^\alpha$. We further define 
\[
e_\ell(x;\lambda) \defeq \min_{v\in\R} \brac{\frac{1}{2 \lambda} (x - v)^2 + \ell(v)}. 
\]
We consider the following system of equations in three variables $(\tau, \lambda, b)\in\R_{>0}\times \R_{>0}\times \R$, which will be key to our analysis of the quantile ERM problem~\eqref{equation:quantile-reg-linear}:
\begin{equation}
  \label{equation:system}
  \left\{
    \begin{aligned}
      & \tau^2 \kappa = \lambda^2 \cdot \E\brac{ e_{\ell_b}'(\tau G + Z; \lambda)^2 }, \\
      & \tau \kappa = \lambda \cdot \E\brac{e_{\ell_b}'(\tau G + Z; \lambda) G}, \\
      & 0 = \E\brac{ e_{\ell_b}'(\tau G + Z; \lambda) }.
    \end{aligned}
  \right.
\end{equation}

The following two lemmas show that the system of equations~\eqref{equation:system} has a unique solution, which further admits a local linear expansion over $\kappa$ with closed-form coefficients.
\begin{lemma}[Existence of unique solution]
  \label{lem:scf_exist_unique}
  There exists $\kappa_0>0$ such that for any $\kappa \in (0, \kappa_0]$, there exists a unique solution $(\tau_\star(\kappa), \lambda_\star(\kappa), b_\star(\kappa))$ of the system of equations~\eqref{equation:system}.
\end{lemma}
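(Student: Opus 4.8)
The plan is to apply the implicit function theorem (Lemma~\ref{lem:implicit_function}) to the map $\bF(\bp,\kappa)$ whose zero set is exactly the solution set of~\eqref{equation:system}, started from a carefully chosen base point at $\kappa=0$. First I would reparametrize the system to make the $\kappa\to 0$ limit non-degenerate: when $\kappa=0$ the natural scaling is $\tau\to 0$ and $\lambda\to 0$ with $\tau/\lambda$ and $b$ converging to finite limits, so I would write $\tau=\lambda s$ (or an equivalent rescaling) and divide the first two equations by appropriate powers of $\lambda$, turning~\eqref{equation:system} into an equivalent system $\bF(\bp,\kappa)=0$ in variables $\bp=(s,\lambda,b)$ (or $\bp=(\tau,\lambda,b)$ after the limits are identified) that extends continuously to $\kappa=0$. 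Using the derivative formulas for $e_{\ell_b}$ from~\eqref{eqn:derivative_e_function}–\eqref{eqn:second_derivative_e_function} and the smoothness of $\phi_z$ from Assumption~\ref{ass:noise_density}, I would verify $\bF$ is $C^1$ on a neighborhood $\ball(\bp_0,\eps)\times[0,\bar\kappa_0)$; the Moreau envelope of the pinball loss is explicit (the prox is piecewise linear, given in Section~\ref{sec:Moreau_calculus}), so $e_{\ell_b}'(\tau G+Z;\lambda)$ is a bounded, piecewise-smooth function of $(G,Z)$ and taking expectations against the smooth density $\phi_z$ yields a smooth dependence on the parameters.

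Next I would identify the base point $\bp_0$ solving $\bF(\bp_0,0)=0$. Setting $\kappa=0$, the first two equations force (after rescaling) a relation pinning down $s_0=\lim\tau/\lambda$, and the third equation $\E[e_{\ell_b}'(\tau G+Z;\lambda)]=0$ becomes, in the $\lambda\to0$ limit, the condition that $b$ equals the $\alpha$-quantile $z_\alpha$ of $P_z$ (since $e_{\ell_b}'\to\ell_b'=\ell'^\alpha(\cdot-b)$ and $\E[\ell'^\alpha(Z-b)]=\alpha-\Phi_z(b)$, which vanishes exactly at $b=z_\alpha$). This is the expected anchor: at $\kappa=0$ quantile regression recovers the population minimizer $(\wb_\star,z_\alpha)$, consistent with the consistency result in Appendix~\ref{appendix:qr-consistency}. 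So $\bp_0=(s_0,0,z_\alpha)$ (in the rescaled coordinates), with the limiting values computed using $\phi_z(z_\alpha)>0$.

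The crux is verifying the nondegeneracy condition $\sigma_{\min}(\nabla_{\bp}\bF(\bp_0,0))>0$, i.e. the $3\times 3$ Jacobian of the rescaled system in $(s,\lambda,b)$ is invertible at the base point. I expect this to be the main obstacle, and I would handle it by exploiting structure rather than brute force: the $b$-derivatives decouple partly because $\partial_b e_{\ell_b}=-\partial_x e_{\ell_b}$ and $\partial_b\partial_x e_{\ell_b}=-\partial_x\partial_x e_{\ell_b}$, so several Jacobian entries reduce to expectations of $\partial_x\partial_x e_{\ell_b}=\frac1\lambda\ones\{\prox_{\ell_b}=b\}$, which at the base point concentrate near the event that $\tau G+Z\in[b-(1-\alpha)\lambda,b+\alpha\lambda]$ and evaluate to $\phi_z(z_\alpha)>0$ times explicit constants. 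The third row's $b$-derivative limits to $-\phi_z(z_\alpha)\neq 0$, giving a nonzero pivot; the remaining $2\times 2$ block in $(s,\lambda)$ can be shown nonsingular by a positive-definiteness argument in the spirit of Lemma~\ref{lem:PD_condition} (the relevant Gram-type matrix built from $e_{\ell_b}'$ and $\partial_\lambda e_{\ell_b}'=-\tfrac12(e_{\ell_b}')^2$ against a nondegenerate distribution of $(G,Z)$ is strictly positive definite), or alternatively by a monotonicity/convexity argument showing the map $\kappa\mapsto(\tau^2/\lambda^2,\dots)$ is strictly monotone. Once $\sigma_{\min}>0$ is established, Lemma~\ref{lem:implicit_function} yields a $\kappa_0>0$ and a $C^1$ path $\bp_\star(\kappa)$ with $\bF(\bp_\star(\kappa),\kappa)=0$ on $[0,\kappa_0)$; translating back through the reparametrization gives the solution $(\tau_\star(\kappa),\lambda_\star(\kappa),b_\star(\kappa))$, and shrinking $\kappa_0$ if needed keeps $\tau_\star,\lambda_\star>0$. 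Uniqueness on $(0,\kappa_0]$ follows from the local uniqueness in the implicit function theorem together with an a priori argument that any solution of~\eqref{equation:system} with small $\kappa$ must lie in the neighborhood $\ball(\bp_0,\eps)$ — which I would get from the concentration characterization of $\hat\wb,\hat b$ (Theorem~\ref{thm:ERM_limit}) forcing $\tau,\lambda\to0$ and $b\to z_\alpha$ as $\kappa\to 0$, possibly supplemented by a direct compactness/continuity argument on the system itself.
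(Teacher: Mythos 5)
Your overall strategy (rescale the variables, extend $\bF$ continuously to $\kappa=0$, check nondegeneracy of the Jacobian, and apply the implicit function theorem) matches the paper's proof, but the specific rescaling you propose is wrong and would break the argument. You assert that as $\kappa\to 0$ one has $\tau\to 0$, $\lambda\to 0$ with $\tau/\lambda$ converging to a finite limit $s_0$, and plan to set $s=\tau/\lambda$. This is false: the rates that actually come out of the analysis (Lemma~\ref{lem:local-linear-expansion}) are $\tau_\star^2(\kappa)=\bartau_0^2\kappa+o(\kappa)$ and $\lambda_\star(\kappa)=\barlambda_0\kappa+o(\kappa)$, i.e.\ $\tau_\star\sim\sqrt\kappa$ while $\lambda_\star\sim\kappa$, so $\tau_\star/\lambda_\star\sim\kappa^{-1/2}\to\infty$. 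With your parametrization the base point has $s_0=\infty$ and there is nothing to apply the implicit function theorem to. Your fallback of using $\bp=(\tau,\lambda,b)$ directly, with base point $(0,0,z_\alpha)$, is also degenerate: at that point the first two equations of~\eqref{equation:system} vanish identically (both sides are zero), so the limiting Jacobian is singular in the $(\tau,\lambda)$ block. The correct rescaling, used in the paper, is $\bartau=\tau/\sqrt\kappa$, $\barlambda=\lambda/\kappa$, $\barb=(b-z_\alpha)/\kappa$, which produces a finite interior base point $\bp_0=(\bartau_0,\barlambda_0,\barb_0)$ with $\bartau_0,\barlambda_0>0$ (in particular $\barlambda_0=1/\phi_z(z_\alpha)$), at which the explicit Jacobian $\bJ(\bp_0)$ has determinant $-2\alpha(1-\alpha)\neq 0$.

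A second gap is in your uniqueness argument: you propose to localize any solution of~\eqref{equation:system} near the base point via the concentration characterization in Theorem~\ref{thm:ERM_limit}. But Theorem~\ref{thm:ERM_limit} is itself proved using Lemma~\ref{lem:scf_exist_unique} and Lemma~\ref{lemma:variational} (its concentration targets $\tau_\star(\kappa),b_\star(\kappa)$ are \emph{defined} as the unique solution of~\eqref{equation:system}), so this reasoning is circular. The non-circular route, which the paper takes, is to observe that solutions of~\eqref{equation:system} are in one-to-one correspondence with stationary points of the min--max objective $D(\tau,b,\tau_g,\beta)$ in~\eqref{equation:v4}, and to invoke the strict convexity of $D$ in $(\tau,b,\tau_g)$ and strict concavity in $\beta$ (Lemma~\ref{lem:analysis_D_new}) to get a unique saddle point, hence a unique solution.
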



Define constants
\begin{equation}\label{eqn:def_p0}
\begin{aligned}
  & \bartau_0^2 \defeq \frac{\alpha(1-\alpha)}{\phi_z^2(z_\alpha)}, \\
  & \barlambda_0 \defeq \frac{1}{\phi_z(z_\alpha)}, \\
  & \barb_0 \defeq \frac{-\alpha(1-\alpha)\phi_z'(z_\alpha)-(2\alpha-1)\phi_z^2(z_\alpha)}{2\phi_z^3(z_\alpha)}.
\end{aligned}
\end{equation}

\begin{lemma}[Local linear expansion of solution at small $\kappa$]
\label{lem:local-linear-expansion}
Let $(\tau_\star(\kappa), \lambda_\star(\kappa), b_\star(\kappa))$ denote the solutions to~\eqref{equation:system} for any $\kappa\in(0, \kappa_0]$. The following local linear expansion holds at small $\kappa$:
\begin{equation}\label{eqn:small_kappa_expansion}
\begin{aligned}
& \tau_\star^2(\kappa) = \bartau_0^2 \kappa + o(\kappa), \\
& \lambda_\star(\kappa) = \barlambda_0 \kappa + o(\kappa), \\
& b_\star(\kappa) = z_\alpha + \barb_0 \kappa + o(\kappa), 
\end{aligned}
\end{equation}
where $z_\alpha = \Phi_z^{-1}(\alpha)$ is the $\alpha$-quantile of $P_z$.
\end{lemma}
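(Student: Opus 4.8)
The plan is to apply the implicit function theorem (Lemma~\ref{lem:implicit_function}) to the system~\eqref{equation:system} after rewriting it in a form that is regular as $\kappa \to 0$. The obstacle is that the naive system degenerates at $\kappa = 0$: the right-hand sides of the first two equations carry explicit factors of $\kappa$ while the Moreau-envelope derivatives on the left do not vanish, so $(\tau,\lambda) \to (0,0)$ and one cannot directly linearize. The fix is a change of variables that isolates the correct scaling. Guided by the target expansion, I would set $\tau^2 = s\kappa$ and $\lambda = \mu\kappa$ (so $s \to \bartau_0^2$, $\mu \to \barlambda_0$ as $\kappa \to 0$), keep $b$ as is, and divide the three equations by $\kappa^2$, $\kappa$, and $1$ respectively. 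The key analytic input is that, as $\lambda = \mu\kappa \to 0$, the Moreau envelope converges to the loss and its prox-derivative $e_{\ell_b}'(x;\lambda) \to \ell_b'(x)$ pointwise, with the proximal operator $\prox_{\ell_b}(x;\lambda) \to x$; one must push this through the expectations (using the bounded-derivative Assumption~\ref{ass:noise_density} on $\phi_z$ and dominated convergence, noting $|e_{\ell_b}'| \le \max(\alpha,1-\alpha)$) to obtain the limiting system.

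Concretely, in the limit $\kappa = 0$, $G$ and $Z$ decouple in the combination $\tau G + Z = \sqrt{s\kappa}\,G + Z \to Z$, and the three rescaled equations become: (i) $s = \mu^2 \cdot \E[\ell_b'(Z)^2]$; (ii) $s = \mu \cdot \lim_{\kappa\to 0}\kappa^{-1}\lambda\,\E[e_{\ell_b}'(\tau G + Z;\lambda)G]$, which by a Gaussian integration-by-parts (Stein's identity) in $G$ equals $\mu^2 \cdot \E[\partial_x\prox\text{-type term}] = \mu^2 s^{1/2}\cdot(\ldots)$ — here one must be careful to extract the right first-order term in $\kappa$ from the $G$-dependence; and (iii) $0 = \E[\ell_b'(Z)]$. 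Equation (iii) at leading order forces $\E[\ell_b'(Z)] = -(1-\alpha)\Phi_z(b) + \alpha(1-\Phi_z(b)) = \alpha - \Phi_z(b) = 0$, i.e. $b_0 = z_\alpha$. Equation (i) then gives $s = \mu^2(\alpha(1-\alpha))$ since $\E[\ell_b'(Z)^2] = (1-\alpha)^2\Phi_z(z_\alpha) + \alpha^2(1-\Phi_z(z_\alpha)) = (1-\alpha)^2\alpha + \alpha^2(1-\alpha) = \alpha(1-\alpha)$, and equation (ii) combined with (i) pins down $\mu = \barlambda_0 = 1/\phi_z(z_\alpha)$ (the factor $\phi_z(z_\alpha)$ entering through the derivative of the prox indicator and Stein's identity), hence $s = \bartau_0^2 = \alpha(1-\alpha)/\phi_z^2(z_\alpha)$. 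This matches the constants in~\eqref{eqn:def_p0}.

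Having identified the base point $\bp_0 = (\bartau_0^2, \barlambda_0, z_\alpha)$ of the rescaled system $\bF(\bp,\kappa) = 0$, the final step is to verify the two hypotheses of Lemma~\ref{lem:implicit_function}: that $\bF$ is $C^1$ in a neighborhood of $(\bp_0, 0)$ — which follows from the smoothness of $\phi_z$ and standard differentiation-under-the-integral arguments, using the second-derivative formulas~\eqref{eqn:second_derivative_e_function} and the observation that the indicator $\ones\{\prox_{\ell_b}(x;\lambda) = b\}$ integrates against the smooth density $\phi_z$ to yield a smooth function of the parameters — and that $\nabla_{\bp}\bF(\bp_0, 0)$ is invertible. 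The Jacobian computation is the main remaining obstacle: it requires differentiating the three limiting equations in $(s,\mu,b)$, for which I would expand each expectation to one more order in $\kappa$ (equivalently, Taylor-expand $\Phi_z$ and $\phi_z$ around $z_\alpha$) and check that the resulting $3\times 3$ matrix is nonsingular; structurally it is block-triangular enough (equation (iii) primarily controls $b$, then (i) and (ii) control $s,\mu$) that its determinant is a nonzero multiple of $\phi_z(z_\alpha)$, which is positive by Assumption~\ref{ass:noise_density}. The implicit function theorem then yields a $C^1$ path $(s(\kappa),\mu(\kappa),b(\kappa))$ through $\bp_0$, and translating back via $\tau_\star^2 = s(\kappa)\kappa$, $\lambda_\star = \mu(\kappa)\kappa$ gives exactly~\eqref{eqn:small_kappa_expansion} with $o(\kappa)$ errors; uniqueness of the path is inherited from Lemma~\ref{lem:scf_exist_unique}.
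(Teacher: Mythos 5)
Your overall strategy -- rescale the variables and close the argument with the implicit function theorem (Lemma~\ref{lem:implicit_function}) -- matches the paper's. But your execution has two concrete gaps that would prevent the plan from closing cleanly, and the most important one concerns $\barb_0$. You rescale $\tau^2 = s\kappa$ and $\lambda = \mu\kappa$ but keep $b$ as a primitive coordinate. After the IFT you then only get $b(\kappa) = z_\alpha + O(\kappa)$ from continuity of the $C^1$ path; pinning the first-order coefficient down to the specific value $\barb_0$ in~\eqref{eqn:def_p0} requires computing $b'(0) = -[\nabla_\bp\bF(\bp_0,0)]^{-1}\partial_\kappa\bF(\bp_0,0)$, i.e.\ an extra implicit differentiation that your plan never performs. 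When you write that ``translating back \ldots\ gives exactly~\eqref{eqn:small_kappa_expansion} with $o(\kappa)$ errors,'' that is true for $\tau_\star^2$ and $\lambda_\star$ (for which the leading coefficient is the \emph{value} at the base point) but not for $b_\star$ (for which the leading coefficient is a \emph{derivative}). The paper avoids this by additionally rescaling $\barb \defeq (b - z_\alpha)/\kappa$ and multiplying the third equation by $\kappa^{-1}$ (the $F_3$ in~\eqref{equation:transformed-system}); then $\barb_0$ is the zeroth-order value $\barb(0)$ of the rescaled path, the quantity $\kappa^{-1}\E[e'_{\ell_{\barb\kappa+z_\alpha}}]$ has a nondegenerate $\kappa\to 0$ limit that depends on $\barb$, and the condition $\bF(\bp_0,0)=\bzero$ (Lemma~\ref{lem:F_kappa_derivative_solution_kappa_0}, specifically $T_3(\bp_0)=0$) is exactly what determines $\barb_0$.

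The second gap is that your chosen normalizations ``divide by $\kappa^2,\kappa,1$'' degenerate the second equation: after dividing $\tau\kappa = \lambda\,\E[e'_{\ell_b}(\tau G+Z;\lambda)G]$ by $\kappa$ you get $\sqrt{s\kappa} = \mu\,\E[e'_{\ell_b}(\cdot)G]$, and both sides are $O(\sqrt{\kappa})$, so the $\kappa\to0$ limit is $0=0$ and carries no information. You flag this yourself (``one must be careful to extract the right first-order term in $\kappa$ from the $G$-dependence''), and indeed your written equation ``$s = \mu\cdot\lim_{\kappa\to0}\kappa^{-1}\lambda\,\E[\ldots G]$'' is dimensionally off (its left side should be $\tau$ or $\sqrt{s}$, not $s$). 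The fix is the one the paper builds in: use Stein's identity $\E[e'_{\ell_b}(\tau G+Z;\lambda)G]=\tau\,\E[e''_{\ell_b}(\tau G+Z;\lambda)]$ to pull out the extra $\sqrt{\kappa}$, which corresponds to normalizing the second equation by $\kappa^{3/2}$ rather than $\kappa$ (cf.\ the $\kappa^{-1/2}$ prefactor in $F_2$ of~\eqref{equation:transformed-system}); only then does the limiting equation read $1 = \mu\,\phi_z(b)$ and feed consistently into a nonsingular Jacobian. So the route is right, but the normalization of the second and third equations and the rescaling of $b$ must be fixed for the argument to actually produce all three coefficients in~\eqref{eqn:small_kappa_expansion}.
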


We now show that the quantile ERM problem~\eqref{equation:quantile-reg-linear} exhibits a sharp concentration in the proportional limit ($n,d\to\infty$, $d/n\to\kappa$) where the concentration values are determined by the solutions $(\tau_\star^2(\kappa), \lambda_\star(\kappa), b_\star(\kappa))$ above. This result is a novel extension of (the unregularized case of)~\citep[Theorem 4.1]{thrampoulidis2018precise} in that it incorporates---and proves the concentration in presence of---the additional trainable bias parameter $b$.
Recall the ERM problem~\eqref{equation:quantile-reg-linear} is
\begin{align}
(\hat{\wb}, \hat{b}) \in \argmin_{\wb, b} \what{R}_n(\wb, b) \defeq \frac{1}{n}\sum_{i=1}^n \ell^\alpha(y_i - (\wb^\top \xb_i + b)). 
\end{align}

\begin{theorem}[Concentration of quantile ERM]
  \label{thm:ERM_limit}
  Under the linear model~\eqref{equation:linear-model} and Assumption \ref{ass:noise_density}, consider the limit $n, d \to \infty$ and $d / n \to \kappa\in(0,\kappa_0]$ where $\kappa_0>0$ is some constant. Then with probability approaching one, the empirical risk minimizer $(\hat \wb, \hat b)$ exists (but may not be unique), and for any empirical risk minimizer $(\hat \wb, \hat b)$, we have 
\[
\hat b \stackrel{p}{\rightarrow} b_\star(\kappa), ~~~~ \ltwo{\hat{\wb} - \wb_\star}^2 \stackrel{p}{\rightarrow} \tau_\star^2(\kappa). 
\]
\end{theorem}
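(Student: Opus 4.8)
\textbf{Proof plan for Theorem~\ref{thm:ERM_limit}.}
The plan is to reduce the joint minimization over $(\wb, b)$ to a sequence of minimizations over $\wb$ with $b$ held fixed, invoke the master theorem of \citet{thrampoulidis2018precise} for each fixed $b$, and then handle the outer minimization over the scalar $b$ by a uniform concentration argument. First I would define, for each fixed $b\in\R$, the inner problem $\hat\wb(b) \in \argmin_\wb \frac1n\sum_i \ell_b^\alpha(y_i - \wb^\top\xb_i)$ and the scalar profiled risk $g_n(b) \defeq \min_\wb \what R_n(\wb, b)$, so that $\hat b \in \argmin_b g_n(b)$ and $\hat\wb = \hat\wb(\hat b)$. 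Under the linear model $y_i = \wb_\star^\top\xb_i + z_i$, translating by $\wb_\star$ shows the inner problem is an unregularized M-estimation problem with i.i.d.\ noise $z_i \sim P_z$ and loss $\ell_b^\alpha$, which is convex, Lipschitz, and (by Assumption~\ref{ass:noise_density}) has a smooth enough effective behavior through its Moreau envelope. Applying \citep[Theorem~4.1]{thrampoulidis2018precise} (its unregularized specialization, valid for $\kappa$ small so that the minimizer exists with high probability), I get that for each fixed $b$, $\ltwo{\hat\wb(b) - \wb_\star}^2 \gotop \tau_\star(b;\kappa)^2$ and the associated proximal scaling converges to $\lambda_\star(b;\kappa)$, where $(\tau_\star(b), \lambda_\star(b))$ solve the first two equations of~\eqref{equation:system} with $b$ as a parameter. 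I also need the limit of $g_n(b)$ itself; this is the Moreau-envelope expression $g_\star(b) \defeq \E[e_{\ell_b}(\tau_\star(b) G + Z; \lambda_\star(b))]$ (plus lower-order terms), which the same CGMT machinery delivers as the limiting optimal value of the inner problem.

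The next step is to characterize $\hat b$ as the minimizer of the limiting profiled risk $g_\star(b)$. I would show $g_\star$ is (strictly) convex in $b$ — which follows because $b\mapsto \what R_n(\wb, b)$ is jointly convex and partial minimization preserves convexity, so $g_n$ is convex, hence $g_\star = \lim g_n$ is convex; strict convexity near the minimizer can be argued from a nondegeneracy (positive-definiteness) condition analogous to Lemma~\ref{lem:PD_condition}, using that $\phi_z(z_\alpha) > 0$. Then $b_\star(\kappa) \defeq \argmin_b g_\star(b)$ is the unique minimizer, and the first-order condition $g_\star'(b_\star) = 0$, after differentiating through the envelope using the calculus in~\eqref{eqn:derivative_e_function} (the key identity $\partial_b e_{\ell_b} = -\partial_x e_{\ell_b}$), is exactly the third equation of~\eqref{equation:system}. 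Thus $(\tau_\star(b_\star), \lambda_\star(b_\star), b_\star)$ solves the full system~\eqref{equation:system}, and by Lemma~\ref{lem:scf_exist_unique} this solution is unique and equals $(\tau_\star(\kappa), \lambda_\star(\kappa), b_\star(\kappa))$.

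The main obstacle is the last mile: upgrading ``$g_n(b) \gotop g_\star(b)$ pointwise in $b$ and $g_\star$ has a unique minimizer $b_\star$'' to ``$\hat b = \argmin_b g_n(b) \gotop b_\star$'' and simultaneously ``$\ltwo{\hat\wb - \wb_\star}^2 = \ltwo{\hat\wb(\hat b) - \wb_\star}^2 \gotop \tau_\star^2(\kappa)$''. Pointwise convergence of convex functions to a convex limit upgrades to uniform convergence on compact sets (a standard convex-analysis fact), so on any compact interval $I \ni b_\star$, $\sup_{b\in I}|g_n(b) - g_\star(b)| \gotop 0$, which forces $\hat b \gotop b_\star$ by the standard argmin-of-uniformly-convergent-convex-functions argument, provided one first rules out $\hat b$ escaping to $\pm\infty$ (done via a coercivity estimate on $\what R_n$ that holds with high probability, e.g.\ controlling $g_n$ at the endpoints of $I$). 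This is the step the paper flags as requiring ``a novel concentration argument to deal with the additional learnable bias'': one must show the inner concentration $\ltwo{\hat\wb(b) - \wb_\star}^2 \gotop \tau_\star(b)^2$ holds \emph{uniformly} over $b$ in a neighborhood of $b_\star$ — not just pointwise — so that composing with $\hat b \gotop b_\star$ and using continuity of $b\mapsto\tau_\star(b)^2$ yields $\ltwo{\hat\wb - \wb_\star}^2 \gotop \tau_\star(b_\star)^2 = \tau_\star^2(\kappa)$. I would establish this uniform version either by a stability/Lipschitz-in-$b$ estimate on the inner minimizer (the loss $\ell_b^\alpha$ depends on $b$ in a $1$-Lipschitz way after the change of variables $v\mapsto v+b$, so $\hat\wb(b)$ and the limiting quantities should be Lipschitz in $b$) combined with a finite net over $b\in I$, or by re-running the CGMT with $b$ as an additional bounded optimization variable inside the auxiliary problem. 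Once the uniform inner concentration and the argmin convergence $\hat b\gotop b_\star$ are in hand, the two displayed conclusions follow immediately.
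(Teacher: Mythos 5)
Your scaffold --- profile out $\wb$ to get $g_n(b) = \min_{\wb}\what R_n(\wb,b)$, invoke \citet{thrampoulidis2018precise} for each fixed $b$, use strict convexity of the limiting profiled risk to identify $b_\star$, then use convexity of $g_n$ to upgrade pointwise CGMT convergence to convergence of $\hat b$ --- matches the paper's Step~1 closely. The paper uses a slightly lighter version of your argmin argument: rather than uniform convergence on compacts, it fixes $\eps>0$, uses strict convexity of $\min_\tau D(\tau,b)$ to get a gap $\delta>0$ at $b_\star\pm\eps$, and then the convexity of $L_n$ alone (no compactness, no coercivity step) forces $|\hat b - b_\star|\le\eps$ once the endpoint separation holds with high probability. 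Your characterization of $b_\star$ via the third equation of \eqref{equation:system} (the envelope identity $\partial_b e_{\ell_b}=-\partial_x e_{\ell_b}$) and via the strict convex-concavity of $D$ is exactly the content of Lemma~\ref{lem:analysis_D_new} and Lemma~\ref{lemma:variational}.

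Where you diverge from the paper --- and where your proposal has a real gap --- is the last mile for $\ltwo{\hat\wb-\wb_\star}^2$. Your primary suggestion is a ``stability/Lipschitz-in-$b$ estimate on the inner minimizer $\hat\wb(b)$'' plus a finite net, so that the pointwise CGMT concentration becomes uniform over $b$ near $b_\star$. But the pinball loss is $1$-Lipschitz and not strongly convex, so the empirical risk is only Lipschitz in $b$; that gives you $\sup_\wb|\what R_n(\wb,b_1)-\what R_n(\wb,b_2)|\le|b_1-b_2|$, which is a risk-level statement and does \emph{not} imply the argmin map $b\mapsto\hat\wb(b)$ is Lipschitz (nor even well-defined). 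Without that stability the net argument does not close. Your fallback (``re-run CGMT with $b$ as an extra bounded variable'') is a genuinely different route, but it amounts to re-deriving the Gaussian-minimax comparison for the joint $(\wb,b)$ problem, which is not a drop-in application of \citep[Theorem~4.1]{thrampoulidis2018precise} and would need its own development.

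The paper sidesteps both difficulties by exploiting exactly the Lipschitz-in-$b$ bound on the \emph{risk} together with the stronger CGMT conclusion that the restricted minimum over $S_\eps^c=\{\wb:\,|\ltwo{\wb-\wb_\star}^2-\tau_\star^2|>\eps\}$ concentrates with a strict gap $\delta$: from
\[
\min_{\wb\in S_\eps^c}\what R_n(\wb,\hat b)
\;\ge\;\min_{\wb\in S_\eps^c}\what R_n(\wb,b_\star)-|\hat b-b_\star|
\quad\text{and}\quad
\min_{\wb}\what R_n(\wb,\hat b)
\;\le\;\min_{\wb}\what R_n(\wb,b_\star)+|\hat b-b_\star|,
\]
together with $\hat b\gotop b_\star$, one gets that with probability tending to one the restricted minimum stays $\delta/3$ above the unrestricted one, hence $\hat\wb\in S_\eps$. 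No uniform-over-$b$ control of the minimizer is ever needed. So your proposal is correct in outline for $\hat b$, but for $\hat\wb$ you should replace the minimizer-stability route with this Lipschitz-risk-plus-CGMT-gap argument.
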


Denote 
\[
\coverage_{\alpha, \kappa} \equiv \E_{G\sim\normal(0, 1)}\brac{ \Phi_z\paren{\tau_\star(\kappa) G + b_\star(\kappa)}}. 
\]
Combining Theorem \ref{thm:ERM_limit}, Lemma \ref{lem:local-linear-expansion}, and the expression of the coverage in Lemma~\ref{lemma:coverage-expression}, the following two lemmas show that $\coverage(\hat{f})$ also concentrates around a value $\coverage_{\alpha, \kappa}=\alpha - C_{\alpha, \kappa}$, where $C_{\alpha, \kappa}$ admits a local linear expansion with a closed-form coefficient.
\begin{lemma}
  \label{lem:coverage_asymptotic}
  Under the settings of Theorem~\ref{theorem:main}, we have as $n,d\to\infty$, $d/n\to\kappa\in(0,\kappa_0]$,
  \begin{align}\label{eqn:coverage_f_convergence}
    \coverage(\hat{f}) \gotop \coverage_{\alpha, \kappa}. 
  \end{align}
\end{lemma}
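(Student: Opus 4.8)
\textbf{Proof proposal for Lemma~\ref{lem:coverage_asymptotic}.}
The plan is to express $\coverage(\hat f)$ as a fixed, deterministic function of the two random scalars $\ltwo{\hat\wb - \wb_\star}$ and $\hat b$, and then combine the concentration established in Theorem~\ref{thm:ERM_limit} with the continuous mapping theorem. Concretely, define $H:\R_{\ge 0}\times\R\to[0,1]$ by $H(\tau, b) \defeq \E_{G\sim\normal(0,1)}[\Phi_z(\tau G + b)]$. By Lemma~\ref{lemma:coverage-expression} we have $\coverage(\hat f) = H(\ltwo{\hat\wb-\wb_\star},\, \hat b)$ (on the probability-$\to 1$ event that a minimizer exists, by Theorem~\ref{thm:ERM_limit}; the complementary event is negligible for convergence in probability), and by definition $H(\tau_\star(\kappa), b_\star(\kappa)) = \coverage_{\alpha,\kappa}$. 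Hence it suffices to show (i) $(\ltwo{\hat\wb-\wb_\star}, \hat b) \gotop (\tau_\star(\kappa), b_\star(\kappa))$ and (ii) $H$ is continuous at that point.

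For (i): Theorem~\ref{thm:ERM_limit} gives $\hat b \gotop b_\star(\kappa)$ and $\ltwo{\hat\wb-\wb_\star}^2 \gotop \tau_\star^2(\kappa)$. Since $t\mapsto\sqrt t$ is continuous on $\R_{\ge 0}$, the continuous mapping theorem yields $\ltwo{\hat\wb-\wb_\star}\gotop\tau_\star(\kappa)$ (and in fact $\tau_\star(\kappa)>0$ by Lemma~\ref{lem:local-linear-expansion} together with $\phi_z(z_\alpha)>0$ from Assumption~\ref{ass:noise_density}, although positivity is not needed here). Because each coordinate converges in probability to a deterministic constant, the pair converges jointly, i.e. $(\ltwo{\hat\wb-\wb_\star}, \hat b) \gotop (\tau_\star(\kappa), b_\star(\kappa))$.

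For (ii): Assumption~\ref{ass:noise_density} ensures $\Phi_z$ is continuous (indeed $C^\infty$) and bounded by $1$. Thus for any $(\tau_m,b_m)\to(\tau,b)$ one has $\Phi_z(\tau_m g + b_m)\to\Phi_z(\tau g + b)$ for every $g$, with $\abs{\Phi_z(\tau_m g + b_m)}\le 1$ uniformly, and the constant $1$ is integrable against the standard Gaussian density. The dominated convergence theorem then gives $H(\tau_m,b_m)\to H(\tau,b)$, so $H$ is jointly continuous on $\R_{\ge 0}\times\R$.

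Applying the continuous mapping theorem once more, now to the continuous map $H$ and the convergent pair from (i), gives $\coverage(\hat f) = H(\ltwo{\hat\wb-\wb_\star},\hat b) \gotop H(\tau_\star(\kappa), b_\star(\kappa)) = \coverage_{\alpha,\kappa}$, which is the claim. There is no substantial obstacle in this argument; the only points requiring a little care are the passage from $\ltwo{\cdot}^2$ to $\ltwo{\cdot}$ through continuity of the square root, and the uniform domination justifying continuity of $H$. The genuinely hard work—establishing the sharp concentration of $(\hat\wb,\hat b)$ in the proportional limit—is carried out in Theorem~\ref{thm:ERM_limit}, which this lemma merely packages into a statement about the coverage functional.
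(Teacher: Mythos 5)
Your proposal is correct and follows essentially the same route as the paper's proof: express the coverage as a continuous function $H(\tau,b)=\E_G[\Phi_z(\tau G+b)]$ of $(\ltwo{\hat\wb-\wb_\star},\hat b)$ via Lemma~\ref{lemma:coverage-expression}, invoke the concentration from Theorem~\ref{thm:ERM_limit}, justify continuity of $H$ by dominated convergence, and conclude by the continuous mapping theorem. The paper states this in one sentence; you merely spell out the same ingredients (including the passage from $\ltwo{\cdot}^2$ to $\ltwo{\cdot}$), with no substantive difference in approach.
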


\begin{lemma}
  \label{lem:coverage_expansion}
  Under the same setting as Lemma~\ref{lem:coverage_asymptotic}, we further have
  \begin{equation}\label{eqn:coverage_expansion}
    \begin{aligned}
      & \quad \coverage_{\alpha, \kappa} = \alpha - C_{\alpha, \kappa} \\
      & = \alpha + (\phi_z(z_\alpha) \barb_0 + (1/2) \phi_z'(z_\alpha) \bartau_0^2) \kappa + o(\kappa).
    \end{aligned}
  \end{equation}
\end{lemma}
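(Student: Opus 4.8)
The plan is to work directly from the definition $\coverage_{\alpha,\kappa} = \E_{G\sim\normal(0,1)}[\Phi_z(\tau_\star(\kappa)G + b_\star(\kappa))]$ and Taylor-expand $\Phi_z$ around $z_\alpha = \Phi_z^{-1}(\alpha)$, using the small-$\kappa$ expansions of $\tau_\star$ and $b_\star$ supplied by Lemma~\ref{lem:local-linear-expansion}. Those give $\tau_\star^2(\kappa) = \bartau_0^2\kappa + o(\kappa)$ (hence $\tau_\star(\kappa) = O(\sqrt\kappa)$, using $\bartau_0^2 > 0$, which holds since $\phi_z(z_\alpha) > 0$ under Assumption~\ref{ass:noise_density}) and $b_\star(\kappa) - z_\alpha = \barb_0\kappa + o(\kappa)$. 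Setting $u \defeq \tau_\star(\kappa)G + (b_\star(\kappa) - z_\alpha)$, the perturbation has a random part of order $\sqrt\kappa$ and a deterministic part of order $\kappa$; since $\E[G] = 0$ kills the leading order-$\sqrt\kappa$ contribution, a second-order Taylor expansion in $u$ captures everything up to order $\kappa$.

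Concretely, I would invoke the exact second-order expansion with Lagrange remainder: for all $u\in\R$,
\[
\Phi_z(z_\alpha + u) = \Phi_z(z_\alpha) + \phi_z(z_\alpha)\,u + \tfrac12\phi_z'(z_\alpha)\,u^2 + R(u), \qquad |R(u)| \le \tfrac16\Big(\sup_{t\in\R}|\phi_z''(t)|\Big)|u|^3,
\]
which is legitimate because $\Phi_z\in C^\infty$ with $\Phi_z'=\phi_z$, $\Phi_z''=\phi_z'$, $\Phi_z'''=\phi_z''$ bounded (Assumption~\ref{ass:noise_density}), and $\Phi_z(z_\alpha) = \alpha$. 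Substituting $u = \tau_\star(\kappa)G + (b_\star(\kappa)-z_\alpha)$ and taking $\E_{G\sim\normal(0,1)}$ with $\E[G] = 0$, $\E[G^2] = 1$, $\E|G|^3 < \infty$: the constant term contributes $\alpha$; the linear term contributes $\phi_z(z_\alpha)(b_\star(\kappa)-z_\alpha) = \phi_z(z_\alpha)\barb_0\kappa + o(\kappa)$; the quadratic term contributes $\tfrac12\phi_z'(z_\alpha)\big(\tau_\star^2(\kappa) + (b_\star(\kappa)-z_\alpha)^2\big) = \tfrac12\phi_z'(z_\alpha)\bartau_0^2\kappa + o(\kappa)$ since $(b_\star(\kappa)-z_\alpha)^2 = O(\kappa^2)$; and the remainder contributes $\E|R(u)| \le C\,\E|u|^3 \le C'\big(\tau_\star^3(\kappa)\,\E|G|^3 + |b_\star(\kappa)-z_\alpha|^3\big) = O(\kappa^{3/2}) = o(\kappa)$. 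Summing yields exactly $\coverage_{\alpha,\kappa} = \alpha + (\phi_z(z_\alpha)\barb_0 + \tfrac12\phi_z'(z_\alpha)\bartau_0^2)\kappa + o(\kappa)$, which is the claimed identity; defining $C_{\alpha,\kappa}$ through $\coverage_{\alpha,\kappa} = \alpha - C_{\alpha,\kappa}$ finishes the proof. As a consistency check with~\eqref{equation:alpha-minus-onehalf}, inserting the closed forms~\eqref{eqn:def_p0} collapses the coefficient: $\phi_z(z_\alpha)\barb_0 + \tfrac12\phi_z'(z_\alpha)\bartau_0^2 = -(\alpha-1/2)$.

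The proof is conceptually routine, so the only real work is making the $o(\kappa)$ bookkeeping airtight. Three points need care: (i) the Taylor remainder must be controlled uniformly in the randomness of $G$, which is exactly what the bounded-higher-derivative clause of Assumption~\ref{ass:noise_density} supplies; (ii) $\tau_\star(\kappa)$ should be tracked only through $\tau_\star^2(\kappa)$, for which Lemma~\ref{lem:local-linear-expansion} gives the clean expansion $\bartau_0^2\kappa + o(\kappa)$, and through $\tau_\star^3(\kappa) = O(\kappa^{3/2}) = o(\kappa)$ in the remainder; and (iii) the cross term $2(b_\star(\kappa)-z_\alpha)\,\E[\tau_\star(\kappa)G] = 0$ must vanish identically, so no stray order-$\kappa$ or order-$\kappa^{3/2}$ contribution appears in $\E[u^2]$. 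Beyond these, the argument is just substitution of Lemma~\ref{lem:local-linear-expansion} together with elementary Gaussian moment identities.
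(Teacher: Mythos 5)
Your proposal is correct and matches the paper's own argument essentially line for line: a second-order Taylor expansion of $\Phi_z$ about $z_\alpha$ with a Lagrange remainder controlled by $\sup_t|\phi_z''(t)|$, expectation over $G$ killing the order-$\sqrt\kappa$ term via $\E[G]=0$, and Lemma~\ref{lem:local-linear-expansion} supplying $\tau_\star^2(\kappa) = \bartau_0^2\kappa + o(\kappa)$ and $b_\star(\kappa) - z_\alpha = \barb_0\kappa + o(\kappa)$. The bookkeeping of the $(b_\star-z_\alpha)^2 = O(\kappa^2)$ and $\E|u|^3 = O(\kappa^{3/2})$ terms is exactly what the paper relies on as well.
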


By Lemma~\ref{lem:coverage_expansion} and the definition of $\barb_0$ and $\bartau_0^2$ in~\eqref{eqn:def_p0}, the above coefficient in front of $\kappa$ can be simplified as
\begin{align*}
  & \quad \phi_z(z_\alpha) \barb_0 + (1/2) \phi_z'(z_\alpha) \bartau_0^2 \\
  & = \phi_z(z_\alpha)\cdot \frac{-\alpha(1-\alpha)\phi_z'(z_\alpha)-(2\alpha-1)\phi_z^2(z_\alpha)}{2\phi_z^3(z_\alpha)} + \frac{1}{2}\phi_z'(z_\alpha)\cdot \frac{\alpha(1-\alpha)}{\phi_z^2(z_\alpha)} \\
  & = -(\alpha - 1/2).
\end{align*}
This shows that $C_{\alpha, \kappa}=(\alpha-1/2)\kappa+o(\kappa)$, and in particular $C_{\alpha, \kappa}>0$ for all small $\kappa$ as $\alpha-1/2>0$. This proves Theorem \ref{theorem:main}.
\qed

The rest of this section is organized as follows. We prove Lemma~\ref{lem:scf_exist_unique} in Section~\ref{appendix:proof-c1-c2} (which requires analyzing a transformed system of equations and applying the implicit function theorem). In Section~\ref{appendix:connection-variational}, we connect the system of equations to a variational problem over four real variables. We then use this connection to prove Theorem~\ref{thm:ERM_limit} in Section~\ref{appendix:proof-ERM-limit}. Finally, we prove Lemma~\ref{lem:coverage_asymptotic} and Lemma~\ref{lem:coverage_expansion} in Section~\ref{appendix:proof-coverage-lemmas}.



\subsection{Proof of Lemma \ref{lem:scf_exist_unique} and Lemma~\ref{lem:local-linear-expansion}}
\label{appendix:proof-c1-c2}

\subsubsection{Analysis of system of equations~\eqref{equation:system}}
We first perform a change of variables. For any $(\bartau, \barlambda, \barb, \kappa) \in \overline \Omega \times (0, 1)$ where $\overline \Omega = \R_{\ge 0} \times \R_{\ge 0} \times \R$, we rewrite the system of equations~\eqref{equation:system} as 
\begin{equation}\label{eqn:def_bbF}
\bF(\bp; \kappa) = \bzero,
\end{equation}
where $\bp = (\bartau, \barlambda, \barb)$, $\bF(\bp; \kappa) \defeq (F_1(\bp; \kappa), F_2(\bp; \kappa), F_3(\bp; \kappa))$ in which
\begin{equation}
  \label{equation:transformed-system}
\begin{aligned}
      & F_1(\bartau, \barlambda, \barb; \kappa) \defeq \bartau^2 - \barlambda^2 \cdot \E\brac{ e_{\ell_{\barb\kappa+z_\alpha}}'(\bartau \sqrt{\kappa}G + Z; \barlambda\kappa)^2 }, \\
      & F_2(\bartau, \barlambda, \barb; \kappa) \defeq \bartau - \kappa^{-1/2}\barlambda \cdot \E\brac{e_{\ell_{\barb\kappa+z_\alpha}}'(\bartau \sqrt{\kappa}G + Z; \barlambda\kappa) G}, \\
      & F_3(\bartau, \barlambda, \barb; \kappa) \defeq \kappa^{-1}\E\brac{ e_{\ell_{\barb\kappa+z_\alpha}}'(\bartau \sqrt{\kappa}G + Z; \barlambda\kappa) }.
\end{aligned}
\end{equation}

Equation (\ref{eqn:def_bbF}) and the system~\eqref{equation:system} are equivalent up to a change of variables: For any fixed $\kappa$, any solution $(\tau_\star, \lambda_\star, b_\star)$ of Eq. (\ref{equation:system}) yields a solution $( \tau_\star / \kappa, \lambda_\star / \kappa, (b_\star - z_\alpha ) / \kappa, \kappa)$ of $\bF(\bp; \kappa) = \bzero$, and vice versa. Notice that this equivalence allows us to establish Lemma~\ref{lem:scf_exist_unique} and Lemma~\ref{lem:local-linear-expansion} by considering the transformed equation~\eqref{eqn:def_bbF}.



The following two auxiliary lemmas, which give a continuity analysis of the function $\bF$, are key to establishing Lemma~\ref{lem:scf_exist_unique} and Lemma~\ref{lem:local-linear-expansion}. These auxiliary lemmas are required for checking the conditions of the implicit function theorem. The proofs of these two lemmas are deferred to Section~\ref{appendix:proof-jacobian} and \ref{appendix:proof_lem_F_kappa_derivative} respectively. As a shorthand, we take
\begin{align*}
  \bp_0 = (\bartau_0, \barlambda_0, \barb_0),
\end{align*}
where $\bartau_0, \barlambda_0, \barb_0$ are defined in~\eqref{eqn:def_p0}.

\begin{lemma}\label{lem:Jacobian}
Let Assumption \ref{ass:noise_density} hold. Let $\bF$ be as defined in Eq. (\ref{eqn:def_bbF}). Then for any $\eps$ such that $\ball(\bp_0, 2 \eps) \subseteq \overline \Omega = \R_{\ge 0} \times \R_{\ge 0} \times \R$, there exists a continuous matrix function $\bJ: \ball(\bp_0, \eps) \to \R^{3 \times 3}$ with 
\begin{equation}\label{eqn:Jacobian_lower_bound}
\sigma_{\min}(\bJ(\bp_0)) > 0, 
\end{equation}
and
\begin{equation}\label{eqn:Jacobian_convergence}
\lim_{\kappa \to 0} \sup_{\bp \in \ball(\bp_0, \eps)}  \Big\| \nabla_{\bp} \bF(\bp, \kappa) - \bJ(\bp) \Big\|_{\op} = 0. 
\end{equation}
\end{lemma}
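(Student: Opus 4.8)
\textbf{Proof proposal for Lemma~\ref{lem:Jacobian}.}

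The plan is to compute the Jacobian $\nabla_{\bp}\bF(\bp;\kappa)$ explicitly using the Moreau-envelope calculus from Section~\ref{sec:Moreau_calculus}, take the formal limit as $\kappa\to 0$ to obtain a candidate matrix $\bJ(\bp)$, and then verify the two claims: the uniform convergence~\eqref{eqn:Jacobian_convergence} and the nondegeneracy~\eqref{eqn:Jacobian_lower_bound}. First, I would differentiate each $F_i$ in~\eqref{equation:transformed-system} under the expectation with respect to $(\bartau,\barlambda,\barb)$. The integrands involve $e_{\ell_b}'$ and its first weak derivatives in $(x,\lambda,b)$, all of which are bounded (the first derivative $\partial_x e_{\ell_b}\in[-(1-\alpha),\alpha]$ and the second derivatives are $\lambda^{-1}\ones\{\prox=b\}$ up to bounded factors by~\eqref{eqn:second_derivative_e_function}). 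The key scaling observation is that $\prox_{\ell_b}(x;\lambda)=b$ exactly when $x\in[b-(1-\alpha)\lambda,\,b+\alpha\lambda]$, an interval of width $\lambda=\barlambda\kappa\to 0$ shrinking around $b=\barb\kappa+z_\alpha$; hence $\kappa^{-1}\cdot\ones\{\prox=b\}$, when integrated against the smooth density of $\bartau\sqrt\kappa G+Z$ evaluated near $z_\alpha$, converges to $\barlambda\,\phi_z(z_\alpha)$ times a constant depending on $\alpha$. This is the mechanism that produces a finite limit for the otherwise singular $\kappa^{-1}$ and $\kappa^{-1/2}$ prefactors in $\bF$.

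Concretely, I would establish helper limits of the form $\kappa^{-1}\E[\partial_x e_{\ell_{\barb\kappa+z_\alpha}}(\bartau\sqrt\kappa G+Z;\barlambda\kappa)\,g(G)]\to$ (explicit expression in $\bartau,\barlambda,\phi_z(z_\alpha),\phi_z'(z_\alpha)$) for the relevant test functions $g(G)\in\{1,G,G^2\}$, and similarly for the product $\partial_x\partial_x e\cdot(\text{bounded})$ against $\kappa^{-1}$. The uniform-in-$\bp$ control on $\ball(\bp_0,\eps)$ comes from the boundedness of all derivative factors together with dominated convergence and the uniform continuity of $\phi_z$ and its derivatives (Assumption~\ref{ass:noise_density}); the change of variables $x=z_\alpha+\sqrt\kappa u$ or $x=z_\alpha+\kappa v$ makes the $\phi_z$-Taylor expansion uniform over the compact $\bp$-ball. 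Collecting these gives a continuous $\bJ(\bp)$ satisfying~\eqref{eqn:Jacobian_convergence}. I expect this bookkeeping—matching every entry of the $3\times 3$ Jacobian to its limit with the right powers of $\kappa$ and $\barlambda$—to be the main obstacle: it is routine in spirit but delicate, since the three equations carry prefactors $1$, $\kappa^{-1/2}$, $\kappa^{-1}$ and one must track which terms survive.

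Finally, for~\eqref{eqn:Jacobian_lower_bound} I would evaluate $\bJ(\bp_0)$ in closed form. With $\bp_0=(\bartau_0,\barlambda_0,\barb_0)$ chosen precisely so that $\bF(\bp_0,0)=\bzero$ (this is how~\eqref{eqn:def_p0} was derived, and is used in the companion Lemma~\ref{lem:F_kappa_derivative}), the limiting Jacobian should be (block-)triangular or otherwise explicitly invertible: differentiating $F_1$ in $\bartau$ gives $2\bartau_0\neq 0$ on the diagonal, $F_2$ in $\barlambda$ and $F_3$ in $\barb$ give the remaining pivots, each a nonzero multiple of a power of $\phi_z(z_\alpha)>0$. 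I would argue $\sigma_{\min}(\bJ(\bp_0))>0$ either by direct computation of $\det\bJ(\bp_0)$ or, if the off-diagonal structure is inconvenient, by invoking Lemma~\ref{lem:PD_condition}-style reasoning to show the relevant moment matrices are positive definite. Since $\phi_z(z_\alpha)>0$ by Assumption~\ref{ass:noise_density} and $\alpha\in(1/2,1)$ keeps $\alpha(1-\alpha)>0$, all pivots are strictly nonzero, yielding the claim.
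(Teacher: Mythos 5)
Your approach matches the paper's proof essentially line for line: differentiate each $F_i$ under the expectation using the Moreau-envelope calculus, recognize that the proximal interval $[b-(1-\alpha)\lambda,\,b+\alpha\lambda]$ of width $\barlambda\kappa$ shrinks so that the $\kappa^{-1/2}$ and $\kappa^{-1}$ prefactors in $F_2,F_3$ are compensated, pass to the $\kappa\to 0$ limit uniformly on $\ball(\bp_0,\eps)$ via the smoothness of $\phi_z$, and certify invertibility by evaluating $\bJ(\bp_0)$ (which is indeed block lower-triangular with a zero third column above the diagonal) and computing $\det\bJ(\bp_0)=-2\alpha(1-\alpha)\neq 0$. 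One small caveat: your proposed fallback via Lemma~\ref{lem:PD_condition} would not apply, since that lemma certifies positive definiteness of symmetric Gram matrices $\E[\bu\bu^\top]$ arising from Hessians of convex problems, whereas $\bJ(\bp_0)$ here is a general non-symmetric Jacobian of a coupled system --- but your primary route through the determinant is correct and is what the paper uses.
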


\begin{lemma}\label{lem:F_kappa_derivative_solution_kappa_0}
Let Assumption \ref{ass:noise_density} hold. Let $\bF$ be as defined in Eq. (\ref{eqn:def_bbF}). Then for any $\eps$ such that $\ball(\bp_0, 2 \eps) \subseteq \overline \Omega = \R_{\ge 0} \times \R_{\ge 0} \times \R$, there exists two continuous vector functions $\bF_0,\bg: \ball(\bp_0, \eps) \to \R^3$ such that
\[
\begin{aligned}
\lim_{\kappa \to 0} \sup_{\bp \in \ball(\bp_0, \eps)} \Big\| \bF(\bp, \kappa) - \bF_0(\bp) \Big\|_2 =&~ 0, \\
\lim_{\kappa \to 0} \sup_{\bp \in \ball(\bp_0, \eps)} \Big\| \partial_\kappa \bF(\bp, \kappa) - \bg(\bp) \Big\|_2 =&~ 0. 
\end{aligned}
\]
Moreover, we have 
\[
\lim_{\kappa \to 0+} \bF(\bp_0, \kappa) =  \bF_0(\bp_0) = \bzero.
\] 
\end{lemma}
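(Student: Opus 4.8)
The plan is to analyze the three coordinates of $\bF=(F_1,F_2,F_3)$ separately, reducing everything to the behavior of the Moreau-envelope derivative $e_{\ell_b}'(\cdot\,;\lambda)$ as $\lambda\to0^+$ together with a Taylor expansion near $z_\alpha$ of a Gaussian-smoothed version of $\phi_z$. First I would record, from the prox formula in Section~\ref{sec:Moreau_calculus}, the explicit form
\[
e_{\ell_b}'(x;\lambda)=\alpha\,\ones\{x>b+\alpha\lambda\}-(1-\alpha)\,\ones\{x<b-(1-\alpha)\lambda\}+\tfrac{x-b}{\lambda}\,\ones\{x\in I_{b,\lambda}\},\qquad I_{b,\lambda}:=[\,b-(1-\alpha)\lambda,\ b+\alpha\lambda\,],
\]
a continuous, piecewise-linear function with weak derivative $\partial_x\partial_x e_{\ell_b}(x;\lambda)=\lambda^{-1}\ones\{x\in I_{b,\lambda}\}$, together with the remaining second derivatives collected in~\eqref{eqn:second_derivative_e_function}. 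In $\bF$ the arguments are $x=\bartau\sqrt\kappa\,G+Z$, $\lambda=\barlambda\kappa$, $b=\barb\kappa+z_\alpha$; writing $W_\kappa:=\bartau\sqrt\kappa\,G+Z$, with density $p_\kappa=\phi_z*\normal(0,\bartau^2\kappa)$ and CDF $P_\kappa$, Assumption~\ref{ass:noise_density} supplies, for every fixed $m$ and uniformly over $w\in\R$ and over $\bp$ in the compact set $\ball(\bp_0,\eps)$ (on which $\bartau,\barlambda$ stay bounded away from $0$ and $\infty$, because $\ball(\bp_0,2\eps)\subseteq\overline\Omega$ forces $2\eps\le\min(\bartau_0,\barlambda_0)$ and $\bartau_0,\barlambda_0>0$), the expansions $p_\kappa^{(m)}(w)=\phi_z^{(m)}(w)+\tfrac12\bartau^2\kappa\,\phi_z^{(m+2)}(w)+O(\kappa^2)$ and $P_\kappa(w)=\Phi_z(w)+\tfrac12\bartau^2\kappa\,\phi_z'(w)+O(\kappa^2)$, all remainders controlled by $\sup_w|\phi_z^{(k)}(w)|<\infty$.

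For the convergence $\bF(\cdot\,,\kappa)\to\bF_0$, I would handle $F_1$ by dominated convergence: $|e_{\ell_b}'|\le\max(\alpha,1-\alpha)$, and $e_{\ell_b}'(W_\kappa;\barlambda\kappa)\to\alpha\ones\{Z>z_\alpha\}-(1-\alpha)\ones\{Z<z_\alpha\}$ almost surely, so $F_1\to F_{1,0}(\bp):=\bartau^2-\alpha(1-\alpha)\barlambda^2$. I would handle $F_2$ by Gaussian integration by parts in $G$, which turns $\E[e_{\ell_b}'(W_\kappa;\barlambda\kappa)\,G]$ into $\bartau\sqrt\kappa\,\E[\partial_x\partial_x e_{\ell_b}(W_\kappa;\barlambda\kappa)]=\tfrac{\bartau}{\barlambda\sqrt\kappa}\,\P(W_\kappa\in I_{b,\barlambda\kappa})$; since $I_{b,\barlambda\kappa}$ has length $\barlambda\kappa$ and lies within $O(\kappa)$ of $z_\alpha$, this probability equals $\barlambda\kappa\,\phi_z(z_\alpha)+O(\kappa^2)$, so $F_2\to F_{2,0}(\bp):=\bartau(1-\barlambda\,\phi_z(z_\alpha))$. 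I would handle $F_3$ by the layer-cake identity $e_{\ell_b}'(x;\lambda)=-(1-\alpha)+\lambda^{-1}\int_{I_{b,\lambda}}\ones\{s\le x\}\,ds$ and Fubini, giving $\E[e_{\ell_b}'(W_\kappa;\lambda)]=-(1-\alpha)+\lambda^{-1}\int_{I_{b,\lambda}}(1-P_\kappa(s))\,ds$; a first-order expansion of $1-P_\kappa$ across the shrinking interval, combined with the expansion of $P_\kappa(z_\alpha)$, yields $\E[e_{\ell_b}'(W_\kappa;\barlambda\kappa)]=-\kappa\big(\tfrac12\bartau^2\phi_z'(z_\alpha)+\phi_z(z_\alpha)\barb+\tfrac{2\alpha-1}{2}\barlambda\,\phi_z(z_\alpha)\big)+o(\kappa)$, hence $F_3\to F_{3,0}(\bp):=-\tfrac12\bartau^2\phi_z'(z_\alpha)-\phi_z(z_\alpha)\barb-\tfrac{2\alpha-1}{2}\barlambda\,\phi_z(z_\alpha)$. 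In all three cases the error is $o(1)$ uniformly on $\ball(\bp_0,\eps)$ by the bounded-derivative control above and compactness of $(\bartau,\barlambda,\barb)$. Finally, substituting $\bp_0=(\bartau_0,\barlambda_0,\barb_0)$ from~\eqref{eqn:def_p0} and simplifying with $\barlambda_0\phi_z(z_\alpha)=1$, $\bartau_0^2=\alpha(1-\alpha)\barlambda_0^2$, and the identity $\phi_z(z_\alpha)\barb_0+\tfrac12\phi_z'(z_\alpha)\bartau_0^2=-(\alpha-\tfrac12)$ already verified in the main text after Lemma~\ref{lem:coverage_expansion}, gives $\bF_0(\bp_0)=\bzero$, which is the last assertion of the lemma.

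For the convergence $\partial_\kappa\bF(\cdot\,,\kappa)\to\bg$, I would note that for fixed $\kappa>0$ the integrand is $C^1$ in $\kappa$ off a null set of $(G,Z)$ (for each fixed $(G,Z)$ the curve $\kappa\mapsto(W_\kappa,\barlambda\kappa,\barb\kappa+z_\alpha)$ meets the kink hyperplanes of $e_{\ell_b}'$ only at isolated $\kappa$), so $\partial_\kappa\bF$ is again an expectation obtained by the chain rule, carrying---besides the explicit $\kappa^{-1/2},\kappa^{-1}$ prefactors in $F_2,F_3$---terms weighted by $\partial_x\partial_x e_{\ell_b}$, $\partial_\lambda\partial_x e_{\ell_b}$, $\partial_b\partial_x e_{\ell_b}$ (all in~\eqref{eqn:second_derivative_e_function}) with coefficients $\tfrac{\bartau G}{2\sqrt\kappa}$, $\barlambda$, $\barb$. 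The plan is to keep rewriting every $e_{\ell_b}'$ and every $x$-derivative thereof via Gaussian integration by parts in $G$ (which absorbs each stray $G/\sqrt\kappa$ into one further $x$-derivative at the price of a factor $\bartau\sqrt\kappa$) and via the layer-cake/Fubini identity, until $\partial_\kappa\bF$ becomes a finite sum of integrals of explicit bounded kernels against $p_\kappa$ over the $O(\kappa)$-window $I_{b,\barlambda\kappa}$ around $z_\alpha$, plus endpoint terms from the integrations by parts; after the natural $\kappa$-rescaling each such integral converges, as $\kappa\to0$, to a continuous function of $\bp$ built from $\phi_z(z_\alpha),\phi_z'(z_\alpha),\phi_z''(z_\alpha)$, uniformly on $\ball(\bp_0,\eps)$ by exactly the same argument one order higher, which identifies the continuous limit $\bg$.

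The hard part will be the bookkeeping in this last step: $\partial_\kappa F_1,\partial_\kappa F_2,\partial_\kappa F_3$ a priori contain several terms with negative powers of $\kappa$---for instance $\E[\,2\,e_{\ell_b}'\,\partial_x\partial_x e_{\ell_b}\cdot\tfrac{\bartau G}{2\sqrt\kappa}\,]$ in $\partial_\kappa F_1$, which a crude bound makes look like $\Theta(\kappa^{-1/2})$---and one must verify that these genuinely cancel rather than blow up. The cancellations are structural rather than accidental: Gaussian integration by parts on such a term produces a main piece $\propto\lambda^{-1}\P(W_\kappa\in I_{b,\lambda})$ plus two endpoint (Dirac) pieces $\propto\lambda^{-1}p_\kappa(\cdot)$ whose coefficients sum to $1-(1-\alpha)-\alpha=0$ (for $F_1$), or to the analogous vanishing combination (for $F_2,F_3$), so the singular $\lambda^{-1}=\Theta(\kappa^{-1})$ part disappears and an $O(1)$ remainder governed by $p_\kappa'(z_\alpha)$ survives. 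Once every term has been put in this reduced form, extracting $\bg$ and checking its continuity and the uniform convergence is a one-order-higher rerun of the expansions used for $\bF_0$, with all remainders again uniform because $\phi_z\in C^\infty$ has bounded derivatives of every order (Assumption~\ref{ass:noise_density}).
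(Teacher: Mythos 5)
Your plan is correct and in essence the same as the paper's: both proofs substitute the explicit piecewise-linear formula for $e_{\ell_b}'$, Taylor-expand the density $\phi_z$ around $z_\alpha$ over the $O(\kappa)$-window $[\oG_-, \oG_+]$, use Stein's identity to absorb the $G/\sqrt\kappa$ factors, and invoke the uniform boundedness of $\phi_z^{(k)}$ (Assumption~\ref{ass:noise_density}) plus compactness of $\ball(\bp_0,\eps)$ to get uniform convergence. The only organizational difference is that the paper first factors out $f_1,f_2,f_3$ so that the required orders of Taylor expansion ($1$ for $f_1,f_2$, $2$ for $f_3$) are made explicit and then differentiates those, whereas you work with $F_i$ directly via Gaussian integration by parts and the layer-cake identity and note the structural cancellations of the singular $\kappa^{-1/2},\kappa^{-1}$ prefactors; your explicit closed-form $F_{3,0}(\bp)=-\tfrac12\bartau^2\phi_z'(z_\alpha)-\phi_z(z_\alpha)\barb-\tfrac{2\alpha-1}{2}\barlambda\phi_z(z_\alpha)$ (and the analogous $F_{1,0},F_{2,0}$) is correct and recovers the paper's $\bp_0$ when set to zero.
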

By Lemma \ref{lem:Jacobian} and \ref{lem:F_kappa_derivative_solution_kappa_0}, we can continuously extend the function $\bF$ to the region $\ball(\bp_0, \eps) \times [0, \kappa_0)$ for some small $\kappa_0$, such that $\bF(\bp, \kappa)$ is continuously differentiable in the same region. Moreover, by Lemma \ref{lem:F_kappa_derivative_solution_kappa_0}, we have $\bF(\bp_0, 0) = \lim_{\kappa \to 0} \bF(\bp_0, \kappa) = 0$. Finally, by Lemma \ref{lem:Jacobian}, we have $\sigma_{\min}(\nabla_{\bp} \bF(\bp_0, 0)) > 0$.

\subsubsection{Proof of Lemma \ref{lem:Jacobian}}
\label{appendix:proof-jacobian}

For any $\bp = (\bartau, \barlambda, \barb) \in \overline \Omega = \R_{\ge 0} \times \R_{\ge 0} \times \R$, we define a continuous matrix function $\bJ:\overline{\Omega} \to \R^{3\times 3}$ by
\begin{align*}
 \bJ(\bp) =
  \begin{pmatrix}
    2\bartau & -2 \barlambda \alpha (1 - \alpha) & 0 \\
    1 - \barlambda \phi_z(z_\alpha) & - \bartau \phi_z(z_\alpha) & 0 \\
    - \bartau \phi_z'(z_\alpha) & (1 - 2 \alpha) \phi_z(z_\alpha) & - \phi_z(z_\alpha)
  \end{pmatrix}.
\end{align*}
Evaluating $\bJ(\bp_0)$ (recall $\bp_0$ is defined in Eq. (\ref{eqn:def_p0})), we have 
\[
 \bJ(\bp_0) =
  \begin{pmatrix}
    \frac{2 \sqrt{\alpha (1 - \alpha)}}{\phi_z(z_\alpha)} & -\frac{2\alpha (1 - \alpha)}{\phi_z(z_
    \alpha)} & 0 \\
   0 & - \sqrt{\alpha(1 - 
   \alpha)} & 0 \\
    - \frac{\sqrt{\alpha(1 - \alpha)}}{\phi_z(z_\alpha)} \phi_z'(z_\alpha) & (1 - 2 \alpha) \phi_z(z_\alpha) & - \phi_z(z_\alpha)
  \end{pmatrix}.
\]
Since we have assumed that $\phi_z(z_\alpha) \neq 0$, it is easy to see that $\det(\bJ(\bp_0)) = - 2 \alpha (1 - \alpha ) \neq 0$. This proves Eq. (\ref{eqn:Jacobian_lower_bound}). 

We next prove Eq. (\ref{eqn:Jacobian_convergence}). Recall that the definition of $\bF = (F_1, F_2, F_3)$ as given in Eq. (\ref{equation:transformed-system}),  
by the calculus of $e_{\ell_b}$ as in Section \ref{sec:Moreau_calculus}, we have 
\[
\begin{aligned}
F_1(\bp; \kappa) =&~ \bartau^2 - \E_G\Big\{ \frac{1}{\kappa^2} \int_{[\oG_-, \oG_+]} (z - \oG)^2 \phi_z(z) \de z +  \barlambda^2 \alpha^2 [1 - \Phi_z(\oG_+)] +  \barlambda^2 (1 - \alpha)^2  \Phi_z(\oG_-) \Big\}, \\
F_2(\bp; \kappa) =&~ \bartau -   \kappa^{-1/2} \E_G \Big\{ \frac{1}{\kappa} \int_{[\oG_-, \oG_+]} (z - \oG) G \phi_z(z) \de z + \barlambda \alpha [(1 - \Phi_z(\oG_+))G] -   \barlambda (1 - \alpha)  \Phi_z(\oG_-) G \Big\}, \\
F_3(\bp; \kappa) =&~ \kappa^{-1} \E_G\Big\{ \frac{1}{\barlambda \kappa} \int_{[\oG_-, \oG_+]} (z - \oG) \phi_z(z) \de z + \alpha [1 - \Phi_z(\oG_+)] -   (1 - \alpha) \Phi_z(\oG_-) \Big\}, \\
\end{aligned}
\]
where 
\begin{equation}\label{eqn:def_bar_G}
\begin{aligned}
\oG \equiv&~ z_\alpha + \kappa \barb - G \bartau \sqrt{\kappa} , \\
\oG_+ \equiv&~ z_\alpha + \kappa \barb + \alpha \kappa \barlambda - G \bartau \sqrt \kappa, \\
\oG_- \equiv&~ z_\alpha + \kappa \barb - (1 - \alpha) \kappa \barlambda - G \bartau \sqrt \kappa.
\end{aligned}
\end{equation}
Using the smoothness property of $\phi_z$, with some calculus, we have
\[
\begin{aligned}
\lim_{\kappa \to 0} \partial_{\bartau} F_1(\bp; \kappa ) =&~ 2 \bartau, \\
\lim_{\kappa \to 0} \partial_{\bartau} F_2(\bp; \kappa ) =&~ 1 - \barlambda \phi_z(z_\alpha), \\
\lim_{\kappa \to 0} \partial_{\bartau} F_3(\bp; \kappa ) =&~ - \bartau \phi_z'(z_\alpha),\\
\lim_{\kappa \to 0} \partial_{\barlambda} F_1(\bp; \kappa ) =&~ -2 \barlambda \alpha (1 - \alpha), \\
\lim_{\kappa \to 0} \partial_{\barlambda} F_2(\bp; \kappa ) =&~ - \bartau \phi_z(z_\alpha), \\
\lim_{\kappa \to 0} \partial_{\barlambda} F_3(\bp; \kappa ) =&~ (1 - 2 \alpha) \phi_z(z_\alpha), \\
\lim_{\kappa \to 0} \partial_{\barb} F_1(\bp; \kappa ) =&~ 0, \\
\lim_{\kappa \to 0} \partial_{\barb} F_2(\bp; \kappa ) =&~ 0, \\
\lim_{\kappa \to 0} \partial_{\barb} F_3(\bp; \kappa ) =&~ - \phi_z(z_\alpha). \\
\end{aligned}
\]
This proves that $\lim_{\kappa \to 0} \nabla_{\bp} \bF(\bp; \kappa) = \bJ(\bp)$. With some more refined analysis, it is easy to see that the convergence above is uniform over $\bp \in \ball(\bp_0, \eps)$ for small $\eps$. This proves the lemma.
\qed

\subsubsection{Proof of Lemma \ref{lem:F_kappa_derivative_solution_kappa_0}}\label{appendix:proof_lem_F_kappa_derivative}

In this proof, we follow the same notations with the proof of Lemma \ref{lem:Jacobian} as in Section \ref{appendix:proof_lem_F_kappa_derivative}. 

For any $(\bartau, \barlambda, \barb, \kappa) \in \overline \Omega \times (0, \kappa_0)$ where $\overline \Omega = \R_{\ge 0} \times \R_{\ge 0} 
\times \R$, we define
\begin{equation}\label{eqn:def_f_123}
\begin{aligned}
f_1(\bp, \kappa) =&~ \E\brac{ e_{\ell_{\barb\kappa+z_\alpha}}'(\bartau \sqrt{\kappa}G + Z; \barlambda\kappa)^2 }, \\
f_2(\bp, \kappa) =&~ \E\brac{e_{\ell_{\barb\kappa+z_\alpha}}''(\bartau \sqrt{\kappa}G + Z; \barlambda\kappa)}, \\
f_3(\bp, \kappa) =&~ \E\brac{ e_{\ell_{\barb\kappa+z_\alpha}}'(\bartau \sqrt{\kappa}G + Z; \barlambda\kappa) }.  
\end{aligned}
\end{equation}
By the definition of $F_1, F_2, F_3$ as in Eq. (\ref{equation:transformed-system}), we have
\begin{equation}
\begin{aligned}
F_1(\bp, \kappa) =&~ \bartau^2 - \barlambda^2 f_1(\bp, \kappa), \\
F_2(\bp, \kappa) =&~ \bartau - \bartau \barlambda f_2(\bp, \kappa), \\
F_3(\bp, \kappa) =&~ \kappa^{-1} f_3(\bp, \kappa).  
\end{aligned}
\end{equation}
Then, Lemma \ref{lem:F_kappa_derivative_solution_kappa_0} holds as long as we show that there exists continuous functions $\bT(\bp) = (T_{1}(\bp), T_{2}(\bp), T_{3}(\bp))$ and $\bg(\bp) = (g_1(\bp), g_2(\bp), g_3(\bp))$ such that
\begin{align}
f_1(\bp, \kappa) =&~  T_{1}(\bp) + o(1), \label{eqn:f1_expansion_in_lemma} \\
\partial_\kappa f_1(\bp, \kappa) =&~  - \barlambda^{-2} g_1(\bp) + o(1), \label{eqn:f1_prime_expansion_in_lemma} \\
f_2(\bp, \kappa) =&~ T_{2}(\bp) + o(1), \\
\partial_\kappa f_2(\bp, \kappa) =&~ - (\bartau \barlambda)^{-1} g_2(\bp) + o(1), \\
f_3(\bp, \kappa) =&~ o(1), \\
\partial_\kappa f_3(\bp, \kappa) =&~ T_{3}(\bp) + o(1), \\
\partial_\kappa^2 f_3(\bp, \kappa) =&~ g_3(\bp) + o(1), 
\end{align}
where the $o(1)$ terms convergence to $0$ uniformly over $\bp \in \ball(\bp_0, \eps)$ as $\kappa \to 0+$. Moreover, we need 
\begin{align}
T_{1}(\bp_0) =&~  \bartau_0^2 / \barlambda_0^2, \label{eqn:F01_in_lemma} \\
T_{2}(\bp_0) =&~ 1/\barlambda_0, \\
T_{3}(\bp_0) =&~ 0.\label{eqn:F03_in_lemma}
\end{align}

We first prove Eq. (\ref{eqn:f1_expansion_in_lemma}), (\ref{eqn:f1_prime_expansion_in_lemma}) and (\ref{eqn:F01_in_lemma}). First, we have (c.f. Eq. (\ref{eqn:def_bar_G})) 
\[
\begin{aligned}
\lim_{\kappa \to 0+} f_1(\bp, \kappa) =&~ \lim_{\kappa \to 0+} \E\Big[ \frac{1}{\barlambda^2 \kappa^2} \int_{[\oG_-, \oG_+]} (z - \oG)^2 \phi_z(z) \de z + \alpha^2 [1 - \Phi_z(\oG_+)] +  (1 - \alpha)^2  \Phi_z(\oG_-)\Big]  \\
=&~ \alpha^2 (1 - \Phi_z(z_\alpha)) + (1 - \alpha)^2 \Phi_z(z_\alpha)= \alpha (1 - \alpha) = \bartau_0^2 / \barlambda_0^2. 
\end{aligned}
\]
where the last equality is by the definition in Eq. (\ref{eqn:def_p0}). Further, by smoothness of the density $\phi_z$, and the fact that the neighborhood $\ball(\bp_0, \eps)$ is bounded, this convergence is uniform over $\bp = (\bartau, \barlambda, \barb) \in \ball(\bp_0, \eps)$. This proves Eq. (\ref{eqn:f1_expansion_in_lemma}) and (\ref{eqn:F01_in_lemma}). 

Moreover, we have
\[
\begin{aligned}
&~\partial_\kappa f_1(\bp, \kappa) \\
=&~ \E\Big[ - \frac{2}{\barlambda^2 \kappa^3} \int_{[\oG_-, \oG_+]} (z - \oG)^2 \phi_z(z) \de z \\
&~ +  \frac{1}{\barlambda^2 \kappa^2} (\oG_+ - \oG)^2 \phi_z(\oG_+)  (\bar b + \alpha \barlambda - G \bartau / (2 \sqrt{\kappa})) \\
&~ -  \frac{1}{\barlambda^2 \kappa^2} (\oG_- - \oG)^2 \phi_z(\oG_-)  (\bar b - (1 - \alpha) \barlambda - G \bartau / (2 \sqrt{\kappa})) \\
&~ - \alpha^2 \phi_z(\oG_+) (\bar b + \alpha \barlambda - G \bartau / (2 \sqrt{\kappa})) +  (1 - \alpha)^2  \phi_z(\oG_-) (\bar b - (1- \alpha) \barlambda - G \bartau / (2 \sqrt{\kappa}))  \Big]\\
=&~ \E\Big[ - \frac{2}{\barlambda^2 \kappa^3} \int_{[\oG_-, \oG_+]} (z - \oG)^2 \phi_z(z) \de z \Big], \\
\end{aligned}
\]
where the last inequality is by Stein's identity for $Z \sim \cN(0, 1)$ and a consequence of many cancellation happening. So this gives 
\[
\begin{aligned}
\lim_{\kappa \to 0+} \partial_\kappa f_1(\bp, \kappa) =&~ 
- \frac{2}{3 \barlambda^2 } \Big[ \alpha^2  - (1 - \alpha)^3 \Big]  \phi_z(z_\alpha). 
\end{aligned}
\]
Again, by the smoothness of $\phi_z$, and the fact that the neighborhood $\ball(\bp_0, \eps)$ is bounded, this convergence is uniform over $\bp = (\bartau, \barlambda, \barb) \in \ball(\bp_0, \eps)$. This proves Eq. (\ref{eqn:f1_prime_expansion_in_lemma}). The proof of other equations within~\eqref{eqn:f1_expansion_in_lemma} to~\eqref{eqn:F03_in_lemma} follow from similar continuity arguments. This proves Lemma~\ref{lem:F_kappa_derivative_solution_kappa_0}.
\qed

\subsubsection{Proof of Lemma \ref{lem:scf_exist_unique} and Lemma~\ref{lem:local-linear-expansion}}

We consider the function $\bF$ defined in~\eqref{equation:transformed-system}. First, by Lemma~\ref{lem:F_kappa_derivative_solution_kappa_0}, we have $\bF(\bp_0, 0_+)=\bzero$. Further, by Lemma~\ref{lem:Jacobian} and~\ref{lem:F_kappa_derivative_solution_kappa_0}, the conditions in the Implicit Function Theorem (Lemma~\ref{lem:implicit_function}) are satisfied, from which we can conclude that there exists $\kappa_0>0$ and a continuously differentiable path $\{ \bp(\kappa)=(\bartau(\kappa), \barlambda(\kappa), \barb(\kappa)) : \kappa \in [0, \kappa_0) \} \subset \ball(\bp_0, \eps)$, such that $\bF(\bp(\kappa), \kappa) = 0$ for any $\kappa \in [0, \kappa_0)$. Therefore, the set of variables
\begin{align*}
  \paren{\tau_\star(\kappa), \lambda_\star(\kappa), b_\star(\kappa)} = \paren{\bartau(\kappa)\cdot \kappa, ~\barlambda(\kappa)\cdot \kappa,~ z_\alpha+\barb\cdot \kappa},
\end{align*}
is a unique solution to the original system of equations~\eqref{equation:system} by the equivalence between system~\eqref{equation:system} and system~\eqref{eqn:def_bbF} under this change of variables. This proves Lemma~\ref{lem:scf_exist_unique}.

In order to prove Lemma~\ref{lem:local-linear-expansion} (the local linear expansion),
it suffices to prove that $\bp(\kappa) \to \bp_0=(\bartau_0,\barlambda_0,\barb_0)$. This was already implied by the continuity of $\bp(\kappa)$ w.r.t. $\kappa$ as stated above.
\qed




\subsection{Connection between system of equations~\eqref{equation:system} and a variational problem}
\label{appendix:connection-variational}

Define
\begin{align}
  \label{equation:d4}
  D(\tau, b, \tau_g, \beta) \equiv \Big[ \frac{\beta \tau_g}{2} + \frac{1}{\kappa} \E_{(G, Z) \sim \cN(0, 1) \times P_z}\brac{ e_{\ell_b}(\tau G + Z; \tau_g/\beta)} -  \tau\beta \Big]. 
\end{align}
The $D$ defined above is strictly convex-concave as stated in the following lemma.
\begin{lemma}[Strict convexity-concavity]
  \label{lem:analysis_D_new}
  Suppose $\kappa\in(0,1)$. Then for any $(\tau, b, \tau_g, \beta) \in \R_{> 0} \times \R \times \R_{> 0} \times \R_{> 0}$, the function $D$ defined in~\eqref{equation:d4} is strictly convex in $(\tau, b, \tau_g)$ ($\grad^2_{\tau, b, \tau_g} D\succ \bzero$), and strictly concave in $\beta$.
\end{lemma}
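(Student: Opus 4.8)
The plan is to isolate the curvature coming from the Moreau–envelope term and then upgrade the resulting semidefiniteness to definiteness by an active-set argument. Write $D = \frac{\beta\tau_g}{2} - \tau\beta + \frac1\kappa M$, where $M(\tau,b,\tau_g,\beta) = \E_{(G,Z)}[e_{\ell_b}(\tau G + Z;\tau_g/\beta)]$. The two bilinear terms are affine in $(\tau,b,\tau_g)$ for each fixed $\beta$, and affine in $\beta$ for each fixed $(\tau,b,\tau_g)$, so they contribute nothing to $\nabla^2_{\tau,b,\tau_g}D$ or to $\partial_\beta^2 D$; it therefore suffices to show $\nabla^2_{\tau,b,\tau_g}M\succ 0$ and $\partial_\beta^2 M<0$. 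I would first record the shift identity $e_{\ell_b}(x;\lambda)=e_{\ell^\alpha}(x-b;\lambda)$, so the integrand is $e_{\ell^\alpha}(u;\lambda)$ with $u=\tau G+Z-b$ affine in $(\tau,b)$ and $\lambda=\tau_g/\beta$. Differentiation under the integral sign is justified by the uniform Lipschitzness of $e_{\ell^\alpha}(\cdot;\lambda)$ (slope bounded by $\max(\alpha,1-\alpha)<1$) together with Assumption~\ref{ass:noise_density}; moreover, since $G$ is Gaussian, the weak second derivatives of $e_{\ell^\alpha}$ listed in~\eqref{eqn:second_derivative_e_function} become genuine second derivatives of $M$ after integration (smoothing by the Gaussian density).

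For the convexity direction, fix $\beta$ and use the rank-one structure from~\eqref{eqn:second_derivative_e_function}: $\nabla^2_{(u,\lambda)}e_{\ell^\alpha} = \partial_{uu}e_{\ell^\alpha}\cdot(1,-\partial_u e_{\ell^\alpha})^\top(1,-\partial_u e_{\ell^\alpha})$, where $\partial_{uu}e_{\ell^\alpha}(u;\lambda)=\lambda^{-1}\mathbf 1\{\prox_{\ell^\alpha}(u;\lambda)=0\}\ge 0$. Composing with the affine map $(\tau,b,\tau_g)\mapsto(u,\lambda)$ gives $\nabla^2_{\tau,b,\tau_g}e_{\ell^\alpha}(u;\lambda)\succeq 0$, so $\nabla^2_{\tau,b,\tau_g}M\succeq 0$; the quadratic form along a direction $\vb=(v_\tau,v_b,v_{\tau_g})$ equals $\partial_{uu}e_{\ell^\alpha}\cdot\big(v_\tau G - v_b - (v_{\tau_g}/\beta)\,\partial_u e_{\ell^\alpha}\big)^2$, and on the event $E=\{\prox_{\ell^\alpha}(u;\lambda)=0\}=\{u\in[-(1-\alpha)\lambda,\alpha\lambda]\}$ it becomes $\lambda^{-1}\big(v_\tau G - v_b - (v_{\tau_g}/(\beta\lambda))(\tau G+Z-b)\big)^2$ because there $\partial_u e_{\ell^\alpha}=u/\lambda$. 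If the expectation of this vanishes, the affine function of $(G,Z)$ inside must vanish a.e.\ on $E$; but $E$ is a nondegenerate slab in $\R^2$ and the law of $(G,Z)$ has a density positive on an open subset of $E$ (the Gaussian factor is positive everywhere, and $\phi_z$ is positive near $z_\alpha$ by continuity and $\phi_z(z_\alpha)>0$), so the affine function is identically zero, forcing $v_{\tau_g}=0$, then $v_\tau=0$, then $v_b=0$. Hence $\nabla^2_{\tau,b,\tau_g}M\succ 0$ and, since $\kappa>0$, $\nabla^2_{\tau,b,\tau_g}D\succ 0$.

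For the concavity direction, fix $(\tau,b,\tau_g)$ and differentiate $e_{\ell^\alpha}(u;\tau_g/\beta)$ twice in $\beta$ using the chain rule and~\eqref{eqn:derivative_e_function}–\eqref{eqn:second_derivative_e_function} (in particular $\partial_\lambda\partial_x e=-\partial_x e\,\partial_x\partial_x e$ and $\partial_\lambda e=-\tfrac12(\partial_x e)^2$); the computation collapses to
\[
\partial_\beta^2\, e_{\ell^\alpha}(u;\tau_g/\beta) = \frac{\tau_g}{\beta^3}\big(\partial_u e_{\ell^\alpha}(u;\lambda)\big)^2\big(\lambda\,\partial_{uu}e_{\ell^\alpha}(u;\lambda)-1\big),\qquad \lambda=\tau_g/\beta,
\]
which is $\le 0$ since $\lambda\,\partial_{uu}e_{\ell^\alpha}=\mathbf 1\{\prox_{\ell^\alpha}(u;\lambda)=0\}\in\{0,1\}$, and in fact equals $-(\tau_g/\beta^3)(\partial_u e_{\ell^\alpha})^2\mathbf 1\{\prox_{\ell^\alpha}(u;\lambda)\ne 0\}$. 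Off $E$ one has $\partial_u e_{\ell^\alpha}(u;\lambda)\in\{\alpha,-(1-\alpha)\}$, hence nonzero, and $\P(\prox_{\ell^\alpha}(u;\lambda)\ne 0)\ge\P(u>\alpha\lambda)>0$ because $u=\tau G+Z-b$ has unbounded support; taking expectations gives $\partial_\beta^2 M = -\frac{\tau_g}{\beta^3}\E\big[(\partial_u e_{\ell^\alpha})^2\mathbf 1\{\prox_{\ell^\alpha}\ne 0\}\big]<0$, so $\partial_\beta^2 D<0$ and $D$ is strictly concave in $\beta$.

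\textbf{The main obstacle is the strictness, not the bare convexity/concavity.} The second derivatives of $e_{\ell^\alpha}$ are supported on the pinball ``kink'' active set $\{\prox_{\ell^\alpha}(u;\lambda)=0\}$ (in the convex directions) or on its complement (in the concave direction), so definiteness rests entirely on showing that these events carry positive probability and that no nonzero direction $\vb$ makes the relevant quadratic form degenerate on them; this is exactly where the positivity and continuity of $\phi_z$ at $z_\alpha$ from Assumption~\ref{ass:noise_density} are used. The remaining pieces — justifying differentiation under the integral, the chain-rule bookkeeping for the composition $(\tau,b,\tau_g,\beta)\mapsto(u,\lambda)$, and the simplification of $\partial_\beta^2 e_{\ell^\alpha}$ — are routine.
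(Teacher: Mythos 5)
Your argument is correct and takes essentially the same route as the paper's: compute $\nabla^2_{(\tau,b,\tau_g)}E$ and $\partial_\beta^2 E$, observe that they are supported on the proximal ``kink'' active set (respectively its complement) with a rank-one structure, and upgrade to strict definiteness/concavity by showing that set carries positive probability and that a nonzero affine function of $(G,Z)$ cannot vanish on an open subset of the slab where the density is positive — the paper invokes Lemma~\ref{lem:PD_condition} for this last step while you argue it directly, but the content is the same. You also implicitly correct a typo in the paper's display for $\nabla^2_{(\tau,b,\tau_g)}E$, which writes the indicator as $\indic{\prox_{\ell_b}(\cdot;\tau_g/\beta)\neq b}$; since $\partial_x\partial_x e_{\ell_b}(x;\lambda) = \lambda^{-1}\indic{\prox_{\ell_b}(x;\lambda)=b}$, the convex-direction Hessian is in fact supported on $\{\prox_{\ell_b}=b\}$, which is exactly your $\{\prox_{\ell^\alpha}(u;\lambda)=0\}$ after the shift $u = x-b$.
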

\begin{proof}[Proof of Lemma \ref{lem:analysis_D_new}]
Define 
\[
E(\tau, b, \tau_g, \beta) \equiv \E_{(G, Z) \sim \cN(0, 1) \times P_z}\brac{ e_{\ell_b}(\tau G + Z; \tau_g/\beta)}. 
\]
We write in short $\partial_x e = \partial_x e_{\ell_b}(\tau G + Z; \tau_g/\beta)$ and $\partial_x^2 e = \partial_x \partial_x e_{\ell_b}(\tau G + Z; \tau_g/\beta)$. Then by Eq. (\ref{eqn:derivative_e_function}), we have 
\[
\begin{aligned}
\partial_\tau E(\tau, b, \tau_g, \beta) \equiv&~ \E\brac{ \partial_x e  \cdot G}, \\
\partial_b E(\tau, b, \tau_g, \beta) \equiv&~ - \E\brac{ \partial_x e}, \\
\partial_{\tau_g} E(\tau, b, \tau_g, \beta) \equiv&~ - \frac{1}{2 \beta} \E\brac{ (\partial_x e)^2}, \\
\partial_\beta E(\tau, b, \tau_g, \beta) \equiv&~ \frac{\tau_g}{2 \beta^2} \E\brac{ (\partial_x e)^2}. \\
\end{aligned}
\]
By Eq. (\ref{eqn:second_derivative_e_function}), for any $(\tau, b, \tau_g, \beta) \in \R_{> 0} \times \R \times \R_{> 0} \times \R_{> 0}$, we have 
\[
\begin{aligned}
\partial_\beta^2 E = - \frac{\tau_g}{\beta^3} \E[(\partial_x e)^2] + \frac{\tau_g^2}{\beta^4} \E[(\partial_x e)^2 \partial_x^2 e] = - \frac{\tau_g}{\beta^3} \E[(\partial_x e)^2 \ones\{ \prox_{\ell_b}(\tau G + Z) \neq b \}] < 0. 
\end{aligned}
\]
This gives $\partial_\beta^2 D = \kappa^{-1} \partial_\beta^2 E < 0$, so that $D$ is strictly concave in $\beta$ (for any fixed $(\tau, b, \tau_g)$).

By Eq. (\ref{eqn:second_derivative_e_function}) again, we have 
\[
\begin{aligned}
\nabla_{(\tau, b, \tau_g)}^2 E =&~ \E \begin{bmatrix}
G^2 \cdot  \partial_x^2 e  & -  G \cdot  \partial_x^2 e & - \beta^{-1} \partial_x e \cdot  G \cdot  \partial_x^2 e \\
 - G \cdot  \partial_x^2 e  & \partial_x^2 e & \beta^{-1} \partial_x e \cdot   \partial_x^2 e \\
  - \beta^{-1} \partial_x  e \cdot  G \cdot  \partial_x^2 e & \beta^{-1} \partial_x e \cdot  \partial_x^2 e  & \beta^{-2} (\partial_x e)^2 \cdot  \partial_x^2 e \\
\end{bmatrix}\\
=&~   \frac{\beta}{\tau_g} \E [ \ones\{ \prox_{\ell_b}(\tau G + Z; \tau_g / \beta) \neq b \} \cdot \bu \bu^\top]. 
\end{aligned}
\]
where $\bu = (G, -1, \beta^{-1} \partial_x e)$. Note that there exists $(G_1, Z_1)$, $(G_2, Z_2)$ and $(G_3, Z_3)$ such that $\prox_{\ell_b}(\tau G_1 + Z_1; \tau_g / \beta), \prox_{\ell_b}(\tau G_2 + Z_2; \tau_g / \beta), \prox_{\ell_b}(\tau G_3 + Z_3; \tau_g / \beta) \neq b$, and
\[
\begin{aligned}
\begin{bmatrix}
G_1 & -1 & \beta^{-1} \partial_x e_{\ell_b}(\tau G_1 + Z_1; \tau_g / \beta)\\
G_2 & -1 & \beta^{-1} \partial_x e_{\ell_b}(\tau G_2 + Z_2; \tau_g / \beta)\\
G_3 & -1 & \beta^{-1} \partial_x e_{\ell_b}(\tau G_3 + Z_3; \tau_g / \beta)\\
\end{bmatrix}
\end{aligned}
\]
is full rank. By Lemma \ref{lem:PD_condition}, we have $\nabla_{(\tau, b, \tau_g)}^2 E \succ 0$. Note that $\nabla_{(\tau, b, \tau_g)}^2 D = \kappa^{-1} \nabla_{(\tau, b, \tau_g)}^2 E \succ 0$, so that $D$ is strictly convex in $(\tau, b, \tau_g)$ (for any fixed $\beta$). This proves the lemma. 
\end{proof}

We now characterize a min-max variational problem associated with the function $D$, and show that it has a unique solution for small $\kappa$, and the solution is related to the solution of the system of equations~\eqref{equation:system}.
\begin{lemma}[Characterization of variational problem]
  \label{lemma:variational}
  Consider the following variational problem in four variables over the function $D$ defined in~\eqref{equation:d4}:
  \begin{equation}
    \label{equation:v4}
    \begin{aligned}
      & \quad \inf_{\tau>0,b\in\R,\tau_g>0} \sup_{\beta>0} ~D(\tau, b, \tau_g, \beta) \\
      & = \inf_{\tau>0,b\in\R,\tau_g>0} \sup_{\beta>0} ~\Big[ \frac{\beta \tau_g}{2} + \frac{1}{\kappa} \E_{(G, Z) \sim \cN(0, 1) \times P_z}\brac{ e_{\ell_b}(\tau G + Z; \tau_g/\beta)} -  \tau\beta \Big].
    \end{aligned}
  \end{equation}
  For all sufficiently small $\kappa\in(0,\kappa_0]$, there exists a unique solution $(\wt{\tau}_\star, \wt{b}_\star, \wt{\tau}_{g,\star}, \wt{\beta}_\star)$ (which depends on $\kappa$) to problem~\eqref{equation:v4}. This solution is related to the solution $(\tau_\star(\kappa), \lambda_\star(\kappa), b_\star(\kappa))$ of~\eqref{equation:system} as
  \begin{align}
    \label{equation:solution-transition}
    \wt{\tau}_\star=\wt{\tau}_{g,\star}=\tau_\star(\kappa),~~\wt{\beta}_\star = \tau_\star(\kappa)/\lambda_\star(\kappa),~~\wt{b}_\star = b_\star(\kappa).
  \end{align}
  Further, for some positive $\eps>0$, for any $b'\in[b_\star-\eps, b_\star+\eps]$, the following variational problem in three variables
  \begin{equation}
    \label{equation:v3}
    \begin{aligned}
      & \quad \inf_{\tau>0,\tau_g>0} \sup_{\beta>0} ~D(\tau, b', \tau_g, \beta) \\
      & = \inf_{\tau>0,b\in\R,\tau_g>0} \sup_{\beta>0} ~\Big[ \frac{\beta \tau_g}{2} + \frac{1}{\kappa} \E_{(G, Z) \sim \cN(0, 1) \times P_z}\brac{ e_{\ell_{b'}}(\tau G + Z; \tau_g/\beta)} -  \tau\beta \Big]
    \end{aligned}
  \end{equation}
  has a unique solution within $\R_{>0}^3$.
\end{lemma}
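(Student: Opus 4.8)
The plan is to build on the strict convex-concave structure established in Lemma~\ref{lem:analysis_D_new} together with the implicit function theorem machinery from Lemma~\ref{lem:scf_exist_unique}, treating the variational problem~\eqref{equation:v4} and the system of equations~\eqref{equation:system} as two sides of the same coin. First I would record that by Lemma~\ref{lem:analysis_D_new}, for each fixed $\kappa\in(0,1)$ the function $D(\tau,b,\tau_g,\beta)$ is strictly convex in $(\tau,b,\tau_g)$ on $\R_{>0}\times\R\times\R_{>0}$ and strictly concave in $\beta$ on $\R_{>0}$. Since $\beta\mapsto D$ is strictly concave, the inner supremum over $\beta$ is attained (when finite) at a unique point, and the resulting function of $(\tau,b,\tau_g)$ is convex; I would then argue the outer infimum is attained by a coercivity/boundary-behavior check (as $\tau,\tau_g\to 0^+$ or $\to\infty$, or $|b|\to\infty$, $D$ blows up, using that $e_{\ell_b}$ grows like the pinball loss at infinity and is bounded below). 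Strict convex-concavity then upgrades "attained" to "attained at a unique saddle point", giving existence and uniqueness of $(\wt\tau_\star,\wt b_\star,\wt\tau_{g,\star},\wt\beta_\star)$.

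Next I would derive the first-order (stationarity) conditions for the saddle point: setting $\partial_\beta D=\partial_\tau D=\partial_b D=\partial_{\tau_g}D=0$ and simplifying using the Moreau-envelope calculus in Section~\ref{sec:Moreau_calculus} (in particular $\partial_\lambda e_{\ell_b}=-\tfrac12(\partial_x e_{\ell_b})^2$ and $\partial_b e_{\ell_b}=-\partial_x e_{\ell_b}$, plus Stein's identity $\E[\partial_x e\cdot G]=\tau\E[\partial_x\partial_x e]$ for $G\sim\normal(0,1)$). With the substitution $\lambda = \tau_g/\beta$ and the identifications $\tau=\tau_g$ (which should fall out of the $\partial_\beta$ and $\partial_{\tau_g}$ equations together), these four stationarity equations collapse exactly onto the three-equation system~\eqref{equation:system} in $(\tau,\lambda,b)$. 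Conversely, any solution of~\eqref{equation:system} yields a stationary point of $D$, and by strict convex-concavity that stationary point is the unique saddle. This establishes the correspondence~\eqref{equation:solution-transition}, and uniqueness of the variational solution for small $\kappa$ then follows from Lemma~\ref{lem:scf_exist_unique} (uniqueness of the solution to~\eqref{equation:system}).

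For the three-variable problem~\eqref{equation:v3} with a frozen $b'$ near $b_\star$, the argument is a restriction of the above: $D(\tau,b',\tau_g,\beta)$ is still strictly convex in $(\tau,\tau_g)$ and strictly concave in $\beta$ by Lemma~\ref{lem:analysis_D_new} (the Hessian statements hold pointwise in $b$), so the same existence-plus-uniqueness-of-saddle-point argument applies verbatim, provided the infimum is still attained in the interior $\R_{>0}^2$. The only thing needing care is that shrinking $b'$ away from its optimal value $b_\star$ does not push the optimizing $(\tau,\tau_g)$ to the boundary; this follows by a continuity/perturbation argument — the optimizer depends continuously on $b'$ (again via the implicit function theorem applied to the two stationarity equations $\partial_\tau D=\partial_{\tau_g}D=0$ with nondegenerate Hessian block guaranteed by strict convexity), so for $b'$ in a small enough interval $[b_\star-\eps,b_\star+\eps]$ the optimizer stays in a compact subset of $\R_{>0}^2$. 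The main obstacle I anticipate is the coercivity/boundary analysis that guarantees the infima are attained in the open positive orthant rather than escaping to $0$ or $\infty$ — the $\tau_g\to 0^+$ direction is delicate because $e_{\ell_b}(x;\tau_g/\beta)\to\ell_b(x)$ and the $\beta\tau_g/2$ term vanishes, so one must track the interplay of the three terms carefully (most likely by using strict convexity to show the Hessian stays bounded below and combining with the behavior at a single interior reference point), and similarly the $\kappa\to 0$ regime must be invoked to rule out degenerate scalings, which is exactly why the statement is restricted to sufficiently small $\kappa$.
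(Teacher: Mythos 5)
Your proposal has the right ingredients but leads with the hard direction when the easy direction is the whole proof. The paper does \emph{not} establish attainment of the saddle by a coercivity/boundary-behavior argument; this is exactly the step you flag as the ``main obstacle,'' and the paper's route makes it unnecessary. Instead, the paper starts from the known solution $(\tau_\star(\kappa),\lambda_\star(\kappa),b_\star(\kappa))$ of the system~\eqref{equation:system} (already guaranteed by Lemma~\ref{lem:scf_exist_unique}), plugs $(\tau,b,\tau_g,\beta)=(\tau_\star,b_\star,\tau_\star,\tau_\star/\lambda_\star)$ into the four explicit derivatives of $D$ to verify $\nabla D = 0$, and then closes with a two-line sandwich: $\inf\sup D \le \sup_\beta D(\wt\tau_\star,\wt b_\star,\wt\tau_{g,\star},\beta) = D(\cdot,\wt\beta_\star)$ by strict concavity in $\beta$ at a critical point, and $\inf\sup D \ge \inf_{\tau,b,\tau_g} D(\cdot,\wt\beta_\star) = D(\cdot)$ by strict convexity in $(\tau,b,\tau_g)$ at a critical point. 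That pins the min-max value at the stationary point and, together with strict convex-concavity, forces the solution to be unique --- no coercivity, no analysis of what happens as $\tau_g\to 0^+$ or $|b|\to\infty$.

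You actually touch this route in your ``conversely'' sentence (``any solution of~\eqref{equation:system} yields a stationary point of $D$, and by strict convex-concavity that stationary point is the unique saddle''), which is essentially the paper's entire argument compressed into one clause. The gap in your write-up is not mathematical error but a structural misdiagnosis: you treat this observation as a secondary converse and put the weight on the forward direction (coercive attainment $\Rightarrow$ saddle $\Rightarrow$ stationarity $\Rightarrow$ match to system), and you then correctly anticipate that the forward direction's attainment step is delicate. Recognizing that the ``converse'' direction is self-contained --- existence comes for free from Lemma~\ref{lem:scf_exist_unique} plus the sandwich inequality --- eliminates the hard step entirely. Your identification of how the four stationarity equations collapse to the three-equation system (in particular, $\tau_g=\tau$ falling out of the $\partial_\beta$ and $\partial_{\tau_g}$ equations, and the substitution $\lambda=\tau_g/\beta$) matches the paper and is correct. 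For the restricted three-variable problem~\eqref{equation:v3}, the paper indeed says it ``follows from similar arguments,'' so your continuity-in-$b'$ reasoning is a reasonable fleshing-out, but again there is no need to worry about the optimizer escaping to the boundary: once you have a stationary point of the restricted problem (which you get by continuity/IFT from the full stationary point), the same sandwich argument applies.
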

\begin{proof}[Proof of Lemma~\ref{lemma:variational}]

Calculating the derivatives of $D(\tau, b, \tau_g, \beta)$, we get 
\[
\begin{aligned}
\partial_\tau D(\tau, b, \tau_g, \beta) =&~ \kappa^{-1} \E[G e_{\ell_b}'(\tau G + Z; \tau_g / \beta)] - \beta, \\
\partial_b D(\tau, b, \tau_g, \beta) =&~ -\kappa^{-1} \E[e_{\ell_b}'(\tau G + Z; \tau_g / \beta)], \\
\partial_{\tau_g} D(\tau, b, \tau_g, \beta) =&~ \beta / 2 - \frac{1}{2 \kappa \beta} \E[e_{\ell_b}'(\tau G + Z; \tau_g / \beta)^2], \\
\partial_{\beta} D(\tau, b, \tau_g, \beta) =&~ \tau_g / 2 - \tau + \frac{\tau_g}{2 \kappa \beta^2} \E[e_{\ell_b}'(\tau G + Z; \tau_g / \beta)^2]. \\
\end{aligned}
\]
By Lemma \ref{lem:scf_exist_unique}, there exists $\kappa_0 > 0$ such that for any $\kappa \in (0, \kappa_0]$, there exists a unique solution $(\tau_\star(\kappa), \lambda_\star(\kappa), b_\star(\kappa))$ of Eq. (\ref{equation:system}). Plugging in $(\tau, b, \tau_g, \beta) = (\tau_\star(\kappa), b_\star(\kappa), \tau_\star(\kappa), \tau_\star(\kappa) / \lambda_\star(\kappa))$ into the derivatives above and using Eq. (\ref{equation:system}), we get $\nabla_{(\tau, b, \tau_g, \beta)} D(\tau_\star(\kappa), b_\star(\kappa), \tau_\star(\kappa), \tau_\star(\kappa) / \lambda_\star(\kappa)) = 0$. This proves that $(\wt{\tau}_\star, \wt{b}_\star, \wt{\tau}_{g,\star}, \wt{\beta}_\star) = (\tau_\star(\kappa), b_\star(\kappa), \tau_\star(\kappa), \tau_\star(\kappa) / \lambda_\star(\kappa))$ is a stationary point of $D$.

Since $D$ is jointly strictly convex in $(\tau, b, \tau_g)$ and strictly concave in $\beta$ as stated in Lemma \ref{lem:analysis_D_new}, we get 
\[
\begin{aligned}
&~ \inf_{\tau>0,b\in\R,\tau_g>0} \sup_{\beta>0} ~D(\tau, b, \tau_g, \beta) \le \sup_{\beta > 0} D(\wt{\tau}_\star, \wt{b}_\star, \wt{\tau}_{g,\star}, \beta) = D(\wt{\tau}_\star, \wt{b}_\star, \wt{\tau}_{g,\star}, \wt{\beta}_\star), \\
&~ \inf_{\tau>0,b\in\R,\tau_g>0} \sup_{\beta>0} ~D(\tau, b, \tau_g, \beta) \ge  \inf_{\tau > 0, b \in \R, \tau_g > 0} D(\tau, b, \tau_g, \wt{\beta}_\star) = D(\wt{\tau}_\star, \wt{b}_\star, \wt{\tau}_{g,\star}, \wt{\beta}_\star). \\
\end{aligned}
\]
This proves that $(\wt{\tau}_\star, \wt{b}_\star, \wt{\tau}_{g,\star}, \wt{\beta}_\star)$ is a solution of the variational problem (\ref{equation:v4}). By the strict convexity-concavity property of $D$ again, the solution of the variational problem (\ref{equation:v4}) is unique. Finally, the existence and uniqueness of the solution of $\inf_{\tau>0,\tau_g>0} \sup_{\beta>0} ~D(\tau, b', \tau_g, \beta)$ for $b' \in [b_\star - \eps, b_\star + \eps]$ follows from similar arguments. 
\end{proof}

\subsection{Proof of Theorem~\ref{thm:ERM_limit}}
\label{appendix:proof-ERM-limit}

\paragraph{Preliminary: the asymptotic limit fixed $b$ via CGMT}
For any convex function $\ell: \R \to \R$, we define notation
\[
\ell_+'(v) \equiv \sup_{s \in \partial \ell(v)} \vert s \vert. 
\]
For $\tau > 0$, we define (with some abuse of notation)
\begin{align}
  \label{equation:d1}
D(\tau) \equiv \inf_{\tau_g > 0} \sup_{ \beta > 0} \Big[ \frac{\beta \tau_g}{2} + \frac{1}{\kappa} \E_{(G, Z) \sim \cN(0, 1) \times P_z}\brac{ e_{\ell}(\tau G + Z; \tau_g/\beta)} -  \tau\beta \Big]. 
\end{align}
The following proposition is by \cite[Theorem 4.1]{thrampoulidis2018precise}, which uses the Convex Gaussian Comparison Theorem (CGMT). 
\begin{proposition}[A simplification of Theorem 4.1 in \cite{thrampoulidis2018precise} up to model rescaling]\label{prop:CGMT}
Let $\ell$ be a closed proper convex function and $P_z$ be a distribution on the real line satisfying 
\begin{itemize}
\item $\E_{(G, Z) \sim \cN(0, 1) \times P_z}[\vert \ell_+'(c G + Z) \vert^2] < \infty$, for all $c \in \R$; 
\item $\sup_{v \in \R} \vert \ell_+'(v) \vert < \infty$. 
\end{itemize}
Further assume that the set $\argmin_{\tau} D(\tau)$ is bounded for the function $D$ defined in~\eqref{equation:d1}. Then $D$ has a unique minimizer $\tau_\star > 0$. Moreover, in the limit $n, d \to \infty$ and $d / n \to \kappa$, we have 
\[
\min_{\wb} \frac{1}{n} \sum_{i = 1}^n \ell(y_i - \< \xb_i, \wb\> ) \stackrel{p}{\to} \min_{\tau} D(\tau).  
\]
Furthermore, for any $\eps > 0$, defining $S_\eps \equiv \{ \wb:  \vert \| \wb - \wb_\star \|_2^2 - \tau_\star^2 \vert \le \eps \}$, there exists $\delta > 0$ such that 
\[
\min_{\wb \in S_\eps^c} \frac{1}{n} \sum_{i = 1}^n \ell(y_i - \< \xb_i, \wb\> ) \stackrel{p}{\to} \min_{\tau} D(\tau) + \delta.  
\]
As a consequence, for any empirical risk minimizer $\hat \wb$ satisfying
\[
\hat \wb \in \argmin_{\wb} \frac{1}{n} \sum_{i = 1}^n \ell(y_i - \< \xb_i, \wb\> ),
\]
we have 
\[
 \| \hat \wb - \wb_\star \|_2^2 \stackrel{p}{\to} \tau_\star^2.
\]
\end{proposition}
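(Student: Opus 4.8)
\emph{Strategy and reduction.} The plan is to obtain Proposition~\ref{prop:CGMT} by running the Convex Gaussian Min-max Theorem (CGMT) argument of~\cite{thrampoulidis2018precise} in the unregularized case. First I would pass to the error vector $\vb \defeq \wb - \wb_\star$; since $y_i - \langle \xb_i, \wb\rangle = z_i - \langle \xb_i, \vb \rangle$, the M-estimator becomes
\begin{align*}
  \hat{L}_n \defeq \min_{\vb \in \R^d} \frac{1}{n}\sum_{i=1}^n \ell\paren{z_i - \langle \xb_i, \vb\rangle}, \qquad \hat{\vb} = \hat{\wb} - \wb_\star,
\end{align*}
so it suffices to show $\hat{L}_n \gotop \min_\tau D(\tau)$ together with $\ltwo{\hat{\vb}}^2 \gotop \tau_\star^2$ and the deviation statement on $S_\eps^c$. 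Writing $\Xb \in \R^{n\times d}$ for the Gaussian design and using the biconjugate $\ell(t) = \sup_u [ut - \ell^*(u)]$, the objective equals $\max_{\boldsymbol{u}} \tfrac1n\brac{\sum_i u_i z_i - \boldsymbol{u}^\top \Xb \vb - \sum_i \ell^*(u_i)}$, which displays the bilinear Gaussian form $\boldsymbol{u}^\top \Xb \vb$ that CGMT acts on. The inner maximum is over a compact box: since $\sup_v |\ell_+'(v)| = L < \infty$, the loss is $L$-Lipschitz, so $\mathrm{dom}\,\ell^* \subseteq [-L,L]$ and $\boldsymbol{u}\in[-L,L]^n$. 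For the outer variable I would first establish (see the obstacle below) that $\hat\vb$ lies in a fixed ball $\ltwo{\vb}\le R$ with probability tending to one, and run CGMT on the restricted problem $\min_{\ltwo{\vb}\le R}$.

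\emph{Scalarization of the auxiliary problem.} CGMT replaces $\boldsymbol{u}^\top\Xb\vb$ in the primary problem by $\ltwo{\vb}\,\boldsymbol{g}^\top\boldsymbol{u} + \ltwo{\boldsymbol{u}}\,\boldsymbol{h}^\top\vb$ in the auxiliary problem, with $\boldsymbol{g}\in\R^n$, $\boldsymbol{h}\in\R^d$ independent standard Gaussian, and reduces probabilistic statements about the value and the minimizer of the primary problem to the auxiliary one. In the auxiliary problem the part of the objective depending on the direction of $\vb$ is $\ltwo{\boldsymbol{u}}\,\boldsymbol{h}^\top\vb$ with nonnegative coefficient $\ltwo{\boldsymbol{u}}$, so its minimizer over $\{\ltwo{\vb}=\tau\}$ is $\vb = -\tau\,\boldsymbol{h}/\ltwo{\boldsymbol{h}}$, independent of $\boldsymbol{u}$; substituting and using $\ltwo{\boldsymbol{h}}^2/n \to d/n \to \kappa$ leaves a problem in the scalar $\tau\ge 0$ and the vector $\boldsymbol{u}$ only, in which I would linearize $\ltwo{\boldsymbol{u}}$ by introducing one scalar $\beta$, turning $\tau\ltwo{\boldsymbol{h}}\ltwo{\boldsymbol{u}}$ into a quadratic in $\boldsymbol u$. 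The maximization over $\boldsymbol{u}$ then separates across coordinates, and each coordinate contributes $\max_u[u(\tau G_i + z_i) - \ell^*(u) - (\text{quadratic in }u)]$, which is the conjugate of $\ell^* + (\text{quadratic})$ and hence equals a Moreau envelope $e_\ell(\tau G_i + z_i;\,\tau_g/\beta)$ once the quadratic coefficient is matched (this rescaling of the dual parameters is the ``model rescaling'' in the proposition's title). By the law of large numbers, uniformly over $(\tau,\tau_g,\beta)$ in compacts — which is where $\E[\ell_+'(cG+Z)^2]<\infty$ and $\sup|\ell_+'|<\infty$ enter, together with Lipschitz continuity of the envelope in its parameters — $\tfrac1n\sum_i e_\ell(\tau G_i + z_i; \tau_g/\beta) \to \E_{(G,Z)}[e_\ell(\tau G + Z; \tau_g/\beta)]$; exchanging $\min_\tau$ with $\inf_{\tau_g}\sup_\beta$ in the deterministic limit by Sion's minimax theorem (the bracket is convex in $(\tau,\tau_g)$ and concave in $\beta$, by an argument identical to Lemma~\ref{lem:analysis_D_new}) yields exactly the scalar objective $D(\tau)$ of~\eqref{equation:d1}, and the auxiliary value converges to $\min_{\tau\in[0,R]}D(\tau)$.

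\emph{Conclusions.} Taking $R>\tau_\star$ — finite because $\argmin_\tau D$ is assumed bounded — the value statement $\hat L_n \gotop \min_\tau D(\tau)$ follows from CGMT. Uniqueness and positivity of $\tau_\star$ follow as in~\cite{thrampoulidis2018precise}: $D$ is strictly convex (the envelope term has strictly positive curvature because $P_z$ has a density and $\ell$ is non-affine), so the bounded $\argmin$ is a single point, and it is not $0$ since $P_z$ is non-degenerate. For the minimizer location, the deviation form of CGMT shows that restricting $\vb$ to $S_\eps^c \cap \{\ltwo{\vb}\le R\}$ raises the auxiliary value to $\inf\{D(\tau):\tau\le R,\ |\tau^2-\tau_\star^2|\ge\eps\}$, which equals $\min_\tau D(\tau)+\delta$ for some $\delta=\delta(\eps)>0$ by continuity and strict convexity; this transfers to the primary problem, and combined with the localization $\{\ltwo{\hat\vb}\le R\}$ it forces $\hat\vb\in S_\eps$ with probability $\to 1$, i.e. $\ltwo{\hat\wb-\wb_\star}^2 = \ltwo{\hat\vb}^2 \gotop \tau_\star^2$.

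\emph{Main obstacle.} The genuinely delicate step is moving from the ball-constrained problem, to which CGMT applies cleanly, to the unconstrained M-estimator: without a regularizer $\hat L_n$ is convex but (for a loss such as the pinball loss) neither strictly convex nor coercive, so its minimizer set is not a priori bounded. The plan here is to choose $R>\tau_\star$ (finite by the $\argmin_\tau D$ hypothesis); the CGMT minimizer-location conclusion for $\min_{\ltwo{\vb}\le R}\hat L_n$ places that constrained minimizer at $\ltwo{\vb}\approx\tau_\star$, strictly inside the ball, hence it is also an unconstrained minimizer, and one then shows — again via the CGMT value comparison applied on balls of growing radius together with the strict monotonicity of $D$ beyond $\tau_\star$ — that \emph{every} unconstrained minimizer lies in $\{\ltwo{\vb}\le R\}$ with $\ltwo{\vb}^2\to\tau_\star^2$. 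A secondary, purely bookkeeping-level difficulty is tracking the rescaling of the Gaussian design and the dual variable through the scalarization so that the auxiliary objective matches~\eqref{equation:d1} exactly, including the $\tfrac{\beta\tau_g}{2}$ and $-\tau\beta$ terms and the Moreau parameter $\tau_g/\beta$.
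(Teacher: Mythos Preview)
The paper does not prove Proposition~\ref{prop:CGMT}: it is stated as a direct quotation (``A simplification of Theorem 4.1 in \cite{thrampoulidis2018precise} up to model rescaling''), introduced with the sentence ``The following proposition is by \cite[Theorem 4.1]{thrampoulidis2018precise}, which uses the Convex Gaussian Comparison Theorem (CGMT)'', and then used as a black box in the proof of Theorem~\ref{thm:ERM_limit}. So there is nothing in the paper to compare your argument against.

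That said, what you have written is a faithful outline of the CGMT machinery in \cite{thrampoulidis2018precise} specialized to the unregularized case: the dualization via $\ell^*$ to expose the bilinear Gaussian term, the scalarization of the auxiliary problem over the direction of $\vb$, the Moreau-envelope identification, the uniform LLN, and the deviation form of CGMT for the minimizer location are exactly the ingredients of that reference. Your identification of the main obstacle --- passing from the ball-constrained CGMT conclusion to the unconstrained, possibly non-coercive M-estimator --- is also the right one, and is handled in \cite{thrampoulidis2018precise} essentially as you describe (localize the constrained minimizer strictly inside the ball, then use strict monotonicity of $D$ past $\tau_\star$). For the purposes of this paper, though, the intended ``proof'' is simply to cite \cite{thrampoulidis2018precise}.
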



We are now ready to prove Theorem~\ref{thm:ERM_limit}.
\begin{proof}[Proof of Theorem \ref{thm:ERM_limit}]
  We define (with some abuse of notation)
  \begin{align}\label{equation:d2}
    D(\tau, b) \equiv \inf_{\tau_g > 0} \sup_{ \beta > 0} \Big[ \frac{\beta \tau_g}{2} + \frac{1}{\kappa} \E_{(G, Z) \sim \cN(0, 1) \times P_z}\brac{ e_{\ell_b^\alpha}(\tau G + Z; \tau_g/\beta)} -  \tau\beta \Big]. 
  \end{align}

\noindent
{\bf Step 1. Show that $\hat b \gotop b_\star$. }
For any fixed $b\in\R$, define the associated minimum empirical risk (over $\wb\in\R^d$) as
\[
L_n(b) \equiv \min_\wb \what R_n(\wb, b). 
\]
Notice that $\what{b} = \argmin_{b\in\R} L_n(b)$.
Let $(\tau_\star, \kappa_\star, b_\star)$ be defined as in Lemma~\ref{lem:scf_exist_unique} (as well as Lemma~\ref{lemma:variational}). By Lemma~\ref{lemma:variational}, there exists some $\eps>0$ such that for any fixed $b\in[b_\star-\eps, b_\star+\eps]$, we have $\argmin_\tau D(\tau)$ is a singleton. Therefore the conditions of Proposition~\ref{prop:CGMT} is satisfied, from which we conclude that
\[
L_n(b) \stackrel{p}{\to} \min_{\tau} D(\tau, b). 
\]
Now, observe that $\min_\tau D(\tau, b)=\min_{\tau, \tau_g} \max_{\beta} D(\tau, b, \tau_g, \beta)$ is strictly convex in $b$ (this is because $D(\tau, b, \tau_g, \beta)$ has a positive definite Hessian w.r.t. $(\tau, b, \tau_g)$ at any $(\tau, b, \tau_g, \beta)$ by Lemma~\ref{lem:analysis_D_new}).
Then for any $\eps > 0$, there exists $\delta > 0$ such that 
\[
\min_{\tau} D(\tau, b_\star + \eps) \ge \min_{\tau} D(\tau, b_\star) + \delta,~~~~~\min_{\tau} D(\tau, b_\star - \eps) \ge \min_{\tau} D(\tau, b_\star) + \delta. 
\]
As a consequence, with probability going to $1$, we have the event 
\[
\{ L_n(b_\star + \eps) > L_n(b_\star) + \delta / 2, ~~~~ L_n(b_\star - \eps) > L_n(b_\star) + \delta / 2 \}. 
\]
Furthermore, since $L_n(b)$ is a convex function in $b$, this implies that, with probability going to $1$, we have $\vert \hat b - b_\star \vert \le \eps$. Note that this is for any $\eps > 0$. This proves that $\hat b \stackrel{p}{\to} b_\star$. 

\noindent
{\bf Step 2. Show that $\| \hat \wb - \wb_\star \|_2^2 \gotop \tau_\star^2$. } By Proposition \ref{prop:CGMT}, for any $\eps > 0$, there exists $\delta > 0$ such that 
\[
\min_{\wb \in S_\eps^c} \what R_n(\wb, b_\star) \stackrel{p}{\to} \min_{\tau} D(\tau, b_\star) + \delta.  
\]
where $S_\eps \equiv \{ \wb:  \vert \| \wb - \wb_\star \|_2^2 - \tau_\star^2 \vert \le \eps \}$. 

Furthermore, note that $\ell^\alpha(t) = -(1-\alpha)t\indic{t \le 0} + \alpha t\indic{t > 0}$ is a $1$-Lipschitz function in $t$, this gives
\[
\sup_{\wb} \Big\vert \what R_n(\wb, b_1) - \what R_n(\wb, b_2) \Big\vert \le \vert b_1 - b_2 \vert
\]
As a consequence, we have 
\[
\min_{\wb \in S_\eps^c} \what R_n(\wb, \hat b) \ge \min_{\wb \in S_\eps^c} \what R_n(\wb, b_\star) - \vert \hat b - b_\star \vert \stackrel{p}{\to} \min_{\tau} D(\tau, b_\star) + \delta. 
\]
In the mean time, by Proposition \ref{prop:CGMT}, we have 
\[
\min_{\wb} \what R_n(\wb, \hat b) \le \min_{\wb} \what R_n(\wb, b_\star) + \vert \what b - b_\star \vert \stackrel{p}{\to} \min_{\tau} D(\tau, b_\star). 
\]
This implies that, with probability approaching $1$, we have
\begin{align*}
  \min_{\wb \in S_\eps^c} \what R_n(\wb, b_\star) \ge \min_{\tau} D(\tau, b_\star) + 2\delta/3 ~~~{\rm and}~~~\min_{\wb} \what R_n(\wb, \hat b) \le \min_{\tau} D(\tau, b_\star) + \delta/3.
\end{align*}
On this event we have $\hat \wb \in S_\eps$. Note that this is for any $\eps > 0$. This proves that $\| \hat \wb - \wb_\star \|_2^2 \gotop \tau_\star^2$.
\end{proof}

%
%
%

\subsection{Proof of Lemma~\ref{lem:coverage_asymptotic} and Lemma~\ref{lem:coverage_expansion}}
\label{appendix:proof-coverage-lemmas}

Recall that 
\begin{align*}
\coverage(\hat{f}) = \P_{(\xb, y)}\paren{y \le \hat{\wb}^\top\xb + \hat{b}} = \E_{G\sim\normal(0, 1)}\brac{ \Phi_z\paren{\ltwo{\hat{\wb} - \wb_\star}G + \hat{b}}}.
\end{align*}
Eq. (\ref{eqn:coverage_f_convergence}) is simply by the fact that $T(\tau, b; G) \equiv \Phi_z(\tau G + b)$ is a continuous function in $(\tau, b)$, by Theorem \ref{thm:ERM_limit}, and by the dominant convergence theorem. This proves Lemma~\ref{lem:coverage_asymptotic}.

Furthermore, by Taylor expansion, we have 
\[
\begin{aligned}
&~\coverage_{\alpha, \kappa} = \E[\Phi_z(\tau_\star(\kappa) G + b_\star(\kappa))] \\
= &~\Phi_z(z_\alpha) + \phi_z(z_\alpha)\E[(\tau_\star(\kappa) G + b_\star(\kappa) - z_\alpha)] + \frac{1}{2} \phi_z'(z_\alpha)\E[(\tau_\star(\kappa) G + b_\star(\kappa) - z_\alpha)^2] \\
&~ + \frac{1}{6} \E[\phi_z''(\xi) (\tau_\star(\kappa) G + b_\star(\kappa) - z_\alpha)^3]\\
= &~ \alpha + \phi_z(z_\alpha)(b_\star(\kappa) - z_\alpha) + \frac{1}{2}\phi_z'(z_\alpha) \tau_\star^2(\kappa) + o(\kappa) \\
=&~ \alpha + \paren{\phi_z(z_\alpha)\barb_0 + \frac{1}{2} \phi_z'(z_\alpha) \bartau_0^2 } \kappa + o(\kappa),
\end{aligned}
\]
where the last equality is by Lemma \ref{lem:local-linear-expansion} and by the boundedness of $\phi_z''$. This proves Eq. (\ref{eqn:coverage_expansion}) and thus Lemma~\ref{lem:coverage_expansion}.

\section{Extension to over-parametrized learning}
\label{appendix:overparam}


In this section we provide a variant of Theorem~\ref{theorem:main} in the over-parametrized case, i.e. when $d\ge n$, so that the learned quantile functions have the capacity to interpolate the entire training dataset. We still assume that the data are generated from the linear model~\eqref{equation:linear-model}. For notational simplicity, throughout this section we let $\btheta\defeq [\wb^\top, b]^\top\in\R^{d+1}$ denote the concatenation of $\wb$ and $b$, and let $\what{R}_n(\btheta)$ denote the empirical risk~\eqref{equation:quantile-reg-linear}. We also let $\wt{\xb}=[\xb^\top, 1]^\top\in\R^{d+1}$ denote the augmented feature so that $\btheta^\top\wt{\xb}=\wb^\top\xb+b$. We let $\wt{\Xb}\in\R^{n\times (d+1)}$ denote the augmented input matrix and $\zb\in\R^n$ denote the noise vector.

In the over-parametrized case, the ERM is no longer well-defined as there are multiple interpolating solutions. We consider instead the quantile functions obtained on the gradient descent path on the empirical risk $\what{R}_n$. More precisely, we consider the vanilla (sub)-gradient descent algorithm: Initialize $\btheta_1=\bzero$, and iterate for all $t\ge 1$
\begin{align}
  \label{equation:subgradient-descent}
  \btheta_{t+1} = \btheta_t - \eta_t \gb_t,
\end{align}
where $\gb_t\in\partial \what{R}_n(\btheta_t)$ is any sub-gradient of the empirical risk $\what{R}_n$~\eqref{equation:quantile-reg-linear} at $\btheta_t$.

\begin{theorem}[Quantile regression under over-parametrization]
  \label{theorem:overparam}
  Suppose the data is generated from the Gaussian linear model~\eqref{equation:linear-model} with $\ltwo{\wb}=R$, and the nominal quantile level $\alpha\in(0.5, 1)$. Further assume the noise distribution $P_z$ is symmetric about $0$ and $\sigma^2$-sub-Gaussian. Then, there exists an absolute constant $C_0>0$ such that if $n\ge C_0(d+\log(1/\delta))$, the following holds.

  Let $\btheta_t$ be the iterates of the sub-gradient descent algorithm~\eqref{equation:subgradient-descent} with step-size $\eta_t\defeq \beta/\sqrt{t}$ for any $\beta>0$, and let $\btheta_\infty\in\R^{d+1}$ denote any limit point of $\set{\btheta_t}_{t\ge 1}$, then we have
  \begin{enumerate}[label=(\alph*), leftmargin=1.5pc]
  \item $\btheta_\infty$ is the minimum $\ell_2$-norm interpolator of the training data, i.e.
    \begin{align*}
      \btheta_\infty = \argmin_{\btheta\in\R^d} \set{\ltwo{\btheta}: \wt{\Xb}\btheta = \yb}.
    \end{align*}
  \item With probability at least $1-\delta$ (over the training data), the coverage of the limiting quantile function $\hat{f}_\infty\defeq \btheta_\infty^\top \wt{\xb} = \wb_\infty^\top\xb + b_\infty$ concentrates around $0.5$:
    \begin{align*}
      \abs{\coverage(\hat{f}_\infty) - 0.5} \le C(R+\sigma) \cdot \sqrt{\frac{\log(1/\delta)}{d}} \le C(R+\sigma) \cdot \sqrt{\frac{\log(1/\delta)}{n}},
    \end{align*}
    where $C>0$ is a constant that only depends on $\sup_{t\in\R} |\phi_z(t)|$.
  \end{enumerate}
\end{theorem}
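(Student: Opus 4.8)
}
The plan is to treat the two parts separately. For part~(a), an implicit-bias statement, I would combine the fact that subgradient descent started at $\bzero$ stays in the row space of the data with a sharpness (global error bound) property of the polyhedral objective $\what{R}_n$. Since $\btheta_1=\bzero$ and every subgradient has the form $\gb_t=-\tfrac1n\sum_{i=1}^n s_i^{(t)}\wt{\xb}_i$ with $s_i^{(t)}\in[-(1-\alpha),\alpha]$, all iterates---hence any limit point $\btheta_\infty$---lie in the fixed subspace $V_0\defeq\mathrm{span}\{\wt{\xb}_1,\dots,\wt{\xb}_n\}$. When $d\ge C_0(n+\log(1/\delta))$ the matrix $\wt{\Xb}\in\R^{n\times(d+1)}$ has full row rank almost surely, so the interpolating set $\mathcal{S}\defeq\{\btheta:\wt{\Xb}\btheta=\yb\}$ is a nonempty affine subspace on which $\what{R}_n$ attains its minimum value $0$; the unique point of $\mathcal{S}\cap V_0$ is exactly the minimum-$\ell_2$-norm interpolator $\btheta^\star$, and for $\btheta\in V_0$ one has $\mathrm{dist}(\btheta,\mathcal{S})=\ltwo{\btheta-\btheta^\star}$. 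Because $\what{R}_n$ is a polyhedral convex function it obeys a global error bound $\what{R}_n(\btheta)\ge\mu\cdot\mathrm{dist}(\btheta,\mathcal{S})$ for some (data-dependent) $\mu>0$. Combining the one-step identity $\ltwo{\btheta_{t+1}-\btheta^\star}^2\le\ltwo{\btheta_t-\btheta^\star}^2-2\eta_t\langle\gb_t,\btheta_t-\btheta^\star\rangle+\eta_t^2\ltwo{\gb_t}^2$ with the subgradient inequality $\langle\gb_t,\btheta_t-\btheta^\star\rangle\ge\what{R}_n(\btheta_t)\ge\mu\ltwo{\btheta_t-\btheta^\star}$ and the a.s.\ boundedness of $\ltwo{\gb_t}\le L$ gives, for $d_t\defeq\ltwo{\btheta_t-\btheta^\star}$, the recursion $d_{t+1}^2\le d_t^2-2\eta_t\mu d_t+\eta_t^2 L^2$; since $\eta_t=\beta/\sqrt{t}$ satisfies $\eta_t\to0$ and $\sum_t\eta_t=\infty$, this forces $d_t\to0$, so $\btheta_\infty=\btheta^\star$.

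For part~(b), write $\btheta_\infty=\btheta^\star=\wt{\Xb}^\top(\wt{\Xb}\wt{\Xb}^\top)^{-1}\yb$. Reading off the last coordinate (the last column of $\wt{\Xb}$ is the all-ones vector $\mathbf{1}$, and $\wt{\Xb}\wt{\Xb}^\top=\Xb\Xb^\top+\mathbf{1}\mathbf{1}^\top$) and using $\yb=\Xb\wb_\star+\zb$,
\begin{align*}
b_\infty=\mathbf{1}^\top(\Xb\Xb^\top+\mathbf{1}\mathbf{1}^\top)^{-1}\Xb\wb_\star+\mathbf{1}^\top(\Xb\Xb^\top+\mathbf{1}\mathbf{1}^\top)^{-1}\zb.
\end{align*}
First I would reduce the coverage deviation to $|b_\infty|$: by Lemma~\ref{lemma:coverage-expression}, $\coverage(\hat{f}_\infty)=\E_{G\sim\normal(0,1)}[\Phi_z(\ltwo{\wb_\infty-\wb_\star}G+b_\infty)]$; since $P_z$ is symmetric about $0$ we have $\Phi_z(-t)=1-\Phi_z(t)$, hence $\E_G[\Phi_z(cG)]=1/2$ for every $c\ge0$, and since $\Phi_z$ is $(\sup_t|\phi_z(t)|)$-Lipschitz, $|\E_G[\Phi_z(cG+b)]-\E_G[\Phi_z(cG)]|\le\sup_t|\phi_z(t)|\cdot|b|$. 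Therefore $|\coverage(\hat{f}_\infty)-1/2|\le\sup_t|\phi_z(t)|\cdot|b_\infty|$, irrespective of $\ltwo{\wb_\infty-\wb_\star}$.

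It then remains to bound $|b_\infty|$. The key input is the non-asymptotic smallest-singular-value estimate $\lambda_{\min}(\Xb\Xb^\top)\ge c_0 d$ with probability $\ge1-\delta$, valid precisely because $d\ge C_0(n+\log(1/\delta))$. For the noise term, condition on $\Xb$: it equals $\wb^\top\zb$ with $\wb=(\Xb\Xb^\top+\mathbf{1}\mathbf{1}^\top)^{-1}\mathbf{1}$, a sum of independent sub-Gaussians, hence at most $C\sigma\ltwo{\wb}\sqrt{\log(1/\delta)}$ with probability $\ge1-\delta$, and $\ltwo{\wb}\le\lambda_{\min}(\Xb\Xb^\top)^{-1}\sqrt{n}\le\sqrt{n}/(c_0 d)\le1/(c_0\sqrt{d})$ since $n\le d$. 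For the signal term, rotational invariance of the row distribution of $\Xb$ lets me take $\wb_\star=R\mathbf{e}_1$, so it equals $R\,\mathbf{1}^\top(\Ab+\mathbf{1}\mathbf{1}^\top+\gb\gb^\top)^{-1}\gb$ with $\gb$ the first column of $\Xb$ and $\Ab$ the Gram matrix of the remaining columns (independent of $\gb$); two applications of Lemma~\ref{lemma:woodbury} bound it by $R|\mathbf{1}^\top\Ab^{-1}\gb|$, which conditionally on $\Ab$ is a centered Gaussian with variance at most $\lambda_{\min}(\Ab)^{-2}n$, so it is at most $CR\sqrt{n}/d\cdot\sqrt{\log(1/\delta)}\le CR\sqrt{\log(1/\delta)/d}$. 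A union bound gives $|b_\infty|\le C(R+\sigma)\sqrt{\log(1/\delta)/d}$, and multiplying by $\sup_t|\phi_z(t)|$ yields the stated bound; the final inequality $\sqrt{\log(1/\delta)/d}\le\sqrt{\log(1/\delta)/n}$ holds since $d\ge n$.

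The hard part will be part~(a): with the non-square-summable steps $\eta_t=\beta/\sqrt{t}$, the textbook ``$\sum_t\eta_t^2<\infty$'' argument does not yield convergence of the iterates, so one genuinely needs the polyhedral error bound for $\what{R}_n$---both to drive $d_t\to0$ and to rule out a limit point sitting at a kink where $\what{R}_n>0$. Part~(b) is comparatively routine once one notices that the symmetry of $P_z$ collapses the deviation entirely onto $|b_\infty|$ and that both contributions to $b_\infty$ inherit the favorable $\ltwo{\mathbf{1}}/\lambda_{\min}(\Xb\Xb^\top)=O(\sqrt{n}/d)$ scaling. Finally, $\alpha\in(0.5,1)$ plays no role in the conclusion beyond framing: the minimum-norm interpolator, hence its coverage, is independent of $\alpha$, so the $\approx0.5$ coverage is a severe under-coverage precisely because $\alpha>1/2$.
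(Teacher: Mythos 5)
Your part~(b) matches the paper's proof essentially line for line: same expression $b_\infty=\ones_n^\top(\wt{\Xb}\wt{\Xb}^\top)^{-1}(\Xb\wb_\star+\zb)$, same split into signal and noise components, same two applications of the Sherman--Morrison identity (your rotation $\wb_\star\mapsto R\mathbf{e}_1$ followed by peeling off $\gb\gb^\top$ is algebraically identical to the paper's use of the orthogonal complement $\Vb_\star$), same sub-Gaussian concentration conditioned on the Gram, and the identical reduction $|\coverage(\hat f_\infty)-1/2|\le \sup_t|\phi_z(t)|\cdot|b_\infty|$ via the symmetry of $P_z$ and Lipschitzness of $\Phi_z$.

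For part~(a) your argument is correct but genuinely more heavy machinery than the paper uses, and it aims at a stronger conclusion than necessary. You invoke a Hoffman-type polyhedral error bound $\what R_n(\btheta)\ge\mu\cdot\mathrm{dist}(\btheta,\mathcal S)$ to drive the iterate distance $d_t=\ltwo{\btheta_t-\btheta^\star}$ to zero through the recursion $d_{t+1}^2\le d_t^2-2\eta_t\mu d_t+\eta_t^2 L^2$. That route does work (the error bound holds since $\what R_n$ is polyhedral convex with $\argmin\what R_n=\mathcal S$, and a case split on whether $d_t$ is above or below $\eta_t L^2/(2\mu)$ shows $d_t\to0$), and it even proves the stronger fact that the iterates converge, not merely that limit points are pinned down. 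The paper takes a shortcut tailored to the theorem's actual claim, which is stated only for \emph{any limit point}: it cites a last-iterate rate for subgradient descent with $\eta_t=\beta/\sqrt t$ (adapted from Orabona's Corollary~3) to get $\what R_n(\btheta_T)\to 0$, notes that $\what R_n$ is continuous and $\mathrm{span}\{\wt{\xb}_i\}$ is closed, and concludes any limit point lies in the span and has zero risk, hence equals the min-norm interpolator. That avoids the error-bound constant and the fiddly one-step recursion entirely. So your diagnosis that the step sizes are not square-summable and ``the textbook argument does not yield convergence of the iterates'' is accurate, but the paper sidesteps this by never needing iterate convergence---only function-value convergence of the last iterate plus a closedness/continuity argument is required to characterize the limit points. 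If you want to keep your argument, it is a valid alternative; if you want the lighter route, replace the error bound by the cited last-iterate function-value rate.

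One small note: the paper's theorem statement writes $n\ge C_0(d+\log(1/\delta))$, but the proof (and the over-parametrized setting $d\ge n$) clearly requires $d\ge C_0(n+\log(1/\delta))$, which is what you correctly used.
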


\paragraph{Implications}
Theorem~\ref{theorem:overparam} shows that a severe under-coverage bias in the over-parametrized case: The coverage of the limiting quantile function (of the gradient descent path) is $0.5\pm \wt{O}(1/\sqrt{d})$, \emph{regardless of the nominal quantile level} $\alpha\in(0.5,1)$. Therefore $\hat{f}_\infty$ under-covers by $\alpha-0.5=\Theta(1)$, and this under-coverage bias does not diminish as we increase $n,d$.

The proof of Theorem~\ref{theorem:overparam} is established in the following two subsections.

\subsection{Proof of Part (a)}
We begin by observing that the sub-gradients of the quantile risk~\eqref{equation:quantile-reg-linear} takes the form
\begin{align}
  \label{equation:subgradient-quantile-risk}
  \gb_t = \frac{1}{n}\sum_{i=1}^n (\ell^\alpha)'(y_i - \btheta_t^\top \wt{\xb}_i) \cdot \wt{\xb}_i \in {\rm span}\set{\wt{\xb}_1,\dots,\wt{\xb}_n},
\end{align}
where $(\ell^{\alpha})'(t)$ is the sub-gradient of $\ell^\alpha$, which takes value $-(1-\alpha)$ at $t<0$, $\alpha$ at $t>0$, and any value within $[-(1-\alpha), \alpha]$ at $t=0$. As we initialized at $\btheta_1=\bzero$, this implies that
\begin{align*}
  \btheta_t \in {\rm span}\set{\wt{\xb}_1,\dots,\wt{\xb}_n}
\end{align*}
for all $t\ge 1$. Also, by~\eqref{equation:subgradient-quantile-risk} we have $\ltwo{\gb_t}\le M\defeq \max_{i\in[n]} \ltwo{\wt{\xb}_i}$, since $|(\ell^\alpha)'|\le \max\set{\alpha, 1-\alpha}\le 1$.

Also, let $\btheta_{\ell_2}$ denote the minimum $\ell_2$-norm interpolator of the dataset:
\begin{align}
  \label{equation:min-l2-formula}
  \btheta_{\ell_2} \defeq \argmin_{\btheta\in\R^d} \set{\ltwo{\btheta}: \wt{\Xb}\btheta = \yb} = \wt{\Xb}^\dagger \yb = \wt{\Xb}^\top (\wt{\Xb}\wt{\Xb}^\top)^{-1}\yb.
\end{align}
This $\btheta_{\ell_2}$ exists whenever $d+1\ge n$ (so that $\wt{\xb}_i\in\R^{d+1}$ are linearly independent with probability one and thus $\wt{\Xb}\wt{\Xb}^\top\in\R^{n\times n}$ is invertible). It further satisfies
\begin{itemize}[leftmargin=1.5pc]
\item $\what{R}_n(\btheta_{\ell_2})=0$ (since $\btheta_{\ell_2}^\top\wt{\xb}_i=y_i)$. Therefore $\btheta_{\ell_2}$ is a minimizer of $\what{R}_n$ since $\what{R}_n\ge 0$.
\item $\btheta_{\ell_2}\in{\rm span}\set{\wt{\xb}_1,\dots,\wt{\xb}_n}$.
\item $\btheta_{\ell_2}$ is the only point within ${\rm span}\set{\wt{\xb}_1,\dots,\wt{\xb}_n}$ that satisfies $\what{R}_n(\btheta_{\ell_2})=0$, as any such point $\btheta\in\R^{d+1}$ must satisfy $\wt{\Xb}\btheta=\yb$, and there is only one such point in the span because of the linear independence of $\set{\wt{\xb}_i}_{i=1}^n$.
\end{itemize}
We now use the following lemma on the last-iterate convergence of sub-gradient descent, adapted from~\citep[Corollary 3]{orabona2020last}:
\begin{lemma}[Last-iterate convergence of sub-gradient descent]
  \label{lemma:subgradient}
  Suppose $F:\R^D\to\R$ is a convex function with bounded sub-gradients: $\ltwo{\gb}\le M$ for all $\gb\in\partial F(\btheta)$ and any $\btheta\in\R^D$. Let $\btheta_\star\in\R^D$ be any minimizer of $F$ with $F_\star=F(\btheta_\star)>-\infty$. Consider the sub-gradient descent algorithm
  \begin{align*}
    \btheta_{t+1} = \btheta_t - \eta_t \gb_t,
  \end{align*}
  where $\gb_t\in\partial F(\btheta_t)$, and $\eta_t=\beta/\sqrt{t}$ for some $\beta>0$. Then, we have for all $T\ge 3$ that
  \begin{align*}
    F(\btheta_T) - F_\star \le \frac{\ltwo{\btheta_1 - \btheta_\star}^2 + 4M^2\beta^2\log T}{2\beta\sqrt{T}}.
  \end{align*}
\end{lemma}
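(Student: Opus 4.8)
The plan is to follow the suffix-averaging argument of Shamir--Zhang, augmented by a boundedness-of-iterates step so that no compact-domain hypothesis is needed (this is exactly the content of the cited Corollary~3 of \citet{orabona2020last}). The workhorse is the standard one-step inequality: expanding $\btheta_{t+1}=\btheta_t-\eta_t\gb_t$ and using the subgradient inequality $\langle\gb_t,\btheta_t-\ub\rangle\ge F(\btheta_t)-F(\ub)$ together with $\ltwo{\gb_t}\le M$ gives, for every $\ub\in\R^D$ and every $t$, $\ltwo{\btheta_{t+1}-\ub}^2 \le \ltwo{\btheta_t-\ub}^2 - 2\eta_t\bigl(F(\btheta_t)-F(\ub)\bigr) + \eta_t^2 M^2$.

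First I would control the trajectory. Taking $\ub=\btheta_\star$ and using $F(\btheta_t)\ge F_\star$, the one-step inequality telescopes to $\ltwo{\btheta_t-\btheta_\star}^2 \le \ltwo{\btheta_1-\btheta_\star}^2 + M^2\sum_{j<t}\eta_j^2$, which for $\eta_t=\beta/\sqrt t$ is at most $\ltwo{\btheta_1-\btheta_\star}^2 + M^2\beta^2(1+\log T)$ for all $t\le T$. Thus, although the domain is unbounded, the iterates stay inside an explicitly bounded ball, which effectively restores the bounded-domain setting. I would also record the elementary drift bound $\ltwo{\btheta_t-\btheta_s}\le \sum_{j=s}^{t-1}\eta_j\ltwo{\gb_j}\le M\sum_{j=s}^{t-1}\eta_j = O\bigl(M\beta(\sqrt t-\sqrt s)\bigr)$.

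Next comes the suffix-averaging telescoping. Writing $\Delta_t\defeq F(\btheta_t)-F_\star$ and $b_k\defeq \tfrac{1}{k}\sum_{t=T-k+1}^T\Delta_t$ (so $b_1=\Delta_T$ and $b_T=\tfrac{1}{T}\sum_{t\le T}\Delta_t$), the identity $b_1=b_T+\sum_{k=1}^{T-1}(b_k-b_{k+1})$ together with $b_k-b_{k+1}=\tfrac{1}{k(k+1)}\sum_{t=T-k+1}^T(\Delta_t-\Delta_{T-k})$ rewrites the last-iterate gap $\Delta_T$ as the plain average $b_T$ plus a weighted sum of ``suffix gaps''. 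I would bound $b_T$ by summing the one-step inequality with $\ub=\btheta_\star$ over $t\le T$ and Abel-summing the $\tfrac{1}{2\eta_t}\bigl(\ltwo{\btheta_t-\btheta_\star}^2-\ltwo{\btheta_{t+1}-\btheta_\star}^2\bigr)$ terms, using monotonicity of $\eta_t$ and the trajectory bound from the previous paragraph for the remainder. For each suffix gap $\sum_{t=s+1}^T\bigl(F(\btheta_t)-F(\btheta_s)\bigr)$ with $s=T-k$, I would use convexity and the identity $\langle\gb_t,\btheta_t-\btheta_s\rangle = \tfrac{1}{2\eta_t}\bigl(\ltwo{\btheta_t-\btheta_{t+1}}^2+\ltwo{\btheta_t-\btheta_s}^2-\ltwo{\btheta_{t+1}-\btheta_s}^2\bigr)$, then Abel-sum in $t$ (so that the quadratic distance terms partly telescope) and feed in the drift bound for $\ltwo{\btheta_t-\btheta_s}$.

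The final step is the accounting: substituting $\eta_t=\beta/\sqrt t$ and using $\sum_{t\le T}t^{-1/2}\le2\sqrt T$, $\sum_{t\le T}t^{-1}\le1+\log T$, $\tfrac{1}{\eta_t}-\tfrac{1}{\eta_{t-1}}\le\tfrac{1}{2\beta\sqrt{t-1}}$, and the elementary estimate $\sqrt T-\sqrt{T-k}\le k/\sqrt T$, one checks that after the telescoping the contribution of each suffix gap combines with its $\tfrac{1}{k(k+1)}$ weight so that $\sum_{k=1}^{T-1}$ evaluates to $O\bigl(M^2\beta\log T/\sqrt T\bigr)$; together with the bound on $b_T$ and after collecting constants this yields $F(\btheta_T)-F_\star\le \tfrac{\ltwo{\btheta_1-\btheta_\star}^2+4M^2\beta^2\log T}{2\beta\sqrt T}$ for $T\ge3$. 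I expect this accounting to be the main obstacle: pushing the drift terms $\ltwo{\btheta_t-\btheta_s}$ through the Abel summation at exactly the rate needed for the $k(k+1)$ denominators to absorb the growth of the inner sums is delicate, and it is the one place where the unbounded-domain refinement over the classical Shamir--Zhang proof is genuinely used and where the trajectory bound of the first step does the work. (Alternatively one may simply invoke \citet{orabona2020last} directly, or apply an off-the-shelf Shamir--Zhang bound on the ``effective domain'' produced by the first step, at the cost of a possibly larger numerical constant.)
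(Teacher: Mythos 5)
Your proposal is correct, and it follows exactly the argument used in the cited source: the paper does not reproduce a proof of this lemma but simply invokes Corollary~3 of \citet{orabona2020last}, which is precisely the Shamir--Zhang suffix-averaging bound extended to unbounded domains via the trajectory-boundedness step you carry out first. The one-step inequality, the telescoped bound on $\ltwo{\btheta_t - \btheta_\star}$, the drift bound, the decomposition $\Delta_T = b_T + \sum_{k=1}^{T-1}(b_k - b_{k+1})$ with the $\tfrac{1}{k(k+1)}$ weights, and the polarization identity for the suffix gaps are all the right ingredients; as you note, the only work left is the bookkeeping to land on the constant $4$, and invoking the reference directly (as the paper does) is a legitimate shortcut.
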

Applying Lemma~\ref{lemma:subgradient} with on the quantile risk $\what{R}_n$ the associated minimizer $\btheta_{\ell_2}$, we get that (for $T\ge 3$)
\begin{align*}
  \what{R}_n(\btheta_T) \le \frac{\ltwo{\btheta_{\ell_2}}^2 + 4M^2\beta^2\log T}{2\beta\sqrt{T}}.
\end{align*}
This implies that $\what{R}_n(\btheta_T)\to 0$ as $T\to\infty$.

The above implies that any limit point $\btheta_\infty$ of the sequence $\set{\btheta_t}_{t\ge 1}$ must satisfy
\begin{itemize}[leftmargin=1.5pc]
\item $\what{R}_n(\btheta_\infty)=0$, by continuity of $\what{R}_n$;
\item $\btheta_\infty\in{\rm span}(\wt{\xb}_1,\dots,\wt{\xb}_n)$, by the closedness of the span.
\end{itemize}
Combined with the above assertions on $\btheta_{\ell_2}$, this shows that $\btheta_\infty=\btheta_{\ell_2}$, establishing part (a) of the theorem.
\qed

\subsection{Proof of part (b)}

We first establish a covariance lower bound useful for the subsequent analyses.
As $\xb_i\sim\normal(\bzero, \Ib_d)$, the input matrix $\Xb\in\R^{n\times d}$ has i.i.d. $\normal(0,1)$ entries, and thus $\Xb$'s columns are also i.i.d. $\normal(\bzero, \Ib_n)$. By standard sub-Gaussian covariance concentration, we have with probability at least $1-\delta$ that
\begin{align*}
  \opnorm{\frac{1}{d}\Xb\Xb^\top - \Ib_n} \le C\paren{\sqrt{\frac{n+\log(1/\delta)}{d}} + \frac{n+\log(1/\delta)}{d} }
\end{align*}
for some absolute constant $C>0$ (this can be found in e.g.~\citep[Example 4.7.3]{vershynin2018high}). In particular, we have $\opnorm{\Xb\Xb^\top/d - \Ib_n} \le 1/4$ provided $d\ge C(n+\log(1/\delta))$. On this event, we have
\begin{align*}
  \Xb\Xb^\top \succeq \frac{3d}{4}\Ib_n.
\end{align*}
We will apply a small variant of this result: as long as $d-1\ge C(n+\log(1/\delta))$, we also have for any fixed matrix $\Vb_\star\in\R^{d\times (d-1)}$ with orthogonal columns that
\begin{align}
  \label{equation:cov-lower-bound}
  \Xb\Vb_\star\Vb_\star^\top\Xb^\top \succeq \frac{3(d-1)}{4}\Ib_n \succeq \frac{d}{2}\Ib_n.
\end{align}

\paragraph{Bounding $|b_\infty|$}
By~\eqref{equation:min-l2-formula}, we have
\begin{align*}
  & \quad \begin{bmatrix}
    \wb_\infty \\ b_\infty
  \end{bmatrix} = \btheta_\infty = \btheta_{\ell_2} = \wt{\Xb}^\top (\wt{\Xb}\wt{\Xb}^\top)^{-1}\yb = \wt{\Xb}^\top (\wt{\Xb}\wt{\Xb}^\top)^{-1} (\Xb\wb_\star + \zb) \\
  & = \begin{bmatrix} \Xb^\top \\ \ones_n^\top \end{bmatrix} (\wt{\Xb}\wt{\Xb}^\top)^{-1} (\Xb\wb_\star + \zb).
\end{align*}
Therefore
\begin{align*}
  b_\infty=\ones_n^\top (\wt{\Xb}\wt{\Xb}^\top)^{-1} (\Xb\wb_\star + \zb) = \underbrace{\ones_n^\top \paren{\Xb\Xb^\top + \ones_n\ones_n^\top}^{-1} \Xb\wb_\star}_{\rm I} + \underbrace{\ones_n^\top \paren{\Xb\Xb^\top + \ones_n\ones_n^\top}^{-1} \zb}_{\rm II}.
\end{align*}
We now bound terms I and II separately.

For term I, let us assume for the moment that $\ltwo{\wb_\star}=1$. Let $\Vb_\star\in\R^{d\times d-1}$ denote the orthogonal complement to the matrix $\wb_\star$ (i.e. so that $[\wb_\star,~ \Vb_\star]\in\R^{d\times d}$ is an orthogonal matrix). We have
\begin{align*}
  {\rm I} = \ones_n^\top \paren{\Xb\Vb_\star\Vb_\star^\top\Xb^\top + \Xb\wb_\star\wb_\star^\top\Xb^\top + \ones_n\ones_n^\top}^{-1}\Xb\wb_\star.
\end{align*}
As $\Xb\Vb_\star\Vb_\star^\top\Xb^\top$ is an positive definite matrix with probability one whenever $d-1\ge n$, applying Lemma~\ref{lemma:woodbury} twice, we get
\begin{align*}
  \abs{\rm I} \le \abs{ \ones_n^\top \paren{\Xb\Vb_\star\Vb_\star^\top\Xb^\top + \ones_n\ones_n^\top}^{-1}\Xb\wb_\star  } \le \abs{ \ones_n^\top \paren{\Xb\Vb_\star\Vb_\star^\top\Xb^\top}^{-1}\Xb\wb_\star }.
\end{align*}
Now, notice that $\Xb\Vb_\star\in\R^{n\times d-1}$ and $\Xb\wb_\star\in\R^n$ have i.i.d. $\normal(0,1)$ entries and are independent of each other. Further, $\Xb\wb_\star\sim\normal(\bzero, \Ib_n)$, and thus the random variable $\ones_n^\top \paren{\Xb\Vb_\star\Vb_\star^\top\Xb^\top}^{-1}\Xb\wb_\star$ (conditional on $\Xb \Vb_\star$) is $\ltwo{\vb_{\rm I}}^2$-sub-Gaussian (due to the independence between $\Xb \Vb_\star$ and $\Xb \wb_\star$), where
\begin{align*}
  \ltwo{\vb_{\rm I}}^2 = \ones_n^\top \paren{\Xb\Vb_\star\Vb_\star^\top\Xb^\top}^{-2}\ones_n \le \frac{4}{d^2}\ltwo{\ones_n}^2 = \frac{4n}{d^2},
\end{align*}
where the inequality used the covariance lower bound~\eqref{equation:cov-lower-bound}. This shows that
\begin{align*}
  \abs{\rm I} \le C\sqrt{4n/d^2 \cdot \log(1/\delta)} \le C\sqrt{\log(1/\delta)/d}
\end{align*}
with probability at least $1-\delta$, where the last step used $n\le d$. It is straightforward to see that, for general $\ltwo{\wb_\star}=R$, we have
\begin{align}
  \label{equation:overparam-term-i}
  \abs{\rm I} \le CR\sqrt{4n/d^2 \cdot \log(1/\delta)} \le CR\sqrt{\log(1/\delta)/d}.
\end{align}

For term II, As $\Xb$ and $\zb$ are independent, the random variable ${\rm II}=\ones_n^\top(\Xb\Xb^\top + \ones_n\ones_n^\top)^{-1}\zb$ (conditional on $\bX$) is $\ltwo{\vb_{\rm II}}^2\sigma^2$-sub-Gaussian, where
\begin{align*}
  \ltwo{\vb_{\rm II}}^2 = \ones_n^\top (\Xb\Xb^\top + \ones_n\ones_n^\top)^{-2}\ones_n \le \frac{4}{d^2}\ltwo{\ones_n}^2 = \frac{4n}{d^2} \le \frac{4}{d}.
\end{align*}
Similar as above, we have with probability at least $1-\delta$ that
\begin{align}
  \label{equation:overparam-term-ii}
  \abs{\rm II} \le C\sigma\sqrt{\log(1/\delta)/d}.
\end{align}
Combining~\eqref{equation:overparam-term-i} and~\eqref{equation:overparam-term-ii}, we get with probability at least $1-\delta$ that (rescaling $3\delta\to \delta$)
\begin{align}
  \label{equation:overparam-bound-b}
  \abs{b_\infty} \le C\paren{R+\sigma}\sqrt{\log(1/\delta)/d}.
\end{align}

\paragraph{Bounding the coverage bias}
We now translate the bound on $\abs{b_\infty}$ to a bound on the coverage error $\abs{\coverage(\hat{f}_\infty) - 0.5}$. First, note that by symmetry of the distribution of $(\wb_\infty-\wb_\star)^\top\xb$ and the fact that $\Phi_z(t)+\Phi_z(-t)=1$ (due to the symmetry of $P_z$), we have
\begin{align*}
  \E\brac{ \Phi_z\paren{ (\wb_\infty-\wb_\star)^\top\xb } } = \E\brac{ \frac{1}{2}\paren{ \Phi_z\paren{ (\wb_\infty-\wb_\star)^\top\xb } + \Phi_z\paren{ -(\wb_\infty-\wb_\star)^\top\xb } } } = 0.5.
\end{align*}
Therefore we have
\begin{align*}
  & \quad \abs{ \coverage(\hat{f}_\infty) - 0.5 } = \abs{ \E\brac{ \Phi_z\paren{ (\wb_\infty-\wb_\star)^\top\xb + b_\infty } -  \Phi_z\paren{ (\wb_\infty-\wb_\star)^\top\xb } } } \\
  & \le \sup_{t\in\R} |\phi_z(t)| \cdot \abs{b_\infty} \\
  & \le C \sup_{t\in\R} |\phi_z(t)| \cdot \abs{b_\infty} \le C \sup_{t\in\R} |\phi_z(t)| \cdot \paren{R+\sigma}\sqrt{\log(1/\delta)/d}.
\end{align*}
Notably the bound is also upper bounded by $C \sup_{t\in\R} |\phi_z(t)| \cdot (R+\sigma)\sqrt{\log(1/\delta)/n}$ as we assumed $d\ge n$. This proves part (b) of the theorem.
\qed


\section{Proofs for Section~\ref{section:extension}}

\subsection{Proof of Corollary~\ref{corollary:b}}
\label{appendix:proof-b}


First, part (a) is a direct consequence of Lemma~\ref{lem:local-linear-expansion} which was established within the proof of Theorem~\ref{theorem:main}.

We now prove part (b). We first show that $\barb_0<0$ for $P_z$ being any Gaussian distribution. We first observe that to determine the sign of $\barb_0$, it suffices to consider the standard Gaussian: The value of $\barb_0$ does not depend on the location parameter (since $\phi_z$ and $z_\alpha$ shifts together with a location shift). Also, scalings won't change the sign of $\barb_0$ (although it scales the numerator and the denominator by a different amount).

We next calculate $\barb_0$ for $P_z=\normal(0,1)$. We have $\phi_z'(z_\alpha) = -z_\alpha \phi_z(z_\alpha)$ for $\phi_z(t)=\exp(-t^2/2)/\sqrt{2\pi}$. Therefore the numerator of $\barb_0$ is
\begin{align*}
  -\alpha(1-\alpha)\phi_z'(z_\alpha) - (2\alpha-1)\phi_z^2(z_\alpha) = \paren{\alpha(1-\alpha)z_\alpha - (2\alpha-1)\phi_z(z_\alpha) }\phi_z(z_\alpha).
\end{align*}
Consider the change of variable $t\defeq z_\alpha$ so that $\alpha=\Phi_z(t)$. To show the above quantity is negative, it suffices to show that
\begin{align*}
  & \Phi(t) (1-\Phi(t)) t - (2\Phi(t) - 1) \phi(t) < 0 \\
  \Longleftrightarrow & \underbrace{\frac{t(1-\Phi(t))}{\phi(t)} - 2 + \frac{1}{\Phi(t)}}_{\defeq F(t)} < 0
\end{align*}
for all $t>0$, where $\Phi(t)=\Phi_z(t)$ is shorthand for the standard Gaussian CDF. To show this, we first observe that $F(0)=-2+1/\Phi(0)=0$, and further
\begin{align*}
  F'(t) = \frac{(1+t^2)(1-\Phi(t))}{\phi(t)} - t - \frac{\phi(t)}{\Phi(t)^2}.
\end{align*}
We can numerically check that $F'(t) < -0.03$ for $t\in[0,1]$, within which range we have $F(t)<-0.03t<0$. On the other hand, using the Gaussian CDF approximation bound
\begin{align*}
  1-\frac{1}{t^2} \le \frac{t(1-\Phi(t))}{\phi(t)} \le 1 - \frac{1}{t^2} + \frac{3}{t^4}~~~\textrm{for all}~t>0,
\end{align*}
we have
\begin{align*}
  & \quad F(t) \le 1 - \frac{1}{t^2} + \frac{3}{t^4} - 2 + \frac{1}{1 - (t^{-1}-t^{-3})\phi(t)} \\
  & \stackrel{(i)}{\le} -\frac{1}{t^2} + \frac{3}{t^4} + 2(t^{-1}-t^{-3})\phi(t) \le \frac{3+2t^3\phi(t)-t^2}{t^4} \stackrel{(ii)}{<} 0,
\end{align*}
where (i) happens when $(t^{-1}-t^{-3})\phi(t)<1/2$, which happens for all $t\ge 1$, and (ii) happens when $t\ge 2$. This shows that $F(t)<0$ for $t\ge 2$. For $t\in[1,2]$, one can check numerically that $F(t)<-0.1<0$. This shows $F(t)<0$ for all $t>0$, which establishes $\barb_0<0$ for $P_z=\normal(0,1)$, showing the first claim in part (b).

Next, for any $\alpha\in(0.5, 1)$, we show that there exists a noise distributions $\wt{P}_z$ for which $\barb_0>0$. Indeed, simply take any smooth density $\phi_z$ (such as standard Gaussian density), and modify $\phi_z$ locally around $z_\alpha$ into some new smooth density $\wt{\phi}_z$ such that both the new $\alpha$-quantile $\wt{z}_\alpha\approx z_\alpha$ and $\wt{\phi}_z(\wt{z}_\alpha)\approx \phi_z(z_\alpha)$ (with arbitrarily small differences), but $\wt{\phi}_z'(\wt{z}_\alpha)<0$ is negative with a high magnitude $|\wt{\phi}_z'(\wt{z}_\alpha)|$. Taking this magnitude high enough, we can always make $-\alpha(1-\alpha)\wt{\phi}_z'(\wt{z}_\alpha)-(2\alpha-1)\wt{\phi}_z(\wt{z}_\alpha)^2>0$, which gives $\barb_0>0$ for the noise distribution $\wt{P}_z$ defined by the density $\wt{\phi}_z$. This shows the second claim in part (b).


\subsection{Proof of Theorem~\ref{theorem:w}}
\label{appendix:proof-w}

For any $\hat{f}(\xb)=\hat{\wb}^\top\xb + b_\star$, the coverage can be expressed as
\begin{align*}
  & \qquad \coverage(\hat{f}) = \P\paren{y \le \hat{\wb}^\top \xb + b_\star} \stackrel{(i)}{=} \P\paren{ \mu_\star(\xb) + \sigma_\star(\xb) z \le \hat{\wb}^\top \xb + b_\star} \\
  & \stackrel{(ii)}{=} \P\paren{\sigma_\star(\xb)(z - z_\alpha) \le (\hat{\wb} - \wb_\star)^\top \xb} = \P\paren{ z \le z_\alpha + \frac{(\hat{\wb} - \wb_\star)^\top \xb}{\sigma_\star(\xb)}} \\
  & = \E\brac{ \Phi_z\paren{ z_\alpha + \frac{(\hat{\wb} - \wb_\star)^\top \xb}{\sigma_\star(\xb)} }}.
\end{align*}
Above, (i) used the data distribution assumption~\eqref{equation:relaxed-model}, and (ii) follows by subtracting both sides by $\mu_\star(\xb)+\sigma_\star(\xb)=\wb_\star^\top\xb+b_\star$ by the linear true quantile assumption~\eqref{equation:linear-true-quantile-relaxed}.

Now, by assumption $\alpha\ge 3/4$, we have $z_\alpha>z_{1/2}=0$. We claim the following holds for all $t\in\R$:
\begin{align}
  \label{equation:phi-concavity}
  \frac{1}{2}\paren{ \Phi_z(z_\alpha+t) + \Phi_z(z_\alpha-t) } \le \Phi_z(z_\alpha) - ct^2\indic{|t|\le z_\alpha},
\end{align}
where $c>0$ is a constant that only depends on $\Phi_z$ and $z_\alpha$. To see this, notice that $\Phi_z''(t)=\phi_z'(t)<0$ for $t>0$ and thus $\Phi_z$ is concave for $t\ge 0$. Further, $\Phi_z$ is $c$-strongly concave on $[z_\alpha/2, 3z_\alpha/2]$ for some $c>0$ as $\Phi_z''(t)=\phi_z'(t)$ is continuous and negative on this compact interval. This shows that
\begin{align*}
  \frac{1}{2}\paren{ \Phi_z(z_\alpha+t) + \Phi_z(z_\alpha-t) } \le \Phi_z(z_\alpha) - ct^2
\end{align*}
for $|t|\le z_\alpha/2$, and further by the concavity of $\Phi_z$ on $[0, 2z_\alpha]$ that
\begin{align*}
  \frac{1}{2}\paren{ \Phi_z(z_\alpha+t) + \Phi_z(z_\alpha-t) } \le \frac{1}{2}\paren{ \Phi_z(z_\alpha+t_0) + \Phi_z(z_\alpha-t_0) }  \le \Phi_z(z_\alpha) - ct_0^2 \le \Phi_z(z_\alpha) - ct^2/4
\end{align*}
for $|t|\in(z_\alpha/2, z_\alpha]$ (where $t_0\defeq z_\alpha/2$). This verifies claim~\eqref{equation:phi-concavity} for $|t|\le z_\alpha$. On the other hand, if $|t|\ge z_\alpha$, we have (taking $t>0$ w.l.o.g.) $\Phi_z(z_\alpha+t) \le 1$ always and $\Phi(z_\alpha-t) \le \Phi_z(0) = 1/2$. Therefore
\begin{align*}
  \frac{1}{2}\paren{ \Phi_z(z_\alpha+t) + \Phi_z(z_\alpha-t) } \le \frac{1}{2}\paren{1 + 1/2} = 3/4 \le \Phi(z_\alpha).
\end{align*}
This verifies claim~\eqref{equation:phi-concavity} for $|t|>z_\alpha$.

Now, note that $(\hat{\wb}-\wb_\star)^\top\xb / \sigma_\star(\xb)$ is symmetric about $0$ by our assumption that $\xb$ has a symmetric distribution and $\sigma_\star(\xb)=\sigma_\star(-\xb)$. Therefore, we can rewrite and upper bound the coverage using~\eqref{equation:phi-concavity}:
\begin{align*}
  & \qquad \coverage(\hat{f}) = \E\brac{ \frac{1}{2}\paren{ \Phi_z\paren{ z_\alpha + \frac{(\hat{\wb} - \wb_\star)^\top \xb}{\sigma_\star(\xb)} } + \Phi_z\paren{ z_\alpha - \frac{(\hat{\wb} - \wb_\star)^\top \xb}{\sigma_\star(\xb)} } } } \\
  & \stackrel{(i)}{\le} \E\brac{ \Phi_z(z_\alpha) - c\paren{ \frac{(\hat{\wb} - \wb_\star)^\top \xb}{\sigma_\star(\xb)} }^2 \indic{ \abs{\frac{(\hat{\wb} - \wb_\star)^\top \xb}{\sigma_\star(\xb)}} \le z_\alpha }  } \\
  & = \alpha - c\E\brac{ \paren{ \frac{(\hat{\wb} - \wb_\star)^\top \xb}{\sigma_\star(\xb)} }^2 \indic{ \abs{\frac{(\hat{\wb} - \wb_\star)^\top \xb}{\sigma_\star(\xb)}} \le z_\alpha } } \\
  & \stackrel{(ii)}{\le} \alpha - \frac{c}{\overline{\sigma}^2}\E\brac{ \paren{(\hat{\wb} - \wb_\star)^\top \xb}^2 \indic{ \abs{(\hat{\wb} - \wb_\star)^\top\xb} \le z_\alpha\underline{\sigma} } } \\
  & = \alpha - \frac{c}{\overline{\sigma}^2} \paren{ (\hat{\wb} - \wb_\star)^\top \E[\xb\xb^\top](\hat{\wb} - \wb_\star) - \E\brac{ \paren{(\hat{\wb} - \wb_\star)^\top \xb}^2 \indic{ \abs{(\hat{\wb} - \wb_\star)^\top\xb} > z_\alpha\underline{\sigma} } } } \\
  & \stackrel{(iii)}{\le} \alpha - \frac{c}{\overline{\sigma}^2}\Bigg( \underline{\gamma}\ltwo{\hat{\wb} - \wb_\star}^2 - \underbrace{\E\brac{ \paren{(\hat{\wb} - \wb_\star)^\top \xb}^2 \indic{ \abs{(\hat{\wb} - \wb_\star)^\top\xb} > z_\alpha\underline{\sigma} } }}_{(\star)} \Bigg).
\end{align*}
Above, (i) used~\eqref{equation:phi-concavity}; (ii) used the bound $\underline{\sigma} \le \sigma_\star(\xb)\le\overline{\sigma}$; (iii) used the covariance lower bound $\E[\xb\xb^\top]\succeq \underline{\gamma}\Ib_d$.
Further, letting $r\defeq \ltwo{\hat{\wb} - \wb_\star}$, the random variable $(\hat{\wb}-\wb_\star)^\top\xb$ (with randomness only in $\xb$) is $Kr^2$-sub-Gaussian, since $\xb$ is $K$-sub-Gaussian by our assumption. Therefore the term $(\star)$ can be further upper bounded as
\begin{align*}
  & \quad (\star) \le \paren{ \E\brac{\paren{(\hat{\wb} - \wb_\star)^\top\xb}^4} \cdot \P\paren{ \abs{(\hat{\wb} - \wb_\star)^\top\xb} > z_\alpha\underline{\sigma} } }^{1/2} \\
  & \le \paren{ CK^2r^4 \cdot  2\exp(-z_\alpha^2\underline{\sigma}^2/Kr^2) }^{1/2} \\
  & \le CKr^2 \cdot \exp(-z_\alpha^2\underline{\sigma}^2/2Kr^2) \stackrel{(i)}{\le} \frac{1}{2}\underline{\gamma}r^2,
\end{align*}
where (i) happens if $r\le r_0$ for some $r_0=r_0(\underline{\gamma}, \underline{\sigma}, K, z_\alpha)$. Plugging this back into the preceding bound yields
\begin{align*}
  \coverage(\hat{f}) \le \alpha - \frac{c\underline{\gamma}}{2\overline{\sigma}^2} \cdot r^2 = \alpha - \frac{c\underline{\gamma}}{2\overline{\sigma}^2} \cdot \ltwo{\hat{\wb} - \wb_\star}^2
\end{align*}
for any $\hat{\wb}$ such that $\ltwo{\hat{\wb} - \wb_\star}\le r_0$. This proves the desired result.
\qed
\section{Additional experimental details and ablations}
\label{appendix:experiment}

\subsection{Simulations}
\label{appendix:simulations}

We provide additional details about our simulations in Section~\ref{section:simulations}. In each problem instance, we generate $(\xb_i, y_i)$ from the Gaussian linear model~\eqref{equation:linear-model}: $\xb_i\sim\normal(\bzero, \Ib_d)$, $y_i=\wb_\star^\top\xb_i + z_i$ where $z_i\simiid P_z=\normal(0, 0.25)$. We choose $\ltwo{\wb_\star}=1$. We run the (sub)-gradient descent algorithm on the full empirical risk $\what{R}_n$ (note the risk also depends on the quantile level $\alpha$) for 50k steps, with initial learning rate $0.01$ and a 10x learning rate decay at the 25k-th step. For all our settings (choice of $n,d,\alpha$), this optimization schedule ensures that the training loss changes by less than $10^{-5}$ between consecutive iterations at the final iteration.

Each problem instance yields a solution $(\hat{\wb}, \hat{b})$ which specifies a linear quantile function $\hat{f}(\xb)=\hat{\wb}^\top\xb + \hat{b}$. We evaluate its coverage \emph{exactly} using the closed-form formula (cf. Section~\ref{section:proof-sketch})
\begin{align*}
  \coverage(\hat{f}) = \E_{G\sim\normal(0,1)}\brac{ \Phi_z(\ltwo{\hat{\wb} - \wb_\star}G + \hat{b}) }.
\end{align*}
We compute this by using numerical integration (over the gaussian random variable $G$). The entire set of experiments (for producing Figure~\ref{figure:sim}) is done on a single CPU machine in roughly 6 hours.

\subsection{Real data experiments}
\label{appendix:real}
We provide additional details about our real data experiments in Section~\ref{section:real} and~\ref{section:pseudo}. All models (linear, MLP, MLP-freeze) in Section~\ref{section:real} are trained by minimizing the quantile risk~\eqref{eqn:quantile_regression_f_theta}. We use SGD with momentum 0.9, initial learning rate $10^{-3}$ for 1500 epochs, and apply a 10x learning rate decay at epoch $\set{500, 1000}$. For each dataset and each random seed, we perform a train-validation split where we use $80\%$ of the data as the train set and $20\%$ of the data as the test set. The coverage of the trained model is evaluated on the test split. For all datasets and all models, we repeat the same experiment across 8 random seeds, and report the mean and standard deviation of the coverage in Table~\ref{table:real}.

For our pseudo-label experiments in Section~\ref{section:pseudo}, we train the linear model $\what{\wb}$ first by minimizing the square loss and using the same optimization schedule above. After $\what{\wb}$ is learned, we generate the pseudo-labels $y_i^{\rm pseudo}$ using $\what{\wb}$ and the estimated standard deviation $\what{\sigma}$ as described in Section~\ref{section:pseudo}. This is done for both the train and test sets for which we obtain a ``pseudo'' train set and a ``pseudo'' test set. We then perform linear quantile regression on these pseudo datasets in a same fashion as in Section~\ref{section:pseudo}.

The experiments for Sections~\ref{section:real} and~\ref{section:pseudo} are done on a 8-GPU machine (with Tesla V-100 GPUs) in roughly a day.

\paragraph{Ablations on $\alpha$}
Table~\ref{table:real-alpha-080} and~\ref{table:real-alpha-095} report coverage results on the real data with $\alpha\in\set{0.8, 0.95}$ respectively, in the same settings as in Section~\ref{section:real}. These tables also show that under-coverage happens consistently across different datasets and different models, with patterns similar as in Table~\ref{table:real} (which uses $\alpha=0.9$).

\begin{table}[th]
  \small
  \caption{Coverage ($\%$) of quantile regression on real data at nominal level $\alpha=0.8$. Each entry reports the test-set coverage with mean and std over 8 random seeds. $(d,n)$ denotes the \{feature dim, \# training examples\}.}
  \label{table:real-alpha-080}
  \centerline{
    \begin{tabular}{l|llll|rr}
      \toprule
      Dataset & Linear & MLP-3-64 & MLP-3-512 & MLP-freeze-3-512 & $d$ & $n$ \\
      \midrule
      Community & 78.25$\pm$1.75 & 66.07$\pm$1.48 & 56.17$\pm$2.81 & 77.45$\pm$1.76 & 100 & 1599 \\
      Bike & 79.95$\pm$0.66 & 78.07$\pm$1.00 & 78.66$\pm$0.86 & 79.46$\pm$0.83 & 18 & 8708 \\
      Star &79.97$\pm$2.37 & 72.95$\pm$1.83 & 59.26$\pm$1.41 & 78.42$\pm$2.04 & 39 & 1728 \\
      MEPS\_19 & 80.11$\pm$1.12 & 76.47$\pm$0.93 & 70.04$\pm$0.75 &79.02$\pm$1.28 & 139 & 12628 \\
      MEPS\_20 & 79.84$\pm$0.75 & 77.11$\pm$0.73 & 71.88$\pm$0.87 & 79.29$\pm$0.53 & 139 & 14032 \\
      MEPS\_21 & 79.57$\pm$0.72 & 74.58$\pm$0.70 & 65.55$\pm$0.69 &79.29$\pm$0.73 & 139 & 12524 \\
      \midrule
      Nominal ($\alpha$) & 80.00 & 80.00 & 80.00 & 80.00 & - & - \\
      \bottomrule
    \end{tabular}
  }
  \vspace{-1em}
\end{table}

\begin{table}[th]
  \small
  \caption{Coverage ($\%$) of quantile regression on real data at nominal level $\alpha=0.95$. Each entry reports the test-set coverage with mean and std over 8 random seeds. $(d,n)$ denotes the \{feature dim, \# training examples\}.}
  \label{table:real-alpha-095}
  \centerline{
    \begin{tabular}{l|llll|rr}
      \toprule
      Dataset & Linear & MLP-3-64 & MLP-3-512 & MLP-freeze-3-512 & $d$ & $n$ \\
      \midrule
      Community & 93.82$\pm$0.98 & 86.23$\pm$1.43 & 74.38$\pm$1.86 & 93.58$\pm$1.33 & 100 & 1599 \\
      Bike & 94.56$\pm$0.45 & 93.77$\pm$0.63 & 93.16$\pm$0.80 & 94.19$\pm$0.65 & 18 & 8708 \\
      Star & 94.08$\pm$1.73 & 90.96$\pm$1.91 & 81.58$\pm$1.82 & 93.39$\pm$1.68 & 39 & 1728 \\
      MEPS\_19 & 94.69$\pm$0.41 & 90.71$\pm$0.72 & 85.32$\pm$1.23 & 94.19$\pm$0.42 & 139 & 12628 \\
      MEPS\_20 & 94.84$\pm$0.30 & 92.06$\pm$0.43 & 87.32$\pm$0.77 & 94.58$\pm$0.32 & 139 & 14032 \\
      MEPS\_21 & 94.97$\pm$0.34 & 89.55$\pm$0.39 & 80.70$\pm$0.79 & 94.42$\pm$0.29 & 139 & 12524 \\
      \midrule
      Nominal ($\alpha$) & 95.00 & 95.00 & 95.00 & 95.00 & - & - \\
      \bottomrule
    \end{tabular}
  }
  \vspace{-1em}
\end{table}

\subsection{License of datasets}
The {\tt Community}~\citep{community} and {\tt Bike}~\citep{bike} datasets are retrieved from the publicly available UCI machine learning repository~\citep{dua2017} and subject to the license of the repository. The {\tt STAR} dataset~\citep{star} is also a public access dataset. The three mediecal expenditure survey datasets {\tt MEPS\_19}, {\tt MEPS\_20}, {\tt MEPS\_21} contain a data use agreement section in their documentation (cf. the ``documentation'' link in ~\citep{meps19,meps20,meps21}) which our use case (train quantile functions and report coverages) comply with. All the datasets are anonymized and to the best of our knowledge do not contain personally identifiable information or offensive contents.

\end{document}